\tikzstyle{server}=[fill={rgb,255: red,64; green,64; blue,64}, draw=black, shape=circle]
\tikzstyle{client}=[fill={rgb,255: red,55; green,126; blue,184}, draw=black, shape=circle]
\tikzstyle{optima}=[fill={rgb,255: red,55; green,126; blue,184}, draw=black, shape=rectangle]
\tikzstyle{globalopt}=[fill={rgb,255: red,255; green,128; blue,0}, draw=black, shape=rectangle]
\tikzstyle{globalroundopt}=[fill={rgb,255: red,255; green,128; blue,0}, draw=black, shape=circle]
\tikzstyle{pseudoserver}=[fill={rgb,255: red,191; green,191; blue,191}, draw={rgb,255: red,128; green,128; blue,128}, shape=circle]
\tikzstyle{sgdnode}=[fill=white, draw=black, shape=circle]
\tikzstyle{client2}=[fill={rgb,255: red,77; green,175; blue,74}, draw=black, shape=circle]
\tikzstyle{clientopt}=[fill={rgb,255: red,77; green,175; blue,74}, draw=black, shape=rectangle]
\tikzstyle{averageopt}=[fill={rgb,255: red,191; green,191; blue,191}, draw={rgb,255: red,128; green,128; blue,128}, shape=rectangle]
\tikzstyle{averageroundopt}=[fill={rgb,255: red,191; green,191; blue,191}, draw={rgb,255: red,128; green,128; blue,128}, shape=circle]
\tikzstyle{gradient}=[dashed]
\tikzstyle{server update}=[->, draw={rgb,255: red,77; green,175; blue,74}, very thick]
\tikzstyle{client update}=[draw={rgb,255: red,55; green,126; blue,184}, ->, very thick]
\tikzstyle{update}=[draw=gray, dashed, -, very thick, very thick]
\tikzstyle{sgd}=[draw={rgb,255: red,65; green,61; blue,58}, ->, very thick]
\tikzstyle{correction}=[draw={rgb,255: red,228; green,26; blue,28}, ->, very thick]
\tikzstyle{midpoint}=[-, draw={rgb,255: red,191; green,191; blue,191}, very thick]
\setlist{leftmargin=5.5mm}
\DeclarePairedDelimiterX{\inp}[2]{\langle}{\rangle}{#1, #2}
\DeclarePairedDelimiterX{\abs}[1]{\lvert}{\rvert}{#1}
\DeclarePairedDelimiterX{\norm}[1]{\lVert}{\rVert}{#1}
\DeclarePairedDelimiterX{\cbr}[1]{\{}{\}}{#1} % curly bracket
\DeclarePairedDelimiterX{\rbr}[1]{(}{)}{#1} % round bracket
\DeclarePairedDelimiterX{\sbr}[1]{[}{]}{#1} % square bracket
  \DeclareMathOperator{\expect}{\mathbb{E}}
  \DeclareMathOperator{\E}{\expect}
  \DeclareMathOperator{\ind}{\mathbb{1}}
  \DeclareMathOperator{\sgn}{sign}
  \def\sign{\@ifnextchar*{\@sgnargscaled}{\@ifnextchar[{\sgnargscaleas}{\@ifnextchar{\bgroup}{\@sgnarg}{\sgn} }}}
  \def\@sgnarg#1{\sgn\rbr{#1}}
  \def\@sgnargscaled#1{\sgn\rbr*{#1}}
  \def\@sgnargscaleas[#1]#2{\sgn\rbr[#1]{#2}}
  \DeclareMathOperator*{\argmin}{arg\,min}
  \providecommand{\0}{\bm{0}}
  \providecommand{\cc}{\bm{c}}
  \providecommand{\dd}{\bm{d}}
  \providecommand{\ee}{\bm{e}}
  \renewcommand{\gg}{\bm{g}}
  \providecommand{\mm}{\bm{m}}
  \renewcommand{\vv}{\bm{v}}
  \providecommand{\xx}{\bm{x}}
  \providecommand{\yy}{\bm{y}}
  \newcommand{\bmu}{\boldsymbol{\mu}}
  \newcommand{\muv}{\bmu}
  \providecommand{\cA}{\mathcal{A}}
  \providecommand{\cB}{\mathcal{B}}
  \providecommand{\cF}{\mathcal{F}}
  \providecommand{\cG}{\mathcal{G}}
  \providecommand{\cM}{\mathcal{M}}
  \providecommand{\cO}{\mathcal{O}}
  \providecommand{\cP}{\mathcal{P}}
  \providecommand{\cS}{\mathcal{S}}
  \providecommand{\cT}{\mathcal{T}}
  \providecommand{\cX}{\mathcal{X}}
\newtheorem{theorem}{Theorem}
\newtheorem{corollary}[theorem]{Corollary}
\newtheorem{lemma}{Lemma}
\newtheorem{remark}[lemma]{Remark}
\newtheorem{definition}{Definition}
\newcommand{\ignore}[1]{}
\newcommand{\e}{\epsilon}
\definecolor{color1}{RGB}{228,26,28}
\definecolor{color2}{RGB}{55,126,184}
\definecolor{color3}{RGB}{77,175,74}
\definecolor{color4}{RGB}{152,78,163}
\definecolor{color5}{RGB}{255,127,0}
\newcommand{\myitem}[1]{%
\item[\textbf{(#1)}]\protected@edef\@currentlabel{#1}%
}
\colorlet{worker}{red!40}
\newcommand{\speedup}[1]{{\color{gray}(\ifdim #1 pt > 0.3pt #1\else $< #1$\fi{}$\times$)}}
\newcommand{\Alg}{\text{\sc Alg}}
\renewcommand{\cA}{{\text{\sc Agg}}}
\newcommand{\arxiv}[1]{}
\begin{document}
\twocolumn[
  \icmltitle{Learning from History for Byzantine Robust Optimization}

  % It is OKAY to include author information, even for blind
  % submissions: the style file will automatically remove it for you
  % unless you've provided the [accepted] option to the icml2021
  % package.

  % List of affiliations: The first argument should be a (short)
  % identifier you will use later to specify author affiliations
  % Academic affiliations should list Department, University, City, Region, Country
  % Industry affiliations should list Company, City, Region, Country

  % You can specify symbols, otherwise they are numbered in order.
  % Ideally, you should not use this facility. Affiliations will be numbered
  % in order of appearance and this is the preferred way.
  \icmlsetsymbol{equal}{*}

  \begin{icmlauthorlist}
    \icmlauthor{Sai Praneeth Karimireddy}{epfl}
    \icmlauthor{Lie He}{epfl}
    \icmlauthor{Martin Jaggi}{epfl}
  \end{icmlauthorlist}

  \icmlaffiliation{epfl}{EPFL, Switzerland}
  \icmlcorrespondingauthor{Sai Praneeth Karimireddy}{sai.karimireddy@epfl.ch}

  % You may provide any keywords that you
  % find helpful for describing your paper; these are used to populate
  % the "keywords" metadata in the PDF but will not be shown in the document
  \icmlkeywords{Byzantine robustness, Federated Learning, Distributed Learning, Stocastic Optimization}

  \vskip 0.3in
]

% this must go after the closing bracket ] following \twocolumn[ ...

% This command actually creates the footnote in the first column
% listing the affiliations and the copyright notice.
% The command takes one argument, which is text to display at the start of the footnote.
% The \icmlEqualContribution command is standard text for equal contribution.
% Remove it (just {}) if you do not need this facility.

\printAffiliationsAndNotice{}  % leave blank if no need to mention equal contribution
% \printAffiliationsAndNotice{\icmlEqualContribution} % otherwise use the standard text.

%%%%%%%%%%%%%%%%%%%%%% ICML version %%%%%%%%%%%%%%%%%%%%%%%%%%%%%%%%%%%%

\begin{abstract}%
  Byzantine robustness has received significant attention recently given its importance for distributed and federated learning. In spite of this, we identify severe flaws in existing algorithms even when the data across the participants is identically distributed. First, we show realistic examples where current state of the art robust aggregation rules fail to converge even in the absence of any Byzantine attackers.
  % This is result of their excessive sensitivity to the distribution of the noise in the stochastic gradients.
  Secondly, we prove that even if the aggregation rules may succeed in limiting the influence of the attackers in a single round, the attackers can couple their attacks across time eventually leading to divergence. To address these issues, we present two surprisingly simple strategies: a new robust \emph{iterative clipping} procedure, and incorporating \emph{worker momentum} to overcome time-coupled attacks.
  This is the first provably robust method for the standard stochastic optimization setting. Our code is open sourced at \href{https://github.com/epfml/byzantine-robust-optimizer}{this link}\footnote[2]{\url{https://github.com/epfml/byzantine-robust-optimizer}}.
\end{abstract}

% !TEX root = mom-robust.tex
\section{Introduction}
\begin{quotation}
  \emph{``Those who cannot remember the past are condemned to repeat it.''} \quad --~George Santayana.
\end{quotation}
Growing sizes of datasets as well as concerns over data ownership, security, and privacy have lead to emergence of new machine learning paradigms such as distributed and federated learning~\citep{kairouz2019federated}. In both of these settings, a central coordinator orchestrates many worker nodes in order to train a model over data which remains decentralized across the workers. While this decentralization improves scalability security and privacy, it also opens up the training process to manipulation by the workers~\citep{lamport2019byzantine}. These workers may be actively malicious trying to derail the process, or might simply be malfunctioning and hence sending arbitrary messages. Ensuring that our training procedure is robust to a small fraction of such potentially malicious agents is termed Byzantine robust learning and is the focus of the current work.

Given the importance of this problem, it has received significant attention from the community with early works including~\citep{feng2014distributed,blanchard2017machine,Chen_2017,yin2018byzantinerobust}. Most of these approaches replace the averaging step of distributed or federated SGD with a robust aggregation rule such as the median. However, a closer inspection reveals that these procedures are quite brittle: we show that there exist realistic scenarios where they fail to converge, even if there are \emph{no Byzantine attackers} and the data distribution is identical across the workers (i.i.d.). This turns out to be because on their excessive sensitivity to the distribution of the noise in the gradients. The impractical assumptions made by these methods are often violated in practice, and lead to the failure of these aggregation rules.

Further, there have been recent state of the art attacks \citep{baruch2019little,xie2019fall} which empirically demonstrate a second source of failure. They show that even when current aggregation rules may succeed in limiting the influence of the attackers in any single round, they may still diverge when run for multiple rounds. We prove that this is inevitable for a wide class of methods---any aggregation rule which ignores history can be made to eventually diverge. This is accomplished by using the inherent noise in the gradients to mask small perturbations which are undetectable in a single round, but accumulate over time.

Finally, we show how to circumvent both the issues outlines above. We first describe a simple new aggregator based on iterative \emph{centered clipping} which is much more robust to the distribution of the gradient noise. This aggregator is especially interesting since, unlike most preceding methods, it is very scalable requiring only $\cO(n)$ computation and communication per round. Further, it is also compatible with other strategies such as asynchronous updates~\citep{chen2016revisiting} and secure aggregation~\citep{bonawitz2017practical}, both of which are crucial for real world applications. Secondly, we show that the time coupled attacks can easily be overcome by using \emph{worker momentum}. Momentum averages the updates of each worker over time, reducing the variance of the good workers and exposing the time-coupled perturbations. We prove that our methods obtain optimal rates, and our theory also sheds light on the role of momentum in decreasing variance and building resilience to Byzantine workers.

\paragraph{Contributions.} Our main results are summarized below.
\begin{itemize}[nosep]
  \item We show that most state of the art robust aggregators require strong assumptions and can fail in real settings even in the complete absence of Byzantine workers.
  \item We prove a strong lower bound showing that any optimization procedure which does not use history will diverge in the presence of time coupled attacks.
  \item We propose a simple and efficient aggregation rule based on iterative clipping and prove its performance under standard assumptions.
        % \item We give a novel analysis of momentum with SGD which shows that using momentum can significantly reduce the variance of the updates (though increasing bias).
  \item We show that using momentum successfully defends against time-coupled attacks and provably converges when combined with any Byzantine robust aggregator.
  \item We incorporate the recent momentum based variance reduction (MVR) with Byzantine aggregators to obtain optimal rates for robust non-convex optimization.
        % \item Optional: we show how to combine momentum and our clipping aggregator to give a simple asynchronous?, coordinate-wise adaptive? scheme.
  \item We perform extensive numerical experiments validating our techniques and results.
\end{itemize}

\paragraph{Setup.} Let us formalize the robust non-convex stochastic optimization problem in the presence of a $\delta$ fraction of Byzantine workers.
\begin{definition}[$\delta$-robust non-convex optimization]\label{asm:robust-opt-problem}
  Given some loss function $f(\xx)$, $\e > 0$, and access to $n$ workers we want to find a stationary point $\xx$ such that $\E\norm{\nabla f(\xx)}^2 \leq \e$. The optimization proceeds in rounds where in every round, each worker $i \in [n]$ can compute a stochastic gradient $g_i(\yy)$ at any parameter $\yy$ in parallel. Then, each worker $i \in [n]$ sends some message $\cM_{i,t}$ to the server. The server utilizes these messages to update the parameters and proceeds to the next round. During this process, we will assume that
  \begin{itemize}[nosep]
    \item The function $f$ is $L$-smooth i.e. it satisfies $\norm{\nabla f(\xx) - \nabla f(\yy)} \leq L\norm{\xx -\yy}$ for any $\xx, \yy$, and is bounded from below by $f^\star$.
    \item Each worker $i$ has access to an independent and unbiased stochastic gradient with $\E[g_i(\xx) | \xx] = \nabla f(\xx)$ and variance bounded by $\sigma^2$, $\E\norm{g_i(\xx) - \nabla f(\xx)}^2 \leq \sigma^2$.
    \item Of the $n$ workers, at least $(1- \delta)n$ workers are \emph{good} (denoted by $\cG$) and will follow the protocol faithfully. The rest of the \emph{bad} or Byzantine workers (denoted by $\cB$) may act maliciously and can communicate arbitrary messages to the server.
    \item These Byzantine workers are assumed to omniscient i.e. they have access to the computations made by the rest of the good workers. However, we assume that this set of Byzantine workers $\cB$ \emph{remains fixed} throughout the optimization process.
  \end{itemize}
\end{definition}

\section{Related work}

\paragraph{Robust aggregators.} Distributed algorithms in the presence of Byzantine agents has a long history \citep{lamport2019byzantine} and is becoming increasingly important in modern distribution and federated machine learning \citep{kairouz2019federated}. Most solutions involve replacing the averaging of the updates from the different machines with more robust aggregation rules such as coordinate-wise median method~\cite{yin2018byzantinerobust}, geometric median methods~\citep{blanchard2017machine,Chen_2017,pillutla2019robust}, majority voting~\cite{bernstein2018signsgd,jin2020stochasticsign} etc. There have also been attempts to use recent breakthroughs in robust high-dimensional aggregators~\cite{diakonikolas2018sever,su2018securing,elmhamdi2019fast,data2019data,data2020byzantineresilienta}. However, these latter procedures are computationally expensive (quadratic in dimensions per round) and further it is unclear if the improved guarantees for mean estimation translate to improved performance in the distributed machine learning settings.
% This is because in machine learning, the spectral norm of the co-variance matrix is close to its trace (fast eigenvalue decay) \citep{zha}
Finally, for most of the above approaches, convergence guarantees when provided rely on using an extremely large batch size or strong unrealistic assumptions making them practically irrelevant.

Other more heuristic approaches propose to use a penalization or reweighting of the updates based on reputations ~\cite{peng2020robust,li2019rsa,fu2019attackresistant,regatti2020befriending,rodriguezbarroso2020dynamic}. These schemes however need to trust that all workers report correct statistics. In such settings where we have full control over the workers (e.g. within a datacenter) coding theory based solutions which can correct for the mistakes have also been proposed~\cite{chen2018draco,rajput2019detox,gupta2019randomized,konstantinidis2020byzshield,data2018data,data2019data}. These however are not applicable in federated learning where the data is decentralized across untrusted workers.

\paragraph{Time coupled attacks and defenses.} Recently, two state-of-the-art attacks have been proposed which show that the state of the art Byzantine aggregation rules can be easily circumvented~\citep{baruch2019little,xie2019fall}. The key insight is that while the robust aggregation rules may ensure that the influence of the Byzantine workers in any single round is limited, the attackers can couple their attacks across the rounds. This way, over many training rounds the attacker is able to move weights significantly away from the desired direction and thus achieve the goal of lowering the model quality. Defending against time-coupled attacks and showing provable guarantees is one of the main concerns of this work.
% \mj{comment on the consequence of this, and the fact that it exploits time / many iterations}
% \citet{xie2019fall} sets the direction of adversarial gradients to be the opposite of the mean of good gradients but with a small magnitude so that they also satisfy the assumptions on good gradients. Similarly, by repeating this attack the model quality drops significantly.
% To tackle these attacks,~\citet{elmhamdi2021distributed} suggest workers to communicate momentum instead of gradient to reduce the variance among good workers. Alternatively, a contemporary work propose \textit{SafeguardSGD} which server use moving-average of gradients to detect and filter outliers~\citep{allenzhu2021byzantineresilient}.

% \paragraph{Time coupled defenses.} 
It is clear that time-coupled attacks need time-coupled defenses. Closest to our work is that of \citet{alistarh2018byzantine} who use martingale concentration across the rounds to give optimal Byzantine robust algorithms for convex functions. However, this algorithm is inherently not applicable to more general non-convex functions. The recent independent work of \citet{allenzhu2021byzantineresilient} extend the method of \citet{alistarh2018byzantine} to non-convex functions as well. However, they assume that the noise in stochastic gradients is bounded almost surely instead of the more standard assumption that only the variance is bounded. Theoretically, such strong assumptions are unlikely to hold~\citep{zhang2019adam} and even Gaussian noise is excluded. Further, the lower-bounds of \citep{arjevani2019lower} no longer apply, and thus their algorithm may be sub-optimal. Practically, their algorithm removes suspected workers either permanently (a decision of high risk), or resets the list of suspects at each window boundary (which is sensitive to the choice of hyperparameters). Having said that, \cite{allenzhu2021byzantineresilient} prove convergence to a local minimum instead of to a saddle point as we do here. Finally, in another independent work \citet{elmhamdi2021distributed} empirically observe that using momentum may be beneficial, though they provide no theoretical guarantees.

\paragraph{Other concerns.} To deploy robust learning for real world applications, many other issues such as data heterogeneity become important~\citep{kairouz2019federated,karimireddy2020scaffold}. Robust learning algorithms which assume worker data are i.i.d. may fail in the federated learning setting~\citep{he2020byzantinerobust}.
Numerous variations have been proposed which can handle non-iid data with varying degrees of success~\citep{li2019rsa,ghosh2019robust,chen2019distributed,peng2020byzantinerobust,data2020byzantineresilienta,he2020byzantinerobust,elmhamdi2020collaborative,dong2020communicationefficient}.
Further, combining robustness with notions of privacy and security is also a crucial and challenging problem~\citep{he2020secure,so2020byzantineresilient,so2020turboaggregate,jin2020stochasticsign}. Such heterogeneity is especially challenging and can lead to backdoor attacks (which are orthogonal to the training attacks discussed here) \citep{bagdasaryan2019backdoor,sun2019can, wang2020attack} and remains an open challenge.

% ---------------------- Failure of existing aggregation rules--------------------------

\section{Brittleness of existing aggregation rules}\label{subsec:aggregator-failure}
In this section, we study the robustness of existing popular Byzantine aggregation rules.
% Ideally, we want these rules to work in all scenarios where standard (non-Byzantine) stochastic optimization would work.
Unfortunately, we come to a surprising conclusion---most state of the art aggregators require strong non-realistic restrictions on the noise distribution. We show this frequently does not hold in practice, and present counter-examples where these aggregators fail even in the complete \emph{absence} of Byzantine workers.
State of the art aggregators such as Krum~\citep{blanchard2017machine}, coordinate-wise median (CW) ~\citep{yin2018byzantinerobust},

RFA~\citep{pillutla2019robust}, Bulyan~\citep{mhamdi2018hidden}, etc. all generalize the scalar notion of the median to higher dimensions and are hence exhibit different ways of `middle-seeking'.
At a high level, these schemes require the noise distribution to be unimodal and highly concentrated, discarding any gradients from the tail of the distribution too aggressively as `outliers'.
We give a brief summary of these rules below. We use $[\vv]_j$ to indicate the $j$th coordinate of vector $\vv$.

\textbf{Coordinate-wise median}: \vspace{-3mm}
\[
  [\text{CM}(\xx_1, \dots, \xx_n)]_j = \text{median}([\xx_1]_j, \dots, [\xx_n]_j)\,.\vspace{-0mm}
\]
\textbf{RFA} (robust federated averaging) aka geometric median: \vspace{0mm}
\[ \text{RFA}(\xx_1, \dots, \xx_n) = \argmin_{\vv} \sum_{i=1}^n\norm{\vv - \xx_i}_2\,.
\]
\textbf{Trimmed Mean}: For each coordinate $j$, compute sorting $\Pi_j$ which sorts the coordinate values. Compute the average after excluding (`trimming') $\delta n$ largest and smallest values. \vspace{-3mm}
\[ [\text{TM}(\xx_1, \dots, \xx_n)]_j = \frac{1}{n - 2 \delta n} \sum_{i = \delta n}^{n - \delta n} [\xx_{\Pi_j(i)}]_j\,.\]
\textbf{Krum}: Krum tries to select a point $\xx_i$ which is closest to the mean after excluding $\delta n + 2$ furthest away points. Suppose that $\cS \subset [n]$ of size at least $(n - \delta n - 2)$. Then, \vspace{-3mm}
\[\text{Krum}(\xx_1, \dots, \xx_n) = \argmin_{\xx_i} \min_{\cS} \sum_{j \in \cS}\norm{\xx_i - \xx_j}^2_2\,.\]
% However, as has been observed in \citep{karimireddy2019error,chen2019distributed}, when the distribution of the noise is bimodal, or heavy-tailed (more generally has large skew), the median of the stochastic gradients can lead to divergence.

\begin{figure}[t]
  \vspace{-3mm}
  \centering
  \includegraphics[
    width=0.75\linewidth
  ]{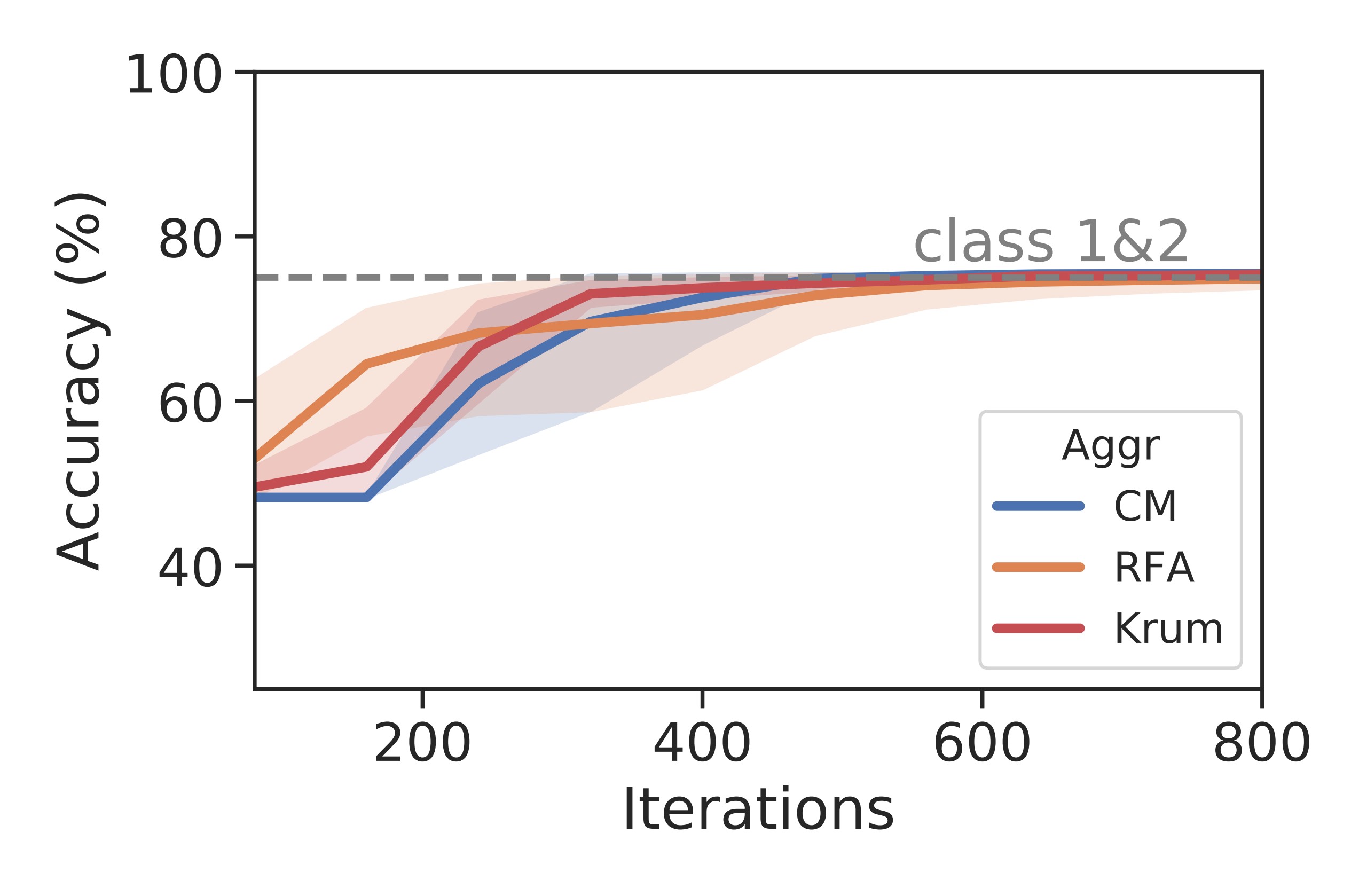}\vspace*{-5mm}
  \caption{Failure of existing methods on imbalanced MNIST dataset. Only the head classes (class 1 and 2 here) are learnt, and the rest 8 classes are ignored.
    See Sec. \ref{sub-sec:exp:long-tail}.}
  \label{fig:exp1-demo}\vspace*{-5mm}
\end{figure}

\paragraph{Counterexample 1.} Let us pick $n$ random variables $\pm 1$ with uniform probability for some odd $n$. These variables have mean 0. Since $n$ is odd, Krum, CW, Bulyan all will necessarily return either of $\pm 1$. This remains true even if we have infinite samples (large $n$), and if there are no corruptions. This simple examples illustrates the fragility of such `middle-seekers' to bimodal noise.

\paragraph{Counterexample 2.} Fig.~\ref{fig:exp1-demo} illustrates a more realistic example where imbalanced MNIST dataset causes a similar problem. Here, 0.5 fraction of data corresponds to class 1, 0.25 to class 2, and so on. The gradients over data of the same class are much closer than those of a different class. Hence, when we pick $n$ i.i.d. gradients, most them will belong to class 1 or 2 with very few belonging to the rest. Thus, coordinate-wise median, geometric median and Krum always select the gradient corresponding to classes 1 or 2, ensuring that we only optimize over these classes ignoring the rest.

\paragraph{Counterexample 3.}
Middle-seekers can also fail on continuous uni-modal distributions. Consider,\vspace*{-5mm}
\begin{wrapfigure}{r}{0.4\columnwidth}
  \begin{center}
    \includegraphics[width=0.4\columnwidth]{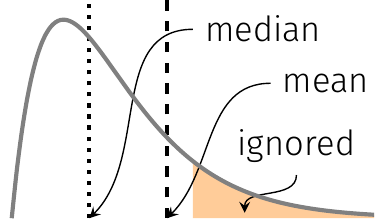}
  \end{center}
  \caption{For fat-tailed distributions, median based aggregators ignore the tail. This bias remains even if we have infinite samples.}
  \label{fig:counter}\vspace*{-5mm}
\end{wrapfigure}
\[
  p(x) = \begin{cases}
    3x^{-4} & \text{ for } x \geq 1\, \\
    0       & \text{ o.w.}
  \end{cases}
\]
This power-law distribution has mean $1.5$ and variance $0.75$. However, since the distribution is skewed, its median is $2^{1/3} \approx 1.26$ and is smaller than the mean. This difference persists even with \emph{infinite} samples showing that with imbalanced (i.e. skewed) distributions, coordinate-wise median, geometric median and Krum do not obtain the true optimum. Empirical evidence suggests that such heavy-tailed distributions abound in deep learning, making this setting very relevant to practice \citep{zhang2019adam}.
% In \ref{sub-sec:exp:long-tail}, we apply `middle-seekers' to a long-tailed MNIST dataset and show that they fail to incorporate gradients from tail distribution.

\begin{theorem}[Failure of `middle-seekers']\label{thm:failure-middle}
  There exist simple convex stochastic optimization settings with bounded variance where traditional distributed SGD converges but coordinate-wise median, RFA, and Krum do not converge to the optimum almost surely for any number of workers and even if none of them are Byzantine.
  % \mj{for any \# of workers and even if none of them are Byzantine}
\end{theorem}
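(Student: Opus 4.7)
The plan is to turn Counterexamples~1 and~3 into a formal construction: a one-dimensional convex quadratic objective whose minimizer equals the mean of the noise distribution, while each of CM, RFA, and Krum is asymptotically biased away from that mean. Specifically, set $f(x) = \tfrac12(x-\mu)^2$, choose a noise distribution $p$ on $\R$ with mean $\mu$, finite variance $\sigma^2$, and some other ``middle-seeking'' statistic $m \neq \mu$, and let each good worker report $g_i(x) = x - \xi_i$ with $\xi_i \sim p$ i.i.d.\ at each round. Then $f$ is $1$-smooth and convex, $g_i$ is unbiased with variance $\sigma^2$, and there are no Byzantine workers, so Definition~A is satisfied.

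The baseline direction is routine: the averaged estimator $\bar g(x) = x - \bar\xi$ has variance $\sigma^2/n$, so distributed SGD with standard diminishing step sizes converges almost surely to $x^\star = \mu$ by Robbins--Monro.

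For CM and RFA, I would use the Pareto distribution of Counterexample~3 (density $3x^{-4}$ on $[1,\infty)$), which has mean $\mu = 3/2$, variance $3/4$, and median $m = 2^{1/3}$. In one dimension, both coordinate-wise median and the geometric-median RFA collapse to the sample median of the reported gradients, i.e.\ $x - \tilde m_n$ where $\tilde m_n$ is the sample median of $\xi_1,\dots,\xi_n$. By consistency of the sample median, $\tilde m_n \to m$ a.s., so the aggregated update at $x = \mu$ has limit $\mu - m = 3/2 - 2^{1/3} \neq 0$. Consequently the stochastic fixed-point iteration driven by this aggregator has its only possible limit at $m \neq \mu$, so SGD with either CM or RFA cannot converge to $x^\star$ almost surely for any fixed $n$, or even in the large-$n$ limit.

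For Krum the plan exploits the structural fact that Krum always returns an actual worker report, rather than a combination of them. Under the Pareto construction above, every report satisfies $x-\xi_i \leq x - 1$, so at $x = \mu = 3/2$ the Krum output is at most $1/2$ below zero and thus uniformly bounded away from $0$; alternatively, invoking Counterexample~1 with $p$ symmetric two-point on $\{\mu-1,\mu+1\}$ and odd $n$, Krum must output a report of magnitude exactly $1$ at $x^\star$, which cannot vanish even as $n\to\infty$. Either way the Krum-driven iterate cannot converge to $x^\star$ almost surely.

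The main obstacle is the Krum step: unlike the median, Krum's output is defined by a combinatorial score over pairwise distances, so its asymptotic distribution is not off-the-shelf. I would sidestep a distributional analysis by using only the ``Krum returns a sample'' property together with a distribution whose support excludes a neighborhood of $\mu$; this makes bias inevitable regardless of which sample is chosen. The remaining steps (smoothness, convexity, variance bound, SGD convergence, consistency of the sample median) are standard and short.
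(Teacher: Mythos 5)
Your CM/RFA argument is essentially the paper's own route (the paper gives no appendix proof of this theorem; its justification is Counterexamples 1--3, and your Pareto/sample-median construction is a formalization of Counterexample 3), and that part is fine modulo standard stochastic-approximation bookkeeping and the degenerate cases $n\le 2$ where the sample median coincides with the sample mean. The genuine gap is the Krum step, and both of your proposed branches fail. In the Pareto branch, the claim that the reports at $x^\star=\mu=3/2$ are ``uniformly bounded away from $0$'' is false: the gradients there are $3/2-\xi_i$ with $\xi_i$ having positive density at $3/2$, so their support is $(-\infty,1/2]$, which contains $0$. In the two-point branch, the inference ``Krum outputs a report of magnitude exactly $1$ at $x^\star$, hence the iterates cannot converge a.s.'' is invalid for two reasons. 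First, magnitude bounded away from zero at $x^\star$ does not preclude almost sure convergence: plain Robbins--Monro SGD with Rademacher noise has exactly this property and still converges a.s.\ to $x^\star$; what matters is the bias of the aggregator (the zero of the mean field), not the size of individual updates. Second, with the symmetric two-point noise there is no bias at all: at any $x$ the reports form two clusters at $(x-\mu)\pm 1$, Krum selects a report from the majority cluster, and by symmetry the majority sign is $\pm 1$ with probability $1/2$ each, so Krum's output is an \emph{unbiased}, bounded-variance estimate of $\nabla f(x)$ and the induced iteration converges a.s.\ to $\mu$ (the same computation shows the symmetric Counterexample~1 alone does not prove convergence failure for CM either --- it only shows failure of variance reduction).

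The missing idea is that ``the aggregator always returns a sample far from the mean'' does not imply a bias; you need asymmetry so that the selected sample is \emph{systematically} on one side. A clean fix in the spirit of the paper's Counterexample~2: take noise $\xi\in\{-1,\,9\}$ with probabilities $0.9$ and $0.1$ (mean $0$, variance $9$). Then with probability $1-e^{-\Omega(n)}$ the dominant cluster at $\nabla f(x)-1$ contains more than half the reports, Krum's pairwise-distance score selects a report from that cluster, and the expected Krum output is within $e^{-\Omega(n)}\cdot O(1)$ of $\nabla f(x)-1$; the resulting persistent bias places the unique zero of the mean field away from $x^\star$, after which the same stochastic-approximation argument you use for CM/RFA yields almost sure non-convergence to the optimum (and the same distribution also handles CM and RFA, since its median is $-1$).
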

% \begin{proof}
%   This theorem directly follows from counterexamples 1 and 2.
% \end{proof}

\begin{remark}[Practical usage]
  Theorem~\ref{thm:failure-middle} notes that one must be cautious while using median or Krum as aggregation rules when we suspect that our data is multi-modal (typically occurs when using small batch sizes), or if we believe our data to be heavy-tailed (typically occurs in imbalanced datasets or language tasks).
  These aggregators may suffice for standard image recognition tasks with large batch sizes since the noise is nearly Gaussian \citep{zhang2019adam}.
\end{remark}
% While our counterexamples are applicable to a variety of commonly used rules, they do not apply to the trimmed-mean (TM) aggregator \citep{yin2018byzantinerobust}. Analyzing trimmed mean under standard assumptions is however an open question. Instead, inspired by the works of \citet{zhang2019adam,sun2019can} we propose a variant which uses \emph{clipping}. This new aggregator is very computationally efficient and is also easier to analyze as we show next.

Median based aggregators have a long and rich history in the field of robust statistics~\citep{minsker2015geometric}. However, classically the focus of robust statistics has been to design methods which can withstand a large fraction of Byzantine workers (high \emph{break down} point $\delta_{\max}$) and not result in infinities \citep{hubert2008high}. It was sufficient for the output to be bounded, but the quality of the result was not a concern. The counter examples in this section exactly stem from this issue. We will later define a finer notion of a robust statistic which accounts for both the quality of the output as well as the breakdown point $\delta_{\max}$.

% -------------------------------------------------------------------------
% Necessity of using history
% -------------------------------------------------------------------------

\section{Necessity of using history}\label{sec:history}
%\begin{quotation}
%  \emph{``Those who cannot remember the past are condemned to repeat it.''} \quad --~George Santayana.
%\end{quotation}

\begin{figure}[t]
  \vspace{-3mm}
  \centering
  \includegraphics[
    width=0.75\linewidth
  ]{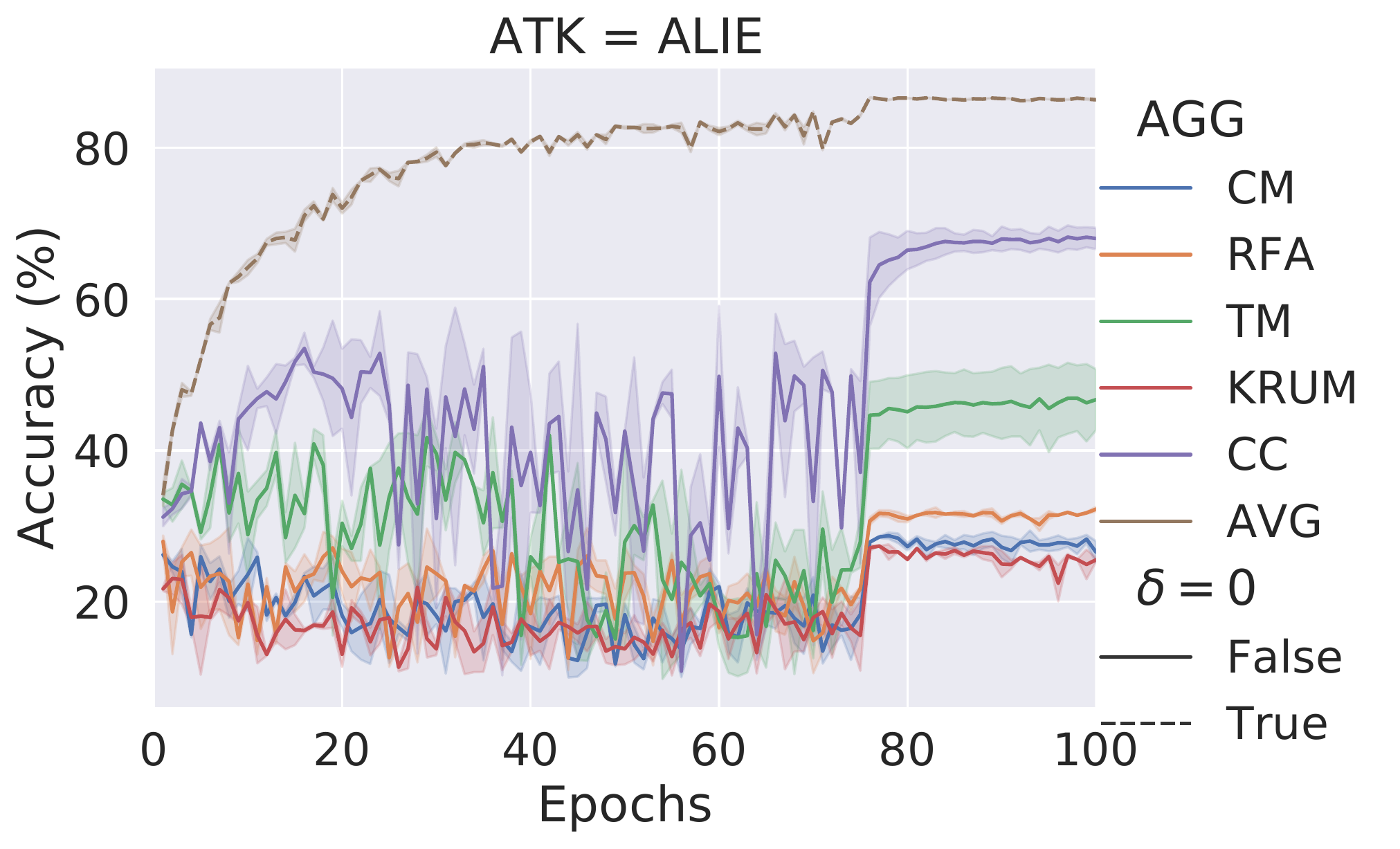}\vspace*{-5mm}
  \caption{Failure of permutation invariant algorithms on CIFAR10 dataset with \citep{baruch2019little} attack. Comparing to simple average with no attacker (dashed lines), all robust aggregators (including centered clip) see a significant drop in accuracy against time coupled attacks. See Sec. \ref{sub-sec:exp:momentum}.}
  \label{fig:history-demo}\vspace*{-5mm}
\end{figure}

Recent work \citep{baruch2019little,xie2019fall} has shown a surprising second source vulnerability for most currently popular robust aggregators. In this section we take a closer look at their attack and use our observations to make an even stronger claim---any aggregation rule which is oblivious of the past cannot converge to the optimum and retains a non-zero error even after infinite time.

The inner-product manipulation attack as defined by \cite{baruch2019little,xie2019fall} is deceptively simple.
% suppose that we are given $(n+3)/2$ i.i.d. scalar samples with mean $\mu > 0$ and variance $\sigma^2$. Then, the $(n-3)/2$ Byzantine workers all report $-\e\mu$ for some small $\e \in [0,1]$. Since the good points have a variance $\sigma^2$, we can reasonably expect them to be spread around a region of size $2\sigma$. Thus, if $\mu \leq \sigma/2$, we can expect a reasonable fraction (say $0.1n$) of the good workers to be negative, and even smaller than $-\e \mu$. This ensures that the `middle' point is always a negative input. Thus, if $\mu$ represents the negative gradient direction, Krum and median based schemes will pick a negative input move along $-\e\mu$ direction, and hence diverge.
Their attacks works by hiding small Byzantine perturbations within the variance of the good gradients. Since we only have access to noisy stochastic gradients, the aggregators fail to identify these perturbations. While this perturbation is small in any single round, these can accumulate over time.
We formalize this argument into a lower bound in Theorem~\ref{thm:robust-limits}. We show that the key reason why this attack works on algorithms such as CM, RFA, or Krum is that they are \emph{oblivious} and do not track information from previous rounds. Thus, an attacker can couple the perturbations across time eventually leading to divergence. This is also demonstrated experimentally in Fig.~\ref{fig:history-demo}.
% We formalize obliviousness using the notion of \emph{permutation invariance}.
\begin{definition}[Permutation invariant algorithm]\label{asm:perm-inv-algorithm}
  Suppose we are given an instance of $\delta$-robust optimization problem satisfying Definition~\ref{asm:robust-opt-problem}. Define the set of stochastic gradients computed by each of the $n$ workers at some round $t$ to be $[\tilde\gg_{1,t}, \dots, \tilde\gg_{n,t}]$. For a good worker $i \in \cG$, these represent the true stochastic gradients whereas for a bad worker $j \in \cB$, these represent arbitrary vectors. The output of any optimization algorithm $\Alg$ is a function of these gradients. A permutation-invariant algorithm is one which for any set of permutations over $t$ rounds $\{\pi_1,\dots,\pi_t\}$, its output remains unchanged if we permute the gradients.
  \begin{equation*}
    \Alg \rbr*{\!\begin{aligned}
         & [\tilde\gg_{1,1},..., \tilde\gg_{n,1}], \\
         & \hspace{9mm}...                         \\
         & [\tilde\gg_{1,t},..., \tilde\gg_{n,t}]
      \end{aligned}} =
    \Alg \rbr*{\!\begin{aligned}
         & [\tilde\gg_{\pi_1(1),1},..., \tilde\gg_{\pi_1(n),1}], \\
         & \hspace{13mm}...                                      \\
         & [\tilde\gg_{\pi_t(1),t},..., \tilde\gg_{\pi_t(n),t}]
      \end{aligned}}
  \end{equation*}
\end{definition}

\begin{remark}[Memoryless methods are permutation invariant]
  % Any algorithm that does not make use of the worker-index corresponding to the stochastic gradient computed. Note that 
  Any algorithm which is `memoryless' i.e. uses only the computations resulting from current round is necessarily permutation-invariant since the indices corresponding to the stochastic gradient are meaningless. It is only when these stochastic gradients are tracked over multiple rounds (i.e. we use memory) do the indices carry information.
\end{remark}
\begin{theorem}[Failure of permutation-invariant methods]\label{thm:memory-failure}
  Suppose we are given any permutation invariant algorithm $\cA$ as in Definition~\ref{asm:perm-inv-algorithm}, $\mu \geq 0$, $\delta \in [0,1]$, and $n$ large enough that $\delta n \geq 4(1 + \log t)$. Then, there exists a $\delta$-robust $\mu$ strongly-convex optimization problem satisfying Definition~\ref{asm:robust-opt-problem}, such that the output $\tilde\xx_t$ of $\Alg$ after $t$ rounds necessarily has error
  \[
    \E[f(\tilde\xx_t)] - f(\xx^\star) \geq \Omega\rbr*{ \frac{\delta \sigma^2}{\mu}}\,.
  \]
\end{theorem}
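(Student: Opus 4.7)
The plan is Le Cam's two-point method adapted to iterative optimization. I will construct two $\mu$-strongly convex instances whose gradient observations a Byzantine adversary can render (nearly) identically distributed across all $t$ rounds to any permutation-invariant algorithm.

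Take $f_\pm(x) = \tfrac{\mu}{2}(x \mp \Delta)^2$ on $\R$ with $\Delta = c\sigma\sqrt{\delta}/\mu$ for a small absolute constant $c > 0$, so the two optima are separated by $2\Delta$ and we may take $L = \mu$. At iterate $x$, the honest stochastic gradient under $f_\pm$ will take the value $\mu x$ with probability $1 - c\delta$ and the tail value $\mu x \mp \mu\Delta/(c\delta)$ with probability $c\delta$; a direct calculation confirms mean $\nabla f_\pm(x)$ and variance at most $\sigma^2$.

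The core of the argument is a per-round Byzantine coupling. Let $K_s$ denote the number of honest workers outputting a tail value in round $s$; under either hypothesis $K_s \sim \mathrm{Bin}((1-\delta)n, c\delta)$, with mean $c\delta(1-\delta)n$. Whenever $K_s \leq \delta n$, the adversary under $P_+$ outputs $K_s$ copies of the opposite tail value $V_-^{x_s} = \mu x_s + \mu\Delta/(c\delta)$ together with $\delta n - K_s$ copies of $\mu x_s$, and symmetrically under $P_-$. The resulting multiset of $n$ gradients, containing $K_s$ copies of each $V_\pm^{x_s}$ and $(n-2K_s)$ copies of $\mu x_s$, has a distribution that depends only on the law of $K_s$, and therefore coincides under $P_+$ and $P_-$.

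A Chernoff bound yields $\Pr[K_s > \delta n] \leq \exp(-c'\delta n)$ for a constant $c'>0$ depending on $c$; with the hypothesis $\delta n \geq 4(1+\log t)$ and $c$ chosen small enough, this is $\leq 1/(2t^2)$. A union bound makes the coupling succeed across all $t$ rounds under either hypothesis with probability at least $1 - 1/(2t)$, so the joint observation distributions under $P_+$ and $P_-$ have total variation at most $1/t$. Since $\tilde\xx_t$ is a function of those observations (by Definition~\ref{asm:perm-inv-algorithm}), the same TV bound transfers to its law. Le Cam's inequality then yields $\max_{\pm}\E_{P_\pm}[(\tilde\xx_t \mp \Delta)^2] \geq \Delta^2/4$ for $t \geq 2$, and $\mu$-strong convexity converts this to $\E[f(\tilde\xx_t)] - f^\star \geq \tfrac{\mu}{8}\Delta^2 = \Omega(\delta\sigma^2/\mu)$.

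The main obstacle is the coupling design: one must engineer the honest noise distribution so that the adversary's $\delta n$ samples can make the total observed multiset have \emph{exactly} the same law under both hypotheses, even though the adversary cannot erase honest workers' tail outputs. The three-point heavy-tailed construction above is tailored to this, since the tails sit symmetrically around the common value $\mu x$ and the adversary simply mirrors the honest tail count on the opposite side. The hypothesis $\delta n \gtrsim \log t$ is precisely what is needed so that a union bound over the $t$ rounds keeps the coupling failure probability below a constant, a necessary ingredient for the overall TV bound.
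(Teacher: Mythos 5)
Your proposal is correct and follows essentially the same route as the paper's proof: a two-instance indistinguishability argument in which Byzantine workers imitate a probability-$\Theta(\delta)$, magnitude-$\Theta(\sigma/\sqrt{\delta})$ heavy tail of the honest gradient noise, with a Chernoff-plus-union bound over the $t$ rounds (exactly where $\delta n \geq 4(1+\log t)$ enters) and the $\Omega(\delta\sigma^2/\mu)$ bound coming from the separation of the two minima. The only differences are cosmetic: you use a symmetric pair $f_\pm$ with the attackers mirroring the honest tail count in both instances and finish with an explicit Le Cam/total-variation step, whereas the paper pairs a no-attacker skewed-noise instance with an attacked noiseless one and argues the final indistinguishability-to-error step more informally.
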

% We make some remarks before proving the theorem.
% \begin{remark}[Failure of nearly all state of the art methods]
Nearly all currently popular aggregation rules, including coordinate-wise median, trimmed mean \citep{yin2018byzantinerobust}, Krum \citep{blanchard2017machine}, Bulyan \citep{mhamdi2018hidden}, RFA, geometric median \citep{ghosh2019robust}, etc. are permutation invariant and satisfy Definition~\ref{asm:perm-inv-algorithm}. Theorem~\ref{thm:memory-failure} proves a very startling result---all of them \emph{fail to converge} to the optimum even for strongly-convex problems.
% Compare this with vanilla SGD which converges at a rate of $\frac{\sigma^2 }{\mu n k}$.
Further, as $\mu$ decreases (the problem becomes less strongly-convex), the error becomes unbounded.
% Further, note that increasing the number of workers $n$ does not yield any improvements.
% \end{remark}

\begin{remark}[Fixed Byzantine workers]
  The failure of permutation-invariant algorithms also illustrates the importance of assuming that the indices of Byzantine workers are fixed across rounds. If a different fraction of workers are allowed to be Byzantine each round, then the lower bound in Theorem~\ref{thm:memory-failure} applies to all algorithms and convergence is impossible. While it is indeed a valid concern that Byzantine workers may pretend to be someone else (or more generally perform Sybil attacks where they pretend to be multiple workers), simple mechanisms such as pre-registering all participants (perhaps using some identification) can circumvent such attacks.
\end{remark}

% \begin{remark}[Exceptions]
There are very few methods which are not permutation invariant and are not subject to our lower bound. Examples include Byzantine SGD \citep{alistarh2018byzantine} which only works for convex problems, and some heuristic scoring rules such as \citep{regatti2020befriending}. There has also been a recent independent work \citep{allenzhu2021byzantineresilient} which utilizes history, but they have strong requirements on the noise (see Section~\ref{subsec:aggregator-failure} for why this might be an issue) and are not compatible with our problem setting. See Appendix~\ref{subsec:app-allenzhu-comp} for a more detailed comparison.
% \end{remark}

% Given the severity of the lower bound in Theorem~\ref{thm:memory-failure}, we now hunt for some stateful method which can circumvent it. Surprisingly, it turns out there is a very simple solution using \emph{worker momentum}. If instead of communicating the stochastic gradients directly, the workers can compute local momentum vectors which they then communicate to the server. Note that worker momentum is \emph{not permutation invariant} since each worker only averages over the stochastic gradients computed by itself and hence is not subject to the above lower bound. In the next section, we take a detour to provide a new analysis of SGD with momentum and show that it significantly reduces the variance of the updates, at the cost of some additional bias.

\section{\emph{Robust} robust aggregation}\label{sec:robust-oracle}
\arxiv{
  \begin{quotation}
    \emph{``A problem well put is half solved.''} \quad --Charles Kettering.
  \end{quotation}
}
Past work on Byzantine robust methods have had wildly varying assumptions making an unified comparison difficult. Perhaps more importantly, this lead to unanticipated failures as we saw in Sec.~\ref{subsec:aggregator-failure}. In this section, we attempt to provide a standardized specification for an robust aggregator which we believe captures a wide variety of real world behavior i.e. a robust aggregator which is robust to its assumptions. We then design a simple and efficient clipping based aggregator which satisfies this notion.

% ---------------------- Properties of a robust aggregator--------------------------

\subsection{Anatomy of a robust aggregator}\label{subsec:aggregator-properties}
Suppose that we are given an aggregation rule $\cA(\, \cdots \,)$ and $n$ vectors $\{\xx_1, \dots, \xx_n\}$. Among the given $n$ vectors, let $\cG \subseteq [n]$ be \emph{good} (i.e. satisfy some closeness property), and the rest are Byzantine (and hence can be arbitrary).
The ideal aggregator would return $\frac{1}{\abs{\cG}}\sum_{j\in \cG} \xx_{j}$ but this requires exactly identifying the good workers, and hence may not be possible. We will instead be satisfied if our aggregation rule approximates the ideal update up to some error.

Our notion of a robust aggregator is characterized by two quantities: $\delta_{\max}$ which denotes the breakdown point, and a constant $c$ which determines the quality of the solution. We want an aggregator which has as large $\delta_{\max}$ and a small $c$.
\begin{definition}[($\delta_{\max}, c$)-robust aggregator]\label{asm:agg-ass}
  Suppose that for some $\delta \leq \delta_{\max} \leq 0.5$ we are given $n$ random vectors $\xx_1\,, \dots, \xx_n$ such that a good subset $\cG \subseteq [n]$ of size at least $\abs{\cG} > (1-\delta)n$  are independent with distance bounded as \vspace*{-1mm}
  \begin{align*}
    \E\norm{\xx_i - \xx_j}^2 \leq \rho^2\,, \vspace*{-5mm}
  \end{align*}
  for any fixed $i, j \in \cG$. Then, define $\bar\xx := \frac{1}{\abs{\cG}} \sum_{j \in \cG} \xx_j$. The, the robust aggregation rule $\cA(\xx_1\,, \dots, \xx_n)$ outputs $\hat\xx$ such that,\vspace*{-2mm}
  \[ \E\norm{\hat\xx - \bar\xx}^2 \leq c\delta  \rho^2\,,
    \vspace*{-1mm}
  \]
  where the expectation is over the random variables $\{\xx_i\}_{i \in [n]}$ and randomness in the aggregation rule $\cA$.
\end{definition}
% Note that the random variables $\{\xx_i\}$ are \emph{not} independent of each other, but we only need that they are pair-wise independent.
% We do not assume that the random variables are identical (i.e. drawn from the same distribution). While this is not necessary for our work, we chose this formulation to keep it more general. Further, as we will see in Section~\ref{subsec:iter-clipping}, there indeed exists an oracle which satisfies our general Definition~\ref{asm:agg-ass}.
The error in Definition~\ref{asm:agg-ass} is of the order $\delta \rho^2$. Thus, if $\delta = 0$ (no Byzantine workers), we recover the ideal average of the workers exactly. Further, we recover the exact average $\bar\xx$ if $\rho=0$ (no variance) since in this case all the good points are identical and are trivial to identify if they are in the majority ($\delta \leq \delta_{\max} \leq 0.5$). We demand that when the fraction of Byzantine workers is less than the breakdown point $\delta_{\max}$, the error of the output degrades gracefully with $\delta$.

However, the error remains positive ($\delta\rho^2$) even with infinite $n$ and seems to indicate that having additional workers may not help.
% In contrast, if we knew $\cG$, by averaging the good workers we could obtain an error of $\nicefrac{\rho^2}{n}$ which linearly decreases with $n$. 
It turns out that this is unfortunately the price to pay for not knowing the good subset and is unavoidable. The following theorem is adapted from standard robust estimation lower bounds (e.g. see \citet{lai2016agnostic}).
\begin{theorem}[Limits of robustness]\label{thm:robust-limits}
  There exist a set of $n$ random vectors $\xx_1\,, \dots, \xx_n$ such that a good subset $\cG \subseteq [n]$ of size at least $\abs{\cG} \geq (1-\delta)n$ is i.i.d. satisfying
  $ \E\norm{\xx_i - \xx_j}^2 \leq \rho^2 \,,$ for any apriori fixed $i, j \in \cG\,.$
  For these vectors, \emph{any} aggregation rule $\hat\xx = \cA(\xx_1\,, \dots, \xx_n)$ necessarily has an error\vspace{-3mm}
  \[
    \E\norm{\hat\xx - \bmu}^2 \geq \delta \rho^2 \,.\vspace{-3mm}
  \]
  Further, the error can be unbounded ($\infty$) if $\delta \geq \frac{1}{2}$.
\end{theorem}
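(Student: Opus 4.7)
The plan is to use the classical two-point Le Cam argument from robust statistics. I will construct two scenarios $A$ and $B$ in which the good workers sample from two different distributions $P_1, P_2$ with distinct means $\mu_1 \neq \mu_2$, and in which the Byzantine workers are chosen so that the joint distribution of the $n$ messages received by the server is \emph{identical} in both scenarios. Since no aggregator can distinguish two scenarios with identical observation laws, its output $\hat\xx$ has the same distribution under $A$ and $B$; combining this with the elementary identity $\|\hat\xx-\mu_1\|^2 + \|\hat\xx-\mu_2\|^2 \geq \tfrac12\|\mu_1-\mu_2\|^2$ forces $\max_{k\in\{1,2\}} \E\|\hat\xx - \mu_k\|^2 \geq \tfrac14\|\mu_1-\mu_2\|^2$. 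The problem then reduces to making $\|\mu_1-\mu_2\|^2 = \Omega(\delta\rho^2)$ subject to the pairwise-variance budget $\E\|X-X'\|^2\leq\rho^2$ for $X,X'$ i.i.d.\ from $P_k$.

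\textbf{Construction and indistinguishability.} I work in one dimension; higher-dimensional versions follow by coordinate-wise tensorization. The construction is a two-atom Bernoulli pair: $P_1$ assigns probability $1-\delta$ to $0$ and probability $\delta$ to $+M$, while $P_2$ assigns probability $1-\delta$ to $0$ and probability $\delta$ to $-M$, where $M := \rho/\sqrt{2\delta(1-\delta)}$. Then $\operatorname{Var}(P_k)=\delta(1-\delta)M^2=\rho^2/2$, so two independent good draws satisfy $\E\|X-X'\|^2 = 2\operatorname{Var}(P_k)\leq\rho^2$, while $|\mu_1-\mu_2|^2 = 4\delta^2 M^2 = 2\delta\rho^2/(1-\delta) \geq 2\delta\rho^2$. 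I then fix a common good subset $\cG$ of size $(1-\delta)n$ in both scenarios, and let the Byzantine workers in scenario $A$ always output $-M$ (call this distribution $R_A$), while those in scenario $B$ sample i.i.d.\ from $R_B$ which places mass $\delta$ at $-M$ and mass $1-\delta$ at $+M$. An atom-by-atom check shows $(1-\delta)P_1+\delta R_A = (1-\delta)P_2+\delta R_B$, with both mixtures assigning mass $(1-\delta)^2$ to $0$, mass $\delta(1-\delta)$ to $+M$, and mass $\delta$ to $-M$. Hence the $n$ observations are jointly i.i.d.\ from the same three-atom distribution in both scenarios, which combined with the triangle-inequality step above yields the lower bound $\Omega(\delta\rho^2)$.

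\textbf{The case $\delta\geq 1/2$ and the main obstacle.} When Byzantine workers form a majority I take $P_1$ and $P_2$ to be point masses at $+S$ and $-S$ for an arbitrary $S>0$; because $\delta n \geq (1-\delta)n$, the adversary in each scenario has enough workers to insert exactly $(1-\delta)n$ copies of the opposite point mass, making the full multiset of $n$ observations identical in $A$ and $B$. Since $S$ is arbitrary and $|\mu_1-\mu_2|^2=4S^2$, the aggregator's error is unbounded. The only delicate step in the whole argument is ensuring that the auxiliary corruption distributions $R_A, R_B$ used for $\delta<1/2$ are valid probability measures: the required mixture identity rearranges to $R_B - R_A = \tfrac{1-\delta}{\delta}(P_1-P_2)$, which admits nonnegative solutions only when $\mathrm{TV}(P_1,P_2)\leq \delta/(1-\delta)$. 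The Bernoulli pair above has $\mathrm{TV}(P_1,P_2)=\delta$, comfortably within this budget, while simultaneously saturating the variance constraint---this tension is what yields the tight $\Omega(\delta\rho^2)$ scaling and is the heart of the proof.
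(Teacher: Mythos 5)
There is a genuine gap, and it lies exactly in the sentence ``the $n$ observations are jointly i.i.d.\ from the same three-atom distribution in both scenarios.'' Your mixture identity $(1-\delta)P_1+\delta R_A=(1-\delta)P_2+\delta R_B$ would give indistinguishability in the \emph{Huber} contamination model, where each sample is independently corrupted with probability $\delta$. But in the model of this theorem (and Definition~\ref{asm:robust-opt-problem}) there is a set $\cG$ of exactly (at least) $(1-\delta)n$ good workers, and you moreover ``fix a common good subset $\cG$ in both scenarios.'' Then the joint law in scenario $A$ is $\bigotimes_{i\in\cG}P_1\otimes\bigotimes_{j\in\cB}R_A$ and in scenario $B$ it is $\bigotimes_{i\in\cG}P_2\otimes\bigotimes_{j\in\cB}R_B$, and these are \emph{not} equal: for a fixed $i\in\cG$ the marginal puts mass $\delta$ on $+M$ under $A$ and mass $0$ under $B$. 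Even if you randomize the position of $\cG$, a counting statistic separates the scenarios: under $A$ the number of observations equal to $-M$ is exactly $\abs{\cB}=\delta n$ (the Byzantine workers output $-M$ deterministically, the good ones never do), while under $B$ it is $\mathrm{Binomial}(n,\delta)$, which equals $\delta n$ only with probability $O(1/\sqrt{n\delta(1-\delta)})$. Hence an aggregator that outputs $\mu_1$ when the count of $-M$'s is exactly $\delta n$ and $\mu_2$ otherwise beats your lower bound on both instances for large $n$, so the Le Cam step ``the output has the same distribution under $A$ and $B$'' fails. The same issue of fixed, deterministic positions also makes the $\delta\ge\tfrac12$ argument incomplete as written (with a common fixed $\cG$ the two observed tuples are distinct deterministic vectors, so some aggregator answers both correctly); there you must randomize which half is good, as the paper does.

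The repair requires giving the adversary the adaptivity the Byzantine model actually allows, i.e.\ coupling its output to the realized good sample rather than drawing i.i.d.\ from a fixed $R$. This is what the paper's proof does: scenario~1 has all $n$ workers i.i.d.\ from a distribution $\cP$ with an atom of mass $\delta/2$ at $\rho\delta^{-1/2}$; scenario~2 has good workers identically $0$ and the adversary inserting $\min(\delta n, B_n)$ copies of $\rho\delta^{-1/2}$, where $B_n$ is the (binomial) number of large values a scenario-1 sample would contain. Since $\E[B_n]=n\delta/2$, Markov gives $\Pr[B_n\le \delta n]\ge\tfrac12$, so with probability at least $\tfrac12$ the adversary can exactly mimic scenario~1, and the indistinguishability (hence the $\Omega(\delta\rho^2)$ gap between the two means) only costs a constant factor. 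Your two-sided $\pm M$ construction and the variance/mean-gap computations are fine and would also work if you added such a coupled, capacity-limited adversary together with a high-probability event bounding the number of atoms it must forge; without that, the claimed indistinguishability is false in this model.
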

This establishes Definition~\ref{asm:agg-ass} as the tightest notion of a robust aggregation oracle possible.

% ---------------------- Aggregation via Iterative clipping --------------------------
%%%%%%%%%%%%%%%%%%%%%%%%%%%%%%%% ALGORITHM %%%%%%%%%%%%%%%%%%%%%%%%%%%%%%%%%%%%
% \begin{minipage}[T]{0.46\textwidth}
% \vspace{0pt}
\begin{algorithm}[t]
  \caption{$\cA$ - Centered Clipping}\label{alg:clipping}
  \begin{algorithmic}[1]
    \STATE \textbf{input:} $(\mm_1, \dots, \mm_n)$, $\tau$, $\vv$, $L$
    \STATE  \textbf{default:} $L = 1$ and $\vv = \hat\mm$ (previous round aggreg.)
    \FOR{each iteration $l=1,\dots, L$}
    \STATE $\cc_i \leftarrow (\mm_i - \vv) \min\rbr*{1, \frac{\tau}{\norm{\mm_i - \vv}}}$
    \STATE $\vv \leftarrow \vv + \frac{1}{n} \sum_{i \in [n]}\cc_i$
    \ENDFOR
    \STATE \textbf{output:} $\vv$
  \end{algorithmic}
\end{algorithm}
% \end{minipage}
%%%%%%%%%%%%%%%%%%%%%%%%%%%%%%%% ALGORITHM %%%%%%%%%%%%%%%%%%%%%%%%%%%%%%%%%%%%
\subsection{Robust aggregation via centered clipping}\label{subsec:iter-clipping}
Given that most existing aggregation rules fail to satisfy Definition~\ref{asm:agg-ass}, one may wonder if any such rule exists. We propose the following iterative \emph{centered clipping} (CC) rule: starting from some point $\vv_0$, for $l \geq 0$ compute
\begin{equation} \tag{\sc CC} \label{eqn:iter-clip}
  \vv_{l+1} = \vv_l + \frac{1}{n} \sum_{i=1}^n (\xx_i - \vv_l) \min\rbr[\big]{1, \frac{\tau_l}{\norm{\xx_i - \vv_l}}}\vspace*{-3mm}
\end{equation}
\begin{remark}[Ease of implementation]
  The centered clipping update is extremely simple to implement requiring $\cO(n)$ computation and communication per step similar to coordinate-wise median. This is unlike more complicated mechanisms such as Krum or Bulyan which require $\cO(n^2)$ computation and are hence less scalable. Further, as we will see later empirically, a single iteration of \ref{eqn:iter-clip} is often sufficient in practice. This means that the update can be implemented in an asynchronous manner \citep{chen2016revisiting}, and is compatible with secure aggregation for federated learning \citep{bonawitz2017practical}.
\end{remark}
We can formalize the convergence of this procedure.
\begin{theorem}[Robustness of centered clipping]\label{thm:iter-clip}
  Suppose that for $\delta \leq 0.1$ we are given $n$ random vectors $\xx_1\,, \dots, \xx_n$ such that a good subset $\cG \subseteq [n]$ of size at least $\abs{\cG} \geq (1-\delta)n$  are i.i.d. with variance bounded as $\E\norm{\xx_i - \xx_j}^2 \leq \rho^2$ for any fixed $i, j \in \cG$. Then, starting from any $\vv_0$ the output of centered clipping after $l$ steps $\vv_l$ satisfies
  \[
    \E\norm{\vv_l - \bar\xx}^2 \leq (9.7 \delta)^{l}3\E\norm{\vv_0 - \bar\xx}^2 + 4000 \delta \rho^2\,.
  \]
\end{theorem}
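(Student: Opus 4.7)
The plan is to establish a one-step contraction of the form
\[
\E\|\vv_{l+1} - \bar\xx\|^2 \leq \alpha\, \E\|\vv_l - \bar\xx\|^2 + \beta\, \delta\rho^2
\]
with $\alpha \leq 9.7\delta < 1$ (using the hypothesis $\delta \leq 0.1$) and $\beta$ an absolute constant, and then unroll it. The transient piece gives $\alpha^l \E\|\vv_0 - \bar\xx\|^2$, and the geometric tail gives the steady-state error $\beta\delta\rho^2/(1-\alpha) \leq 4000\delta\rho^2$; the factor of $3$ in front of $\E\|\vv_0 - \bar\xx\|^2$ is produced by one application of Young's inequality when splitting terms.

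Setting up the recursion: let $\phi_i := \xx_i - \vv_l$, $\psi_i := \phi_i\min(1,\tau/\|\phi_i\|)$, and $b_i := \phi_i - \psi_i$ the clipping bias on worker $i$. Since $\sum_{i\in\cG}\phi_i = |\cG|(\bar\xx - \vv_l)$, the iteration rearranges to
\[
\vv_{l+1} - \bar\xx = \Bigl(1 - \tfrac{|\cG|}{n}\Bigr)(\vv_l - \bar\xx) - \tfrac{1}{n}\sum_{i\in\cG}b_i + \tfrac{1}{n}\sum_{j\in\cB}\psi_j.
\]
Expanding the squared norm and applying Young's inequality splits the analysis into three pieces: a scalar contraction $(1-|\cG|/n)^2 \leq \delta^2$ on $\|\vv_l - \bar\xx\|^2$; the Byzantine contribution $\|\tfrac{1}{n}\sum_{j\in\cB}\psi_j\|^2 \leq \delta^2\tau^2$ (since every clipped vector has norm at most $\tau$); and the good-worker clipping bias.

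The main obstacle, and the heart of the proof, is bounding $\E\|\tfrac{1}{n}\sum_{i\in\cG}b_i\|^2$. The key pointwise identity is $\|b_i\| = (\|\phi_i\|-\tau)_+$, so $b_i$ vanishes when $\|\phi_i\| \leq \tau$ and is bounded by $\|\phi_i\|^2/\tau$ otherwise. Using $\E\|\phi_i\|^2 \leq \rho^2 + \E\|\vv_l - \bar\xx\|^2$ (which follows from the pairwise-variance assumption on good workers and the independence of $\vv_l$ from the fresh $\xx_i$ given the past), Markov's inequality combined with the conditional independence of $\{\xx_i\}_{i\in\cG}$ yields a bound of the form
\[
\E\Big\|\tfrac{1}{n}\sum_{i\in\cG}b_i\Big\|^2 \leq O(\delta^2\tau^2) + O\!\bigl(\delta\rho^2 + \delta\,\E\|\vv_l-\bar\xx\|^2\bigr),
\]
provided $\tau$ is taken large enough. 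The delicate point is to ensure the coefficient on $\E\|\vv_l - \bar\xx\|^2$ is $O(\delta)$ rather than $O(1)$: otherwise the one-step recursion becomes an expansion rather than a contraction. This is exactly where the extra factor of $1/\tau$ from the tail bound $\|b_i\| \leq \|\phi_i\|^2/\tau$ pays off, and where the i.i.d. averaging of independent biases across $\cG$ kills one factor of $\delta$.

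To close, choose $\tau^2 = \Theta(\rho^2/\delta)$, which balances $\delta^2 \tau^2 = O(\delta\rho^2)$ against the bias term, and then read off $\alpha$ and $\beta$. Plugging the resulting one-step bound back into itself $l$ times and collecting constants yields $\E\|\vv_l - \bar\xx\|^2 \leq (9.7\delta)^l \cdot 3\,\E\|\vv_0 - \bar\xx\|^2 + 4000\delta\rho^2$, as claimed. The default choice $L=1$ in Algorithm~\ref{alg:clipping} corresponds to the case where $\vv_0$ is already warm-started (e.g.\ at the previous round's aggregate), so that the transient term is already small.
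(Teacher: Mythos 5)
Your decomposition of a single step is sound and is essentially the same calculation as the paper's (clipping bias controlled by a Markov-type bound $\norm{b_i}\leq\norm{\phi_i}^2/\tau$, Byzantine contribution bounded by $\delta^2\tau^2$), but the proposal has a genuine gap at its central claim: the bound $\E\big\|\tfrac1n\sum_{i\in\cG}b_i\big\|^2 \lesssim \delta^2\tau^2 + \delta\rho^2 + \delta\,\E\norm{\vv_l-\bar\xx}^2$ requires that $\vv_l$ be independent of the good vectors, so that the bias fluctuations average across $\cG$ and contribute the crucial extra factor $1/n\leq\delta$ (your ``i.i.d.\ averaging kills one factor of $\delta$''). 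That independence holds only at $l=0$: the inner clipping iteration reuses the \emph{same} $n$ vectors at every step, so for $l\geq 1$ the center $\vv_l$ is a function of all the $\xx_i$ and there are no ``fresh $\xx_i$ given the past.'' Without independence, the cross terms among the $b_i$ can only be handled by Cauchy--Schwarz, which gives an additive term of order $\E\norm{b_i}^2$, and with only a second-moment assumption this can be as large as $\Theta(\rho^2)$ (e.g.\ a $\delta$-probability tail of magnitude $\rho/\sqrt\delta$ sits entirely beyond $\tau$), not $\Theta(\delta\rho^2)$. This is exactly the obstruction the paper runs into: its multi-step recursion from an arbitrary start only reaches a floor of order $\rho^2$, and the final $O(\delta\rho^2)$ guarantee is obtained by a two-stage sample-splitting argument --- iterate on a random $2n/3$ subset $\cX_1$ (with inflated fraction $1.5\delta$) to get within $O(\rho^2)$, then perform a \emph{single} clipping step on the held-out $n/3$ subset $\cX_2$ (fraction $3\delta$), whose points are genuinely independent of the computed center; the constants $9.7$, $3$ and $4000$ come from these inflated fractions and the final conversion from the population mean to $\bar\xx$. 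Your proof plan has no counterpart to this splitting, so as written it does not establish the $4000\,\delta\rho^2$ floor.

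A secondary, more easily patched issue: the fixed radius $\tau^2=\Theta(\rho^2/\delta)$ does not give a one-step contraction from an \emph{arbitrary} $\vv_0$. The mean-bias term is of order $(\rho^2+\E\norm{\vv_l-\bar\xx}^2)^2/\tau^2$, so with $\tau^2\propto\rho^2/\delta$ its coefficient in front of $\E\norm{\vv_l-\bar\xx}^2$ is $O(\delta\,\E\norm{\vv_l-\bar\xx}^2/\rho^2)$, which is not $O(\delta)$ when the initial error dwarfs $\rho^2$; the paper avoids this by letting the clipping radius scale with the current error estimate, $\tau_l^2\propto(\rho^2+B_l^2)/\delta$. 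With that adaptive radius your recursion contracts, but you would still only reach the $O(\rho^2)$ floor for the reason above, and you would need something like the paper's hold-out step (or a new argument controlling the correlation between $\vv_l$ and the clipped points) to finish the proof.
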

\begin{proof}[Proof Sketch]
  Suppose that we are given $\{\xx_1, \dots, \xx_n\}$ with a subset of size at most $\delta n$ are bad (denoted by $\cB$), and the rest are good ($\cG$). Consider the following simple scenario where $\norm{\xx_i}^2 \leq \rho^2$ almost surely for any $i \in \cG$. In such a case, a very simple aggregation rule exists: clip all values to a radius $\rho$ and then compute the average. All the good vectors remain unchanged. The magnitude of a clipped bad vector is at most $\rho$ and since only a $\delta$ of the vectors are bad, they can move the center by at most $\rho \delta$ ensuring that our error is $\delta^2\rho^2$. This is even better than Definition~\ref{asm:agg-ass}, which only requires the error to be smaller than $\delta \rho^2$.
  Of course there were two aspects which over-simplified our computations in the above discussion: i) we measure the pair-wise distance $\norm{\xx_i - \xx_j}$ between good workers instead of absolute norms, and ii) we do not have an almost sure bound, but only in expectation.
\end{proof}
\begin{corollary}Starting from any $\vv_0$ with an initial error estimate of $\E\norm{\vv_0 - \bar\xx}^2 \leq B^2$, running \ref{eqn:iter-clip} for $l = 100 \log\rbr*{\nicefrac{3B^2}{\delta \rho^2}}$ is a $(\delta_{\max}, c)$-robust aggregator as per Definition~\ref{asm:agg-ass} with $c = 4000$ and $\delta_{\max} = 0.1$. \\
  Further, if $\E\norm{\vv_0 - \bar\xx}^2 \leq \rho^2$ then a \emph{single} step of \ref{eqn:iter-clip} is a $(\delta_{\max}, c)$-robust aggregator.
\end{corollary}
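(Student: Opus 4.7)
The plan is to derive both parts of the corollary directly from the quantitative bound established in the preceding Theorem on robustness of centered clipping, namely
\[
\E\norm{\vv_l - \bar\xx}^2 \leq (9.7\delta)^l \cdot 3\,\E\norm{\vv_0 - \bar\xx}^2 + 4000\,\delta\rho^2.
\]
Since the hypotheses in that theorem imply those of Definition~\ref{asm:agg-ass}, all that remains is to choose $l$ large enough so that the geometric transient term is absorbed into a constant multiple of $\delta\rho^2$. No further probabilistic or algorithmic work is required.

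For the first statement, I would invoke $\delta \leq \delta_{\max} = 0.1$ to write $9.7\delta \leq 0.97$, so the transient contracts at rate at most $0.97^l$. With $\E\norm{\vv_0-\bar\xx}^2 \leq B^2$ and the stated $l = 100\log\rbr*{3B^2/(\delta\rho^2)}$ (interpreting $\log$ as natural log), a direct calculation using $100\ln(0.97) \approx -3.05 < -1$ gives
\[
(9.7\delta)^l \cdot 3B^2 \leq 0.97^{\,l} \cdot 3B^2 \leq \rbr*{\tfrac{\delta\rho^2}{3B^2}}\cdot 3B^2 = \delta\rho^2,
\]
whenever $3B^2 \geq \delta\rho^2$; the opposite case is trivial because the initial error already meets the target and no iteration is needed. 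Substituting back, the total error is at most $(1 + 4000)\,\delta\rho^2$, matching Definition~\ref{asm:agg-ass} with $c = 4000$ after absorbing a single unit into the stated constant (alternatively one enlarges the factor $100$ slightly to drive the transient strictly below $\delta\rho^2$).

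For the second statement, if $\E\norm{\vv_0-\bar\xx}^2 \leq \rho^2$ then a single step of \ref{eqn:iter-clip} makes the transient $(9.7\delta)\cdot 3\rho^2 \leq 29.1\,\delta\rho^2$ via $\delta \leq 0.1$, so the total error after one iteration is at most $(29.1 + 4000)\,\delta\rho^2$, again of the form $c\,\delta\rho^2$. The only obstacle in either part is careful bookkeeping of the numerical constants; the factor $100$ in the iteration count is chosen with generous slack (a tight analysis needs only $\lceil 1/\ln(1/0.97)\rceil \approx 33$ per unit of logarithm), which is why the simple arithmetic closes cleanly.
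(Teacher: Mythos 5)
Your proposal is correct and matches the paper's (implicit) argument: the corollary is obtained by plugging the chosen $l$ into the bound of Theorem~\ref{thm:iter-clip}, using $9.7\delta \le 0.97$ so that the transient term drops below $\delta\rho^2$, with constants understood up to the paper's admittedly unoptimized slack. One minor remark: for the single-step claim the paper's intended justification is the single-iteration analysis (the bound $\E\norm{\vv_{1}-\bmu}^2 \le 8\delta B_0^2 + 22\delta\rho^2$ for an independent good starting point), which lands comfortably below $c=4000$, whereas your substitution of $l=1$ into the theorem yields roughly $4029\,\delta\rho^2$ --- the same conclusion up to constant slack.
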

The above corollary proves that starting from \emph{any} point $\vv_0$ and running enough iterations of \ref{eqn:iter-clip} is guaranteed to provide a robust estimate. However, if we have a good starting point, we can prove a much stronger statement---that a \emph{single} clipping step is sufficient to provide robustness. We will use this latter part in designing an efficient robust optimization scheme in the next section.

Note that we have not tried to optimize for the constants in the theorem above---there is room for improvement in bringing $\delta_{\max}$ closer to 0.5, as well as in reducing the value of $c$. This may need a more careful analysis, or perhaps even a new oracle. We leave such improvements for future.

% \begin{remark}[Setting parameters $\tau$ and $l$]
% Finally, note that \ref{eqn:iter-clip} sets $\tau_l$ to be larger than $\rho$ (of the order $\nicefrac{\rho}{\sqrt{\delta}}$) to accommodate some uncertainty. Further, the iterative procedure ensures that we are not sensitive to the choice of origin (as denoted by the exponentially fast reducing dependence on the starting point $\vv_0$). Typically, a few iterations (around 1--3) is sufficient, especially when we have a decent starting point.
% % \end{remark}
With this, we have addressed the first stumbling block and now have a robust aggregator. Next, we see how using momentum can defend against time-coupled attacks.

% -------------------------------------------------------------------------
% Byzantine robust optimization with arbitrary aggregation
% -------------------------------------------------------------------------

\section{Robust optimization using momentum}
%\begin{quotation}
%  \emph{``When you cannot express what you are speaking about in numbers, your knowledge is of a meager and unsatisfactory kind.''} \quad --~ Lord Kelvin (adapted).
%\end{quotation}
In this section we will show that any Byzantine robust aggregator satisfying Definition~\ref{asm:agg-ass} can be combined with (local) worker momentum, to obtain a Byzantine robust optimization algorithm which successfully defends against time coupled attacks. Every time step $t \geq 1$, the server sends the workers parameters $\xx_{t-1}$ and each good worker $i \in \cG$ sends back $\mm_{t, i}$ computed recursively as below starting from $\mm_{0,i} = 0$
% \begin{empheq}[left={\mm_{t, i} =\alpha_t \gg_i(\xx_{t-1}) + \empheqlbrace }]{alignat*=2}\label{eqn:byz-sgdm-worker}\tag{\sc worker}
%          & (1-\alpha_t)\mm_{t-1, i}\,,  &\text{{\sc SGDm}, or}\\
%             &(1-\alpha_t)(\mm_{t-1, i} + \gg_t(\xx_{t-1}) - \gg_t(\xx_{t-2}))\,, &\text{\sc MVR.}
% \end{empheq}
\begin{equation}\label{eqn:byz-sgdm-worker}\tag{\sc worker}
  \mm_{t, i} = (1- \beta_t) \gg_i(\xx_{t-1}) +  \beta_t \mm_{t-1, i}\,.
\end{equation}
The workers communicate their momentum vector to the server instead of the stochastic gradients directly since they have a much smaller variance.
Byzantine workers may send arbitrary vectors to the server. The server then uses a Byzantine-resilient aggregation rule $\cA$ such as \eqref{eqn:iter-clip} and computes the update
% TODO: add hat to momentum.
\begin{equation}\label{eqn:byz-sgdm-server}\tag{\sc server}
  \begin{split}
    \mm_t &= \cA(\mm_{t,1}\,, \dots,\, \mm_{t,n})\\
    \xx_t &= \xx_{t-1} - \eta_t\mm_t \,.
  \end{split}
\end{equation}
Intuitively, using momentum with $\beta = (1- \alpha)$ averages the stochastic gradients of the workers over their past $\nicefrac{1}{\alpha}$ gradients. This results in a reduction of the variance of the good workers by a factor $\alpha$ since their noise is uncoupled. However, the variance of the time-coupled Byzantine perturbations does not reduce and becomes easy to detect.
%%%%%%%%%%%%%%%%%%%%%%%%%%%%%%%% ALGORITHM %%%%%%%%%%%%%%%%%%%%%%%%%%%%%%%%%%%%
% \begin{minipage}[t]{0.46\textwidth}
%   \vspace{0pt}
\begin{algorithm}[!t]
  \caption{Robustness using Momentum}\label{alg:momentum}
  \begin{algorithmic}[1]
    \STATE \textbf{input:} $\xx$, $\eta$, $\beta$, $\cA$
    \STATE \textbf{initialize:} $\mm_i \leftarrow \0$ $\forall i \in [n]$
    \FOR{each round $t=1,\dots$}
    \STATE server communicates $\xx$ to workers
    \ONCLIENT{$i \in \cG$}
    \STATE compute mini-batch gradient $\gg_i(\xx)$
    \STATE compute $\mm_i \leftarrow (1-\beta)\gg_i(\xx) + \beta \mm_i$
    \STATE communicate $\mm_i$ to server
    \ENDON
    \STATE aggregate $\hat\mm = \cA(\mm_1, \dots, \mm_n)$
    \STATE update $\xx \leftarrow \xx - \eta\hat\mm$
    \ENDFOR
  \end{algorithmic}
\end{algorithm}
% \end{minipage}
%%%%%%%%%%%%%%%%%%%%%%%%%%%%%%%% ALGORITHM %%%%%%%%%%%%%%%%%%%%%%%%%%%%%%%%%%%%
% ------------------------------------ Rate of convergence--------------------------
\subsection{Rate of convergence}
Now we prove a rate of convergence of our Byzantine aggregation algorithm.
\begin{theorem}[Byzantine robust SGDm]\label{thm:byz-sgdm-convergence}
  Suppose that we are given a $\delta$-robust problem satisfying Def.~\ref{asm:robust-opt-problem} and a $(\delta_{\max}, c)$-robust aggregation rule satisfying Def.~\ref{asm:agg-ass} for $\delta_{\max} \geq \delta$. Then, running \ref{eqn:byz-sgdm-worker} update with step-sizes $\eta_t = \min\rbr[\Big]{\sqrt{\frac{(f(\xx_0) - f^\star) + \tfrac{5c\delta}{16 L}\sigma^2}{20 L T\sigma^2 \rbr*{\tfrac{2}{n} + c\delta}}}, \frac{1}{8L}}$ and momentum parameter $\alpha_1 = 1$ and $\alpha_t = 8 L \eta_{t-1}$ for $t \geq 2$ satisfies
  \begin{align*}
    \frac{1}{T}\sum_{t=1}^T \E \norm{\nabla f(\xx_{t-1})}^2 & \leq                                                                                                                   \\
                                                            & \hspace*{-2cm} 16 \sqrt{\frac{\sigma^2 \rbr*{1 + c\delta n}}{nT} \rbr*{10L(f(\xx_0) - f^\star) + 3c\delta\sigma^2} } + \\
                                                            & \hspace*{-2cm} \frac{32L(f(\xx_{0}) - f^\star)}{T} + \frac{20 \sigma^2 (1 + c\delta n)}{nT}\,.
  \end{align*}
\end{theorem}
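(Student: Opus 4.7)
I would prove this via a Lyapunov function that couples a one-step smooth descent inequality with an error recursion on the momentum. Write $\ee_t := \hat\mm_t - \nabla f(\xx_{t-1})$. The update $\xx_t = \xx_{t-1} - \eta_t\hat\mm_t$ together with $L$-smoothness and the identity $-\inp{a}{b} = \tfrac12(\norm{a-b}^2 - \norm{a}^2 - \norm{b}^2)$ gives
\[
\E f(\xx_t) \leq \E f(\xx_{t-1}) - \tfrac{\eta_t}{2}\E\norm{\nabla f(\xx_{t-1})}^2 - \tfrac{\eta_t}{2}(1-L\eta_t)\E\norm{\hat\mm_t}^2 + \tfrac{\eta_t}{2}\E\norm{\ee_t}^2,
\]
so the problem reduces to a sharp bound on $\E\norm{\ee_t}^2$.

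I would split $\E\norm{\ee_t}^2 \leq 2\E\norm{\hat\mm_t - \bar\mm_t}^2 + 2\E\norm{\bar\mm_t - \nabla f(\xx_{t-1})}^2$, where $\bar\mm_t := \tfrac{1}{\abs{\cG}}\sum_{i\in\cG}\mm_{t,i}$. For the first (aggregation) term, Definition~\ref{asm:agg-ass} yields $\E\norm{\hat\mm_t - \bar\mm_t}^2 \leq c\delta \rho_t^2$ with $\rho_t^2 := \max_{i,j\in\cG}\E\norm{\mm_{t,i}-\mm_{t,j}}^2$. Unrolling $\mm_{t,i}-\mm_{t,j} = \alpha_t(\gg_i(\xx_{t-1})-\gg_j(\xx_{t-1})) + (1-\alpha_t)(\mm_{t-1,i}-\mm_{t-1,j})$ and exploiting independence of stochastic gradients across workers and rounds (and the base case $\alpha_1=1$, which sets $\rho_1^2 \leq 2\sigma^2$), an easy induction gives $\rho_t^2 \leq 2\alpha_t\sigma^2$, so $\E\norm{\hat\mm_t - \bar\mm_t}^2 \leq 2c\delta\alpha_t\sigma^2$.

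For the bias term, the same expansion combined with $\nabla f(\xx_{t-2}) - \nabla f(\xx_{t-1})$ telescoping gives
\[
\bar\mm_t - \nabla f(\xx_{t-1}) = (1-\alpha_t)\bigl(\bar\mm_{t-1} - \nabla f(\xx_{t-2})\bigr) + \alpha_t\bigl(\bar\gg_t - \nabla f(\xx_{t-1})\bigr) + (1-\alpha_t)\bigl(\nabla f(\xx_{t-2}) - \nabla f(\xx_{t-1})\bigr).
\]
Squaring in expectation, using independence of $\bar\gg_t - \nabla f(\xx_{t-1})$ from the past (variance $\leq \sigma^2/\abs{\cG}$), smoothness $\norm{\nabla f(\xx_{t-1}) - \nabla f(\xx_{t-2})}\leq L\eta_{t-1}\norm{\hat\mm_{t-1}}$, and an $\alpha_t$-weighted Young's inequality to handle the cross term, I obtain the contractive recursion
\[
\E\norm{\bar\mm_t - \nabla f(\xx_{t-1})}^2 \leq (1-\alpha_t)\E\norm{\bar\mm_{t-1} - \nabla f(\xx_{t-2})}^2 + \tfrac{\alpha_t \sigma^2}{\abs{\cG}} + \tfrac{L^2\eta_{t-1}^2}{\alpha_t}\E\norm{\hat\mm_{t-1}}^2.
\]

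To close the argument I would introduce the potential
\[
\Phi_t := \E f(\xx_t) - f^\star + \tfrac{\eta}{\alpha}\,\E\norm{\bar\mm_t - \nabla f(\xx_{t-1})}^2,
\]
valid for $t \geq 1$ with the constant choices $\eta_t \equiv \eta$ and $\alpha_t \equiv \alpha = 8L\eta$ (the first step absorbs cleanly because $\alpha_1=1$). Combining the descent inequality with the bias recursion, the cross term $(L^2\eta^2/\alpha)\norm{\hat\mm_{t-1}}^2$ from the bias is exactly absorbed by the $-\tfrac{\eta}{2}(1-L\eta)\norm{\hat\mm_t}^2$ reserve provided $\eta \leq 1/(8L)$, and I would get $\Phi_t - \Phi_{t-1} \leq -\tfrac{\eta}{4}\E\norm{\nabla f(\xx_{t-1})}^2 + 2\eta\sigma^2\bigl(\tfrac{1}{\abs{\cG}} + c\delta\alpha\bigr)$. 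Telescoping $t=1,\ldots,T$, dividing by $\eta T/4$, substituting $\alpha = 8L\eta$, using $\abs{\cG}\geq n/2$, and finally choosing $\eta$ to balance the $(f(\xx_0)-f^\star)/(\eta T)$ term against the $\eta L\sigma^2(1/n + c\delta)$ term recovers the stated $\sqrt{\cdot/(nT)} + 1/T$ bound.

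\textbf{Main obstacle.} The delicate part is the calibration $\alpha = \Theta(L\eta)$: the gradient-shift term $\nabla f(\xx_{t-1}) - \nabla f(\xx_{t-2})$ couples the bias recursion back to $\norm{\hat\mm_{t-1}}^2$, and the whole argument only goes through if this coupling is absorbed into the descent reserve without degrading $\delta_{\max}$ or turning the $c\delta$ factor into $c\delta \cdot (\text{large})$. A secondary subtlety is that the initial term $\bar\mm_0 - \nabla f(\xx_{-1})$ must not leak a persistent $\sigma^2$ into the bound, which is precisely why the algorithm uses $\alpha_1 = 1$ (i.e.\ one plain SGD step at the start) so that $\Phi_1$ is controlled directly in terms of $\sigma^2/\abs{\cG}$.
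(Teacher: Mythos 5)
Your plan follows essentially the same route as the paper's proof: a smooth descent inequality, a contractive recursion for the averaged-momentum error $\bar\ee_t = \bar\mm_t - \nabla f(\xx_{t-1})$, an aggregation-error bound obtained from the pairwise distances $\E\norm{\mm_{t,i}-\mm_{t,j}}^2$ fed into Definition~\ref{asm:agg-ass}, and a Lyapunov function with weight $\Theta(\eta/\alpha)$ under the calibration $\alpha = 8L\eta$. Keeping the $-\tfrac{\eta}{2}(1-L\eta)\norm{\hat\mm_t}^2$ reserve and absorbing the $\tfrac{L^2\eta^2}{\alpha}\norm{\hat\mm_{t-1}}^2$ coupling into it is only a cosmetic variation of the paper's choice to instead expand $\norm{\mm_{t-1}}^2$ into $\bar\ee_{t-1}$, $\mm_{t-1}-\bar\mm_{t-1}$, and $\nabla f(\xx_{t-2})$ inside the error lemma. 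However, two concrete steps fail as written.

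First, the ``easy induction'' giving $\rho_t^2 \le 2\alpha_t\sigma^2$ is false. The recursion is $\rho_t^2 \le (1-\alpha)\rho_{t-1}^2 + 2\alpha^2\sigma^2$ with base case $\rho_1^2 \le 2\sigma^2$ (forced by $\alpha_1=1$), and the base value is only damped geometrically; already at $t=2$ one has $\rho_2^2 \approx 2\sigma^2 \gg 2\alpha\sigma^2$. The correct bound is $\rho_t^2 \le 2\sigma^2\bigl(\alpha + (1-\alpha)^{t-1}\bigr)$, and the residual $(1-\alpha)^{t-1}$ part, summed over $T$ rounds, contributes a term of order $\tfrac{c\delta\sigma^2}{\alpha T} = \Theta\bigl(\tfrac{c\delta\sigma^2}{L\eta T}\bigr)$ to the averaged gradient bound. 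This is precisely why the theorem's step size carries the extra $\tfrac{5c\delta}{16L}\sigma^2$ in its numerator and why $3c\delta\sigma^2$ appears inside the square root of the stated bound; your plan neither produces nor balances these terms, so even after fixing the induction you must carry this geometric tail explicitly (as the paper's aggregation-error lemma does) to reach the stated inequality. Second, in your bias recursion the noise term must be $\alpha_t^2\sigma^2/\abs{\cG}$, since it arises from $\E\norm{\alpha_t(\bar\gg_t - \nabla f(\xx_{t-1}))}^2$; you wrote $\alpha_t\sigma^2/\abs{\cG}$, and this propagates into your per-round bound $2\eta\sigma^2\bigl(\tfrac{1}{\abs{\cG}} + c\delta\alpha\bigr)$, which after telescoping and dividing by $\eta T$ leaves a non-vanishing $\Theta(\sigma^2/n)$ floor rather than the $\sqrt{\sigma^2(1+c\delta n)/(nT)}$ rate, so the concluding claim does not follow from the inequalities you display. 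The second issue is a one-character fix; the first is a genuine gap whose repair changes both the step-size choice and the final form of the bound.
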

\begin{remark}[Convergence rate]
  The rate of convergence in Theorem~\ref{thm:byz-sgdm-convergence} is asymptotically (ignoring constants and higher order terms) of the order:
  \[
    \frac{1}{T}\sum_{t=1}^T \E \norm{\nabla f(\xx_{t-1})}^2 \lesssim \sqrt{\frac{\sigma^2}{T}\rbr[\Big]{\frac{1}{n} + \delta}} \,.
  \]
  First note that when $\delta =0$ i.e. when there are no Byzantine adversaries, we recover the optimal rate of $\frac{\sigma}{\sqrt{nT}}$ which linearly scales with the number of workers $n$. In the presence of a $\delta$ fraction of adversaries, the rate has two terms: the first term $\frac{\sigma}{\sqrt{nT}}$ which linearly scales with the number of workers $n$, and a second $\frac{\sigma\sqrt{\delta}}{\sqrt{T}}$ which depends on the fraction of adversaries $\delta$ but does not improve with increasing workers. Similar phenomenon occurs in the classical robust mean estimation setting \citep{lai2016agnostic} and is unfortunately not possible to improve.
\end{remark}

% \begin{remark}[Step-size and momentum]
Our algorithm uses step-size $\eta$ and momentum parameter $\alpha = (1-\beta)$ of the order of $\sqrt{\frac{1}{nT\sigma^2} + \frac{\delta}{T\sigma^2}}$. Here $\delta$ represents the fraction of adversarial workers.
When there are very few bad workers with $\delta = \cO(\frac{1}{n})$, the momentum and the step-size parameters can remain as in the non-Byzantine case. As the number of adversaries increases, $\delta$ increases meaning we should use smaller learning rate and larger momentum.
Either when using linear scaling \citep{goyal2017accurate} or square-root scaling \citep{hoffer2017train}, we need to scale both the learning-rate and momentum parameters as $\rbr*{\frac{1}{n} + \delta}$ instead of the traditional $\frac{1}{n}$ in the presence of a $\delta$ fraction of adversaries.
% \end{remark}

% \begin{remark}[Variance of update]
The above algorithm and convergence analysis crucially relied on the low variance of the update from the workers using worker momentum. The very high momentum ensures that the variance of the updates from the workers to the server have a variance of the order $\sqrt{\frac{\sigma^2}{nT} + \frac{\delta \sigma^2}{T}}$. Note that this variance asymptotically goes to 0 with $T$ and is significantly smaller than the variance of the stochastic gradient~$\sigma^2$. This way, the Byzantine adversaries have very little lee-way to fool the aggregator.
% \end{remark}

% %%%%%%%%%%%%%%%%% Figures  %%%%%%%%%%%%%%%%%
\begin{figure*}[!t]
  \vspace{-3mm}
  \centering
  \includegraphics[
    width=\linewidth
    % ]{}
  ]{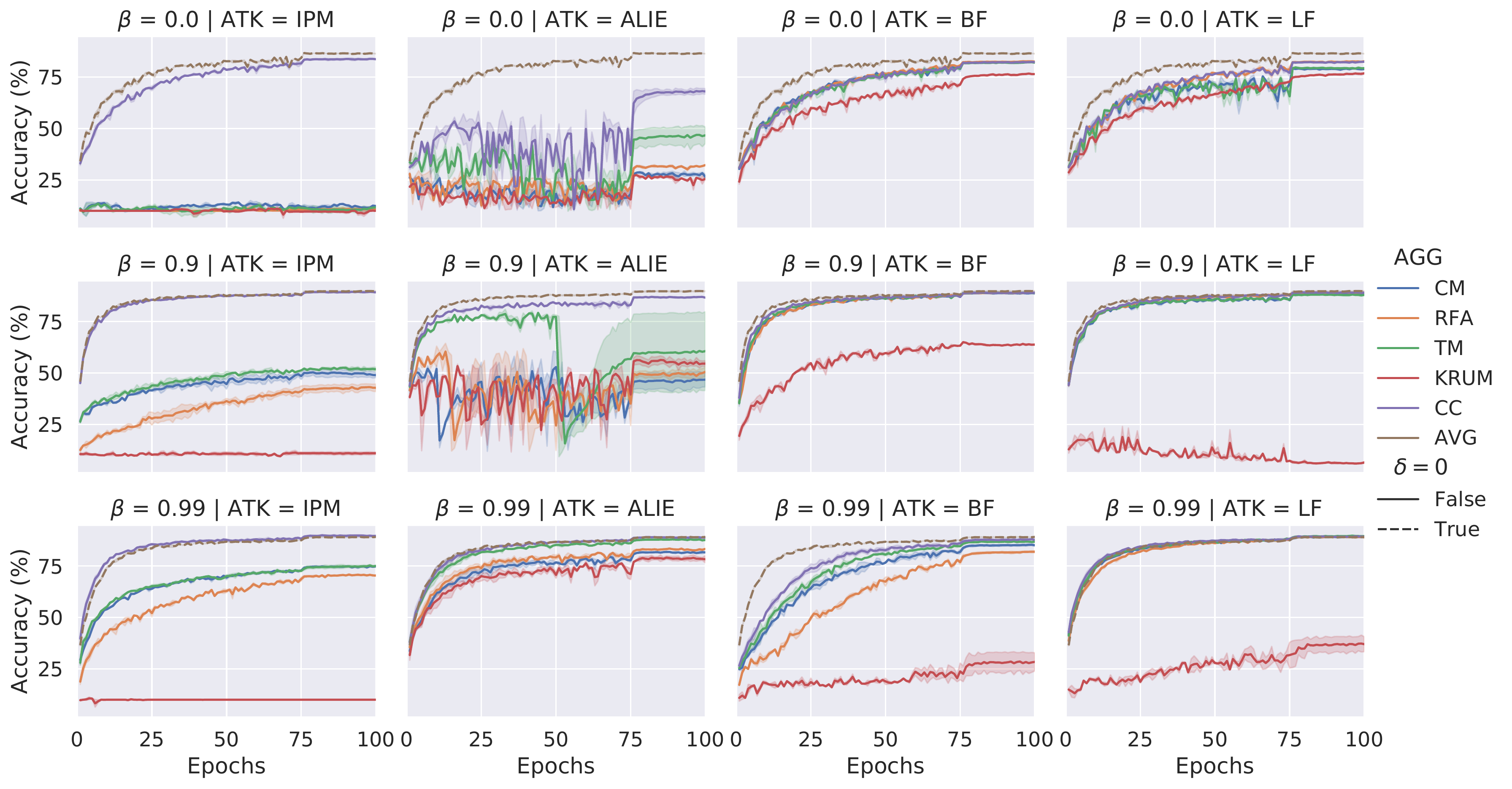}
  \caption{Coordinate median (CM), Robust Federated Aggregation (RFA), Trimmed Mean (TM), Krum, and Centered Clip (CC) are tested on Cifar10 with 25 workers. Attackers run inner-product manipulation attack (IPM) \cite{xie2019fall}, ``a little is enough'' (ALIE) \cite{baruch2019little}, bit-flipping  (BF), and label-flipping (LF). IPM uses 11 Byzantine workers while others use 5. The dashed brown line is average aggregator under no attacks ($\delta=0$).
    Momentum generally improves all methods, with larger momentum adding stability. Centered Clip (CC) consistently has the best performance.}
  % \vspace{-5mm}
  \label{fig:exp3}
\end{figure*}

\begin{figure}[!t]
  \centering
  \includegraphics[
    width=\linewidth
    % ]{figures/s1w12f2.pdf}
  ]{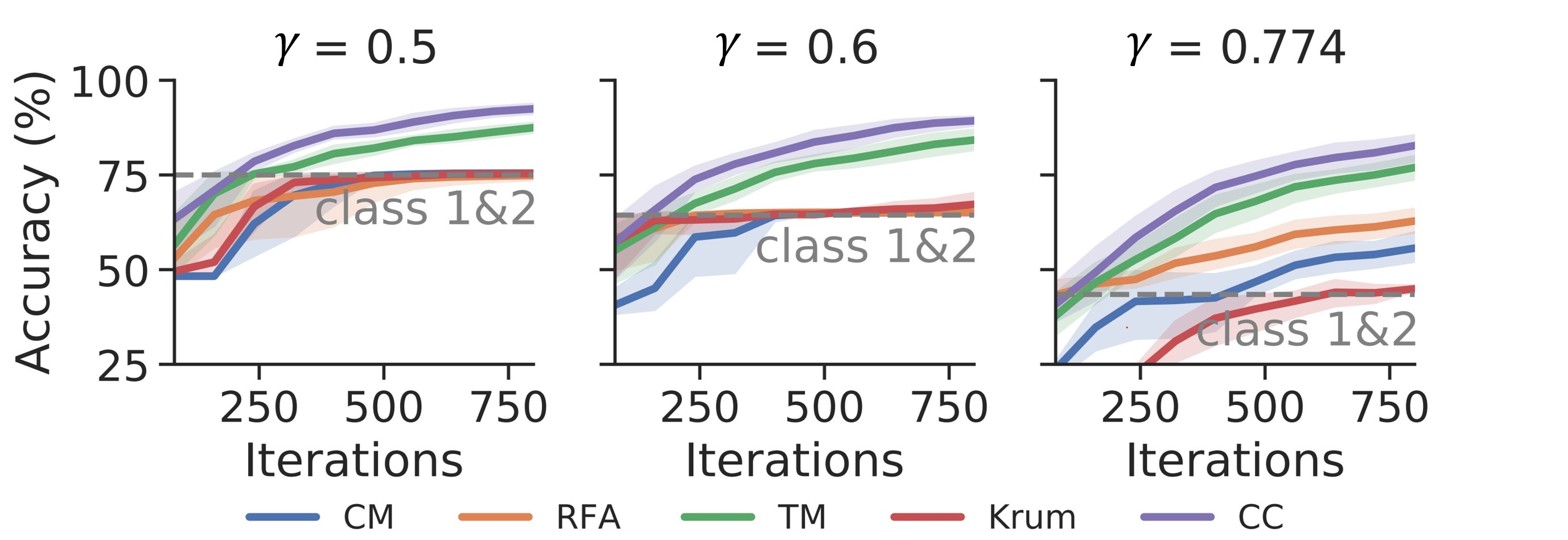}
  \caption{Robust aggregation rules on imbalanced MNIST where each successive class is a $\gamma$-fraction of the previous. Centered Clip is unaffected by imbalance where as the accuracy RFA, Krum, and CM corresponds to only learning class 1 and 2 (marked by horizontal gray dashed line).}
  \label{fig:exp1}
  \vspace{-3mm}
\end{figure}

\begin{figure}[!t]
  \vspace{-1mm}
  \centering
  \includegraphics[
    width=1\linewidth
    % ]{figures/s1w12f2.pdf}
  ]{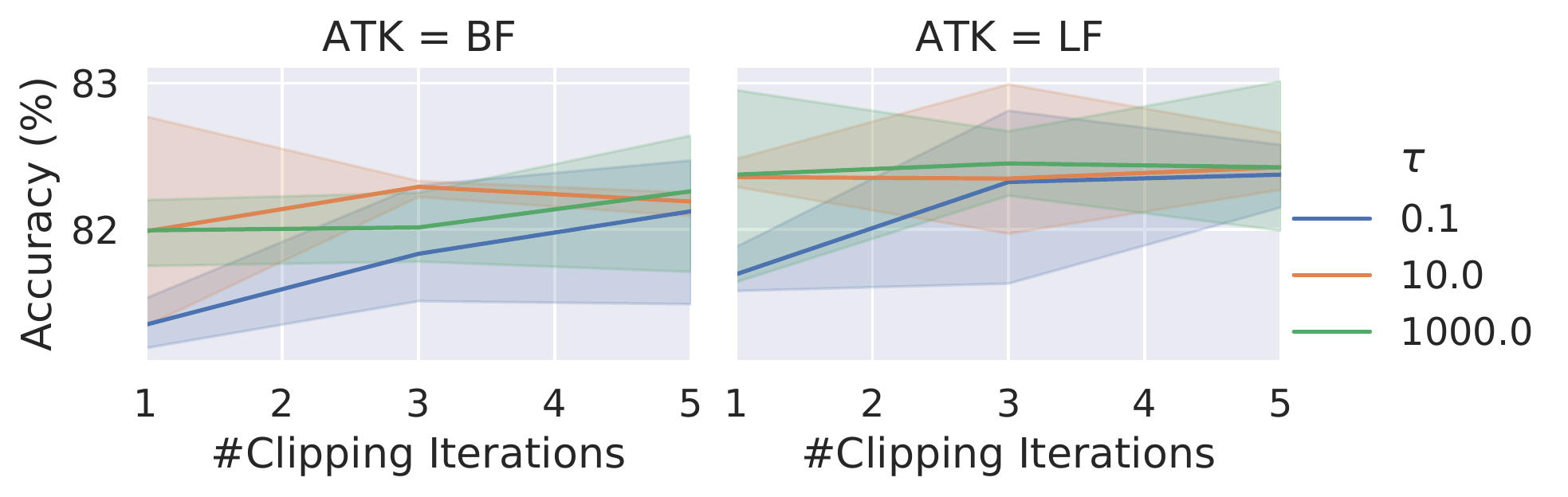}
  \caption{Final test accuracy of Centered Clip as we vary clipping iterations ($l$) and radius ($\tau$). It is stable across all hyper-parameters, justifying using $l=1$ as default.}
  \vspace{-5mm}
  \label{fig:exp2_final}
\end{figure}
% %%%%%%%%%%%%%%%%% Figures  %%%%%%%%%%%%%%%%%

\subsection{Improved convergence using MVR}
Recently, a variation of the standard momentum, called momentum based variance reduction or MVR, was proposed by \citet{tran2020hybrid,cutkosky2019momentum}. They show that by adding a small correction to correct for bias, we can improve SGD's $\cO(T^{-\frac{1}{2}})$ rate of convergence to $\cO(T^{-\frac{2}{3}})$. By combining worker momentum based variance reduction with a Byzantine robust aggregator, we can obtain a faster Byzantine robust algorithm.
\begin{theorem}[Byzantine robust MVR]\label{thm:mvr-convergence-main}
  Suppose we are given a $\delta$-robust Byzantine optimization problem Def.~\ref{asm:robust-opt-problem}. Let us run the MVR algorithm combined with a $(\delta_{\max}, c)$-robust aggregation rule \cA with $\delta \leq \delta_{\max}$, step-size $\eta = \min\cO\rbr*{\sqrt[3]{\frac{f(\xx_0) - f^\star}{T}}, \frac{1}{4L}}$, and momentum parameter $\alpha = \cO(L^2 \eta^2)$. Then,
  \[
    \frac{1}{T}\sum_{t=1}^T \E \norm{\nabla f(\xx_{t-1})}^2 \lesssim  \rbr*{\frac{L \sigma\sqrt{c\delta + 1/n}}{T}}^{2/3}\,.
  \]
\end{theorem}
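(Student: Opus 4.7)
The plan is to mirror the momentum proof (Theorem for Byzantine robust SGDm), but replace the standard worker momentum with the MVR update
\[
  \mm_{t,i} = \gg_i(\xx_{t-1}) + (1-\alpha)\bigl(\mm_{t-1,i} - \gg_i(\xx_{t-2})\bigr)\,,
\]
where the \emph{same} sample is reused at $\xx_{t-1}$ and $\xx_{t-2}$. The server then aggregates the worker momenta through the $(\delta_{\max},c)$-robust rule $\cA$ and performs an SGD-style step. The target is to bound the error $\ee_t := \hat\mm_t - \nabla f(\xx_{t-1})$ and plug it into a standard descent lemma to extract the $T^{-2/3}$ rate.

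First I would decompose, for each round,
\[
  \E\norm{\ee_t}^2 \leq 2\,\E\norm{\hat\mm_t - \bar\mm_t}^2 + 2\,\E\norm{\bar\mm_t - \nabla f(\xx_{t-1})}^2\,,
\]
where $\bar\mm_t$ is the average of the momenta of good workers. The second term is the ``clean'' MVR error, which by the now-standard analysis of \citet{cutkosky2019momentum,tran2020hybrid} (applied to the size-$|\cG|$ average) satisfies a recursion of the form
\[
  \E\norm{\bar\mm_t - \nabla f(\xx_{t-1})}^2 \lesssim (1-\alpha)\,\E\norm{\bar\mm_{t-1} - \nabla f(\xx_{t-2})}^2 + \alpha^2\tfrac{\sigma^2}{n} + L^2\eta^2\tfrac{\sigma^2}{n}\,,
\]
because averaging over $(1-\delta)n = \Theta(n)$ i.i.d.\ good workers shrinks the per-step noise by $1/n$. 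The first term is the aggregation error, which by Definition~\ref{asm:agg-ass} is bounded by $c\delta\rho_t^2$, where $\rho_t^2 := \max_{i,j\in\cG}\E\norm{\mm_{t,i}-\mm_{t,j}}^2$.

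The key step is to control $\rho_t^2$. I would bound
\[
  \rho_t^2 \leq 4\,\E\norm{\mm_{t,i}-\nabla f(\xx_{t-1})}^2\,,
\]
and derive a recursion for an \emph{individual} worker's MVR error, analogous to the one for $\bar\mm_t$ but with the per-worker variance $\sigma^2$ instead of $\sigma^2/n$. Summing across workers (or using a single good worker as a proxy, since the workers are i.i.d.), this gives $\rho_t^2 \lesssim \sigma^2/\alpha + L^2\eta^2\sigma^2/\alpha$ in steady state. Plugging into the decomposition yields
\[
  \E\norm{\ee_t}^2 \lesssim (1-\alpha)\E\norm{\ee_{t-1}}^2 + (\alpha^2 + L^2\eta^2)\sigma^2\bigl(\tfrac{1}{n}+c\delta\bigr)\,,
\]
after absorbing the cross terms. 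Unrolling this geometric recursion gives a uniform bound $\E\norm{\ee_t}^2 \lesssim (\alpha + L^2\eta^2/\alpha)\sigma^2(1/n + c\delta)$.

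Finally I would combine this with the $L$-smoothness descent inequality $f(\xx_t) \leq f(\xx_{t-1}) - \tfrac{\eta}{2}\norm{\nabla f(\xx_{t-1})}^2 + \eta\norm{\ee_t}^2 + L\eta^2\norm{\hat\mm_t}^2$, average over $t = 1,\dots,T$, and telescope. Tuning $\alpha = \Theta(L^2\eta^2)$ balances the MVR bias and variance, and tuning $\eta = \Theta\bigl((f(\xx_0)-f^\star)^{1/3}/T^{1/3}\bigr)$ (clipped at $1/(4L)$) yields the advertised $\bigl(L\sigma\sqrt{c\delta+1/n}/T\bigr)^{2/3}$ bound.

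The main obstacle I expect is tracking the MVR per-worker recursion carefully enough that the dependence on $\alpha$ is not damaged by the Byzantine aggregation error: the aggregator introduces an additive $c\delta\rho_t^2$ term into the \emph{server's} recursion for $\ee_t$, but $\rho_t^2$ itself depends on the per-worker MVR variance which scales as $1/\alpha$. A naive bound would give $c\delta\sigma^2/\alpha$, which is too large; the fix is to exploit the fact that the pairwise difference $\mm_{t,i}-\mm_{t,j}$ has a deterministic cancellation of the $\nabla f$-dependent drift and only accumulates stochastic noise, so the recursion for $\rho_t^2$ closes with coefficient $(1-\alpha)$ rather than $1$, and the steady-state bound becomes the benign $\sigma^2(\alpha + L^2\eta^2/\alpha)$ needed for the $T^{-2/3}$ rate.
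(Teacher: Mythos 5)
Your overall architecture is the same as the paper's: a descent lemma, a contraction recursion for the averaged MVR error $\bar\ee_t = \bar\mm_t - \nabla f(\xx_{t-1})$, a separate recursion for the pairwise distance between good workers' MVR momenta feeding Definition~\ref{asm:agg-ass}, and the tuning $\alpha=\Theta(L^2\eta^2)$, $\eta=\Theta(T^{-1/3})$. However, there is a genuine quantitative gap in how you handle the drift/movement terms, and your stated bounds do not actually deliver the $T^{-2/3}$ rate. The per-step noise entering both recursions is of size $\alpha^2\sigma^2$ (scaled by $1/n$ or by $c\delta$), \emph{plus} a movement term of size $L^2\eta^2\,\E\norm{\dd_{t-1}}^2$ coming from $\gg_{t,i}(\xx_{t-1})-\gg_{t,i}(\xx_{t-2})$; this movement term is \emph{not} $L^2\eta^2\sigma^2$ — it contains $\norm{\nabla f(\xx_{t-2})}^2$, $\norm{\bar\ee_{t-1}}^2$ and the aggregation error $\norm{\dd_{t-1}-\bar\dd_{t-1}}^2$, none of which can be replaced by $\sigma^2$. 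If you take your combined recursion literally, with per-step noise $(\alpha^2+L^2\eta^2)\sigma^2(1/n+c\delta)$ and contraction $(1-\alpha)$, the steady state is $(\alpha+L^2\eta^2/\alpha)\sigma^2(1/n+c\delta)$, and with $\alpha=\Theta(L^2\eta^2)$ this is $\Theta\rbr*{\sigma^2(1/n+c\delta)}$ — a constant floor in the descent inequality, not a quantity decaying like $T^{-2/3}$. (Your earlier intermediate claim $\rho_t^2\lesssim\sigma^2/\alpha$ is off for the same reason, and the self-correction at the end, $\sigma^2(\alpha+L^2\eta^2/\alpha)$, still carries the spurious $L^2\eta^2\sigma^2/\alpha$ piece.) The correct steady state must be $\cO\rbr*{\alpha\sigma^2(c\delta+1/n)} = \cO\rbr*{L^2\eta^2\sigma^2(c\delta+1/n)}$, which is what makes the final balancing work.

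The fix, which is exactly what the paper's proof does, is to keep the movement term in its exact form $L^2\eta^2\E\norm{\dd_{t-1}}^2$, split it into $\norm{\nabla f(\xx_{t-2})}^2+\norm{\bar\ee_{t-1}}^2+\norm{\dd_{t-1}-\bar\dd_{t-1}}^2$, and close a \emph{joint} Lyapunov recursion in the quantity $\E\norm{\bar\ee_t}^2 + c\delta\,\E\norm{\dd_{i,t}-\dd_{j,t}}^2$: scaling the pairwise recursion by $c\delta$ and adding it to the error recursion, and choosing $\alpha\geq 192L^2\eta^2(c\delta+1)$, the $L^2\eta^2$-weighted error and aggregation terms are swallowed by the $(1-\alpha/4)$ contraction, the only surviving noise is $(4c\delta+2/n)\alpha^2\sigma^2$, and the only leak is a term $\tfrac{\alpha}{16}\norm{\nabla f(\xx_{t-2})}^2$ which, after scaling the Lyapunov recursion by $4\eta/\alpha$ and adding the descent lemma, is absorbed by the $-\tfrac{\eta}{2}\norm{\nabla f(\xx_{t-1})}^2$ terms in the telescope. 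Your phrase ``after absorbing the cross terms'' is precisely where this bookkeeping must happen, and it cannot be shortcut by bounding the drift with $\sigma^2$; once done correctly, unrolling gives $\E\norm{\bar\ee_t}^2+c\delta\rho_t^2\lesssim\alpha\sigma^2(c\delta+1/n)$ and the advertised $\rbr*{L\sigma\sqrt{c\delta+1/n}/T}^{2/3}$ rate follows from your final tuning.
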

Note that Theorem~\ref{thm:mvr-convergence-main} provides a significant asymptotic speedup over the traditional momentum used in Theorem~\ref{thm:byz-sgdm-convergence} and matches the lower bound of \citep{arjevani2019lower} when $\delta = 0$. This result highlights the versatility of our approach and the ease with which our notion of a Byzantine oracle can be combined with any state of the art optimization methods.
% \pagebreak
% \subsection{Impact of momentum on robust aggregation rules}\label{sub-sec:exp:momentum}

\section{Experiments}

In this section, we empirically demonstrate the effectiveness of \ref{eqn:iter-clip} and \ref{eqn:sgdm} for Byzantine-robust learning. We refer to the baseline robust aggregation rules as RFA~\citep{pillutla2019robust}, coordinate-wise median (CM), trimmed mean (TM)~\citep{yin2018byzantinerobust}, and Krum~\citep{blanchard2017machine}.
The inner iteration (T) of RFA is fixed to 3 as suggested in \citep{pillutla2019robust}.
Throughout the section, we consider the distributed training for two image classification tasks, namely MNIST \citep{lecun-mnisthandwrittendigit-2010} on 16 nodes and CIFAR-10 \citep{krizhevsky2009learning} on 25 nodes. All experiments are repeated at least 2 times. The detailed setups are deferred to \Cref{ssec:setups}.

\subsection{Failure of ``middle seekers''} \label{sub-sec:exp:long-tail}
In this experiment, we demonstrate the challenge stated in Section~\ref{subsec:aggregator-failure} by comparing robust aggregation rules on imbalanced datasets without attackers. Imbalanced training and test MNIST dataset are created by sampling classes with exponential decay, that is $1, \gamma, \gamma^2, \ldots, \gamma^{K-1}$ for classes $1$ to $K$ ($\gamma \in (0,1]$). Then we shuffle the dataset and divide it equally into 16 nodes. The mini-batch for each node is 1.

  The experimental results are presented in Fig.~\ref{fig:exp1}. For drastic decay $\gamma=0.5$, the median and geometric median based rules can only achieve 75\% accuracy which is the portion of class 1 and 2 in the data. This is a practical example of how ``middle-seekers'' fail. On the other hand, centered clip \ref{eqn:iter-clip} and trimmed mean have no such bound as they incorporate the gradients from tail distributions.

  \subsection{Impact of momentum on robust aggregation rules}\label{sub-sec:exp:momentum}

  The traditional implementation of momentum slightly differs from \eqref{eqn:byz-sgdm-worker} update and uses
  \begin{equation}\label{eqn:normal-momentum}
    \mm_{t, i} = \gg_i(\xx_{t-1}) +  \beta \mm_{t-1, i}\,.
  \end{equation}
  This version is equivalent to running \eqref{eqn:byz-sgdm-worker} update with a re-scaled learning rate of $\nicefrac{\eta}{(1-\beta)}$. Further, note that our theory predicts that the clipping radius $\tau$ should be proportional to the variance of the updates which in turn depends on the momentum parameter $\beta$. We scale $\tau$ by a factor of $(1-\beta)$ if using \eqref{eqn:byz-sgdm-worker} update, and leave it constant if using update of the form \eqref{eqn:normal-momentum}.

  In this experiment, we study the the influence of momentum on robust aggregation rules against various attacks, including bit-flipping (BF), label-flipping (LF), little is enough \citep{baruch2019little}, and inner product manipulation \citep{xie2019fall}.
  % The following part is the same as Experiment 2
  We train ResNet-20 \citep{he2016deep} on CIFAR-10 for $100$ epochs on 25 workers where 5 of them are adversaries. For \citep{xie2019fall} we use 11 Byzantine workers to amplify the attack.
  The batch size per worker is set to $32$ and the learning rate is $0.1$ before $75$th epoch and $0.01$ afterwards. Note that the smaller batch size, e.g. 32, leads to larger variance among good gradients which makes the attacks in \citep{baruch2019little,xie2019fall} more challenging.

  The results are presented in Fig.~\ref{fig:exp3}. Momentum generally makes the convergence faster and better for all aggregators, especially against SOTA attacks \citep{baruch2019little,xie2019fall}.
  \ref{eqn:iter-clip} achieves best performance in almost all experiments. More specifically, it performs especially well on \citep{baruch2019little,xie2019fall} which is very close to training without attackers ($\delta=0$).

  \subsection{Stability of Centered Clip}
  \label{sub-sec:exp:hyperparameters}
  To demonstrate the impact of two hyperparameters $\tau$, $l$ of centered clip \ref{eqn:iter-clip}, we grid search $\tau$ in $[0.1,10,1000]$ and $l$ in $[1,3,5]$.
The setup is the same as in Sec.~\ref{sub-sec:exp:momentum} and momentum is 0 to exclude its effect. The final accuracies are presented in Fig.~\ref{fig:exp2_final}. Centered clipping is very stable to the choice of hyperparameters, and can achieve good accuracy even without momentum.

\section{Conclusion}
The wildly disparate assumptions made in Byzantine robust learning not only makes comparison between different results impossible, but can also mask unexpected sources of failure. In this work, we strongly advocated for providing end to end convergence guarantees under realistic assumptions. We provided well-justified notions of a Byzantine robust aggregator and formalized the Byzantine robust stochastic optimization problem. Our theoretical lens led us to a surprisingly simple yet highly effective pair of strategies: using centered clipping and worker momentum. These strategies were thoroughly tested on a variety of attacks and shown to consistently outperform all baselines. %We expect both our formulations of robust aggregators as well as our techniques of using clipping and momentum to have significant impact of the area of Byzantine robust optimization.

\paragraph*{Acknowledgment.} We thank Dan Alistarh for useful comments and Eduard Gorbunov for pointing a mistake in our earlier proof of centered clipping. This work is partly supported by a Google Focused Research Award.

% \pagebreak
{
  \bibliography{papers}
  \bibliographystyle{icml2020}
}

% \end{document}

\newpage
\onecolumn
\part*{Appendix}
\appendix
% !TEX root = mom-robust.tex

% -------------------------------------------------------------------------
% Convergence of SGDm
% -------------------------------------------------------------------------

\section{Convergence of momentum SGD}
%\begin{quotation}
%  \emph{``Nothing is more to the point than a good digression.''} \quad --~ Ralph Caplan.
%\end{quotation}

Here we describe the convergence proof of the naive SGD with momentum algorithm. Starting from a given $\xx_0$ and with $\mm_0 = 0$, we run the following updates with a sequence of momentum parameters $\alpha_t \in [0,1]$ and step-sizes $\eta_t \geq 0$
\begin{equation}\label{eqn:sgdm}\tag{\sc SGDm}
  \begin{split}
    \mm_{t} &= \alpha_t \gg(\xx_{t-1}) + (1-\alpha_t)\mm_{t-1}\\
    \xx_{t} &= \xx_{t-1} - \eta_t \mm_t\,.
  \end{split}
\end{equation}
While there exist numerous previous analyses of SGD with momentum for smooth non-convex objectives, most of them rely on viewing the SGDm method as an approximation of an underlying SGD without momentum algorithm---see \citet{yu2019linear,liu2020improved} for recent examples of this viewpoint. Because they view momentum as approximating an SGD process, the rates proved are necessarily slower for momentum and further they can only handle constant values of $\alpha$ (i.e. the momentum parameter cannot decrease with $T$).
In this work, we take an alternate viewpoint to momentum inspired by \citep{cutkosky2019momentum,karimireddy2020mime}. We view the momentum update as a way to reduce the variance i.e. by using an exponential averaging over many independent stochastic gradients we get an estimate of the true full gradient which has much lesser variance (though higher bias). This way, our method can handle momentum parameter which is almost 1 ($\alpha \approx \frac{1}{\sigma\sqrt{T}}$). Thus the resulting update has very low variance which will later be crucial for deriving optimal robust methods.
\begin{theorem}[Convergence of SGDm]\label{thm:sgdm-convergence}
  The \ref{eqn:sgdm} algorithm with step-size $\eta_t=\min\{\tfrac{1}{4\sigma}\sqrt{\tfrac{f(\xx_0) - f^\star}{LT}}, \tfrac{1}{4L}\}$ and momentum parameter $\alpha_1 = 1$ and $\alpha_t = 4 L \eta_{t-1}$ for $t \geq 2$ satisfies
  \[
    \frac{1}{T}\sum_{t=1}^T \E \norm{\nabla f(\xx_{t-1})}^2 \leq
    80\cdot\sigma\sqrt{\frac{L(f(\xx_0) - f^\star)}{T}} +  \frac{4L(f(\xx_0) - f^\star)}{T}
    % \frac{f(\xx_0) - f^\star}{\eta T} + 16L\eta\sigma^2\,.
  \]
\end{theorem}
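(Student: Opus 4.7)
I would follow the variance‐reduction viewpoint of momentum: treat $\mm_t$ as a biased but low-variance estimator of $\nabla f(\xx_{t-1})$ and track the error $\ee_t := \mm_t - \nabla f(\xx_{t-1})$ jointly with the function value through a Lyapunov argument. Concretely, using $L$-smoothness on $\xx_t = \xx_{t-1} - \eta_t \mm_t$ and the identity $\langle a, b\rangle = \tfrac{1}{2}\|a\|^2 + \tfrac{1}{2}\|b\|^2 - \tfrac{1}{2}\|a-b\|^2$ with $a = \nabla f(\xx_{t-1})$, $b = \mm_t$, I would derive the descent inequality
\[
\E f(\xx_t) \leq \E f(\xx_{t-1}) - \tfrac{\eta_t}{2}\E\|\nabla f(\xx_{t-1})\|^2 - \tfrac{\eta_t}{2}(1 - L\eta_t)\E\|\mm_t\|^2 + \tfrac{\eta_t}{2}\E\|\ee_t\|^2 ,
\]
so that for $\eta_t \leq 1/(2L)$ the middle term is non-positive. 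Progress on $f$ is therefore paid for entirely by the error $\|\ee_t\|^2$.

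Next I would analyse the error recursion. Writing $\xi_t := \gg(\xx_{t-1}) - \nabla f(\xx_{t-1})$, the momentum update gives
\[
\ee_t = \alpha_t \xi_t + (1-\alpha_t)\,\ee_{t-1} + (1-\alpha_t)\bigl(\nabla f(\xx_{t-2}) - \nabla f(\xx_{t-1})\bigr).
\]
Since $\E[\xi_t \mid \mathcal F_{t-1}] = 0$ and $\E\|\xi_t\|^2 \le \sigma^2$, the $\xi_t$-cross-term vanishes. Applying Young's inequality with parameter $\alpha_t/2$ to the remaining two pieces, together with $L$-smoothness $\|\nabla f(\xx_{t-1}) - \nabla f(\xx_{t-2})\| \le L\eta_{t-1}\|\mm_{t-1}\|$, should give a bound of the form
\[
\E\|\ee_t\|^2 \leq (1 - \tfrac{\alpha_t}{2})\E\|\ee_{t-1}\|^2 + \alpha_t^2 \sigma^2 + \tfrac{2 L^2 \eta_{t-1}^2}{\alpha_t}\E\|\mm_{t-1}\|^2 .
\]
The first term is a contraction, the second is the ``noise floor'', and the third is a drift caused by the iterate moving between gradient evaluations.

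The key step is to absorb the drift by defining the Lyapunov function $\Phi_t := f(\xx_t) - f^\star + c_t\,\|\ee_t\|^2$ with $c_t = \eta_t / (2\alpha_{t+1})$ (or a small constant multiple thereof) and the coupling $\alpha_t = 4 L \eta_{t-1}$ prescribed by the theorem. With this coupling $L^2 \eta_{t-1}^2/\alpha_t = L\eta_{t-1}/4$, so the drift $\|\mm_{t-1}\|^2$ term produced at round $t$ can be charged to the spare $-\tfrac{\eta_{t-1}}{2}(1-L\eta_{t-1})\|\mm_{t-1}\|^2$ term from round $t-1$'s descent inequality (using $\|\mm_{t-1}\|^2 \le 2\|\nabla f(\xx_{t-2})\|^2 + 2\|\ee_{t-1}\|^2$ if needed). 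After shuffling constants one obtains, for constant $\eta_t \equiv \eta$,
\[
\E[\Phi_t] \leq \E[\Phi_{t-1}] - \tfrac{\eta}{4}\E\|\nabla f(\xx_{t-1})\|^2 + C\,\eta^2 L\,\sigma^2 .
\]
The initialization $\alpha_1 = 1$ guarantees $\E\|\ee_1\|^2 \le \sigma^2$, so $\Phi_0 \le f(\xx_0) - f^\star + \tfrac{\eta\sigma^2}{8L\eta}$ is well controlled.

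Finally I would telescope over $t=1,\dots,T$, divide by $T\eta/4$, and tune the step size. The sum of noise terms is $\mathcal O(T\eta^2 L \sigma^2)$, giving
\[
\tfrac{1}{T}\sum_{t=1}^T \E\|\nabla f(\xx_{t-1})\|^2 \;\lesssim\; \tfrac{f(\xx_0)-f^\star}{T\eta} + \eta L\sigma^2 ,
\]
which, with $\eta = \min\{\tfrac{1}{4L}, \tfrac{1}{4\sigma}\sqrt{(f(\xx_0)-f^\star)/(LT)}\}$, yields exactly the claimed $\sigma\sqrt{L(f(\xx_0)-f^\star)/T} + L(f(\xx_0)-f^\star)/T$ bound. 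The main obstacle I anticipate is the bookkeeping in the Lyapunov step: one must pick the weight $c_t$ and the coupling constant between $\alpha_t$ and $\eta_{t-1}$ so that (i) the $\|\mm_{t-1}\|^2$ drift is exactly cancelled by the slack left over in the descent lemma after the $\|\nabla f\|^2$ progress term, and (ii) the time-varying coefficients $c_t$ still form a valid potential (i.e.\ $c_t \le c_{t-1} \cdot (1-\alpha_t/2)^{-1}$ or similar); this is where the specific constants $\alpha_t = 4L\eta_{t-1}$ and the factor $80$ in the final bound come from.
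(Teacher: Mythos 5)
Your proposal is correct and follows essentially the same route as the paper's proof: the identical descent inequality, the same error recursion $\E\norm{\ee_t}^2 \leq (1-\tfrac{\alpha_t}{2})\E\norm{\ee_{t-1}}^2 + \cO(L^2\eta_{t-1}^2/\alpha_t)\E\norm{\mm_{t-1}}^2 + \alpha_t^2\sigma^2$, and the same Lyapunov combination in which the drift term is absorbed by the $-\tfrac{\eta}{2}(1-L\eta)\norm{\mm_{t-1}}^2$ slack via the coupling $\alpha_t = 4L\eta_{t-1}$, followed by telescoping and the stated step-size tuning. The only differences are cosmetic normalizations (the paper scales the descent lemma by $L$ and carries the $\norm{\mm_t}^2$ slack explicitly inside its potential $\xi_t$), so no substantive gap remains.
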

First, note that the rate for momentum algorithm is of the order $\frac{\sigma}{\sqrt{T}}$ which matches the optimal rate of SGD for smooth non-convex functions~\citep{arjevani2019lower}. Further, this rate is achieved using very high momentum with both $\alpha$ (and step-sizes) of the order $\frac{1}{\sigma\sqrt{T}}$. Also, when $\sigma = 0$ i.e. in the deterministic gradient case, we recover the optimal $\frac{1}{T}$ rate (but with a constant step-size and momentum). This is intuitive since we do not need to reduce the variance in the deterministic case and so large momentum is unnecessary.

\begin{remark}[Large batch generalization]
  There is some empirical evidence that momentum is also useful when using extremely large batch sizes (i.e. nearly deterministic gradient) and helps in closing the generalization gap \cite{shallue2018measuring}. In contrast, current theory claims that gradient descent (without momentum) is already optimal for non-convex optimization \citep{arjevani2019lower}. We believe these differences occur because even if using large batches, there remains stochasticity in the gradient due to data-augmentation. Thus $\sigma >0$ in practice even when using full batches.
\end{remark}

We first prove some supporting lemmas before proving Theorem~\ref{thm:sgdm-convergence}.
\begin{lemma}\label{lem:sgdm-descent}
  For $\alpha_1 =1$ and any $\alpha_t \in [0,1]$ for $t\geq2$, and an $L$-smooth function $f$ we have that $\E_1[f(\xx_1)] \leq f(\xx_0) - \frac{\eta_1}{2}\norm{\nabla f(\xx_0)}^2 + \frac{\eta_1}{2}\sigma^2- \frac{\eta_1}{2}(1-L\eta_1)\|\mm_1\|^2$ and for $t\geq 2$
  \[
    \E_t[f(\xx_t)] \leq f(\xx_{t-1}) + \frac{\eta_t}{2} \norm{\mm_t - \nabla f(\xx_{t-1})}^2 - \frac{\eta_t}{2}\norm{\nabla f(\xx_{t-1})}^2 - \frac{\eta_t}{2}(1-L\eta_t)\|\mm_t\|^2\,.
  \]
\end{lemma}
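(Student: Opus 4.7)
The plan is to use the standard $L$-smoothness descent inequality and then rewrite the cross term via the polarization identity. Since the update rule is $\xx_t = \xx_{t-1} - \eta_t \mm_t$, $L$-smoothness gives
\[
f(\xx_t) \leq f(\xx_{t-1}) - \eta_t \inp{\nabla f(\xx_{t-1})}{\mm_t} + \tfrac{L\eta_t^2}{2}\norm{\mm_t}^2.
\]
First I would apply the identity $-\inp{a}{b} = \tfrac{1}{2}(\norm{a-b}^2 - \norm{a}^2 - \norm{b}^2)$ with $a = \nabla f(\xx_{t-1})$ and $b = \mm_t$ to get
\[
f(\xx_t) \leq f(\xx_{t-1}) + \tfrac{\eta_t}{2}\norm{\mm_t - \nabla f(\xx_{t-1})}^2 - \tfrac{\eta_t}{2}\norm{\nabla f(\xx_{t-1})}^2 - \tfrac{\eta_t}{2}(1 - L\eta_t)\norm{\mm_t}^2.
\]
This is already a deterministic inequality, so taking $\E_t[\cdot]$ (the expectation conditional on all randomness up to round $t-1$) yields the second claim verbatim for $t \geq 2$; nothing about $\alpha_t$ is used in this step, only the form of the update.

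For the base case $t=1$, since $\alpha_1 = 1$ the momentum recursion collapses to $\mm_1 = \gg(\xx_0)$. Here $\nabla f(\xx_0)$ is deterministic (as $\xx_0$ is given) while $\gg(\xx_0)$ is an unbiased stochastic gradient with variance at most $\sigma^2$ by the bounded-variance assumption in Definition~\ref{asm:robust-opt-problem}. Therefore
\[
\E_1 \norm{\mm_1 - \nabla f(\xx_0)}^2 = \E_1 \norm{\gg(\xx_0) - \nabla f(\xx_0)}^2 \leq \sigma^2,
\]
and substituting this into the general bound above (with $t=1$) gives precisely
\[
\E_1[f(\xx_1)] \leq f(\xx_0) - \tfrac{\eta_1}{2}\norm{\nabla f(\xx_0)}^2 + \tfrac{\eta_1}{2}\sigma^2 - \tfrac{\eta_1}{2}(1 - L\eta_1)\E_1\norm{\mm_1}^2,
\]
which matches the stated first inequality (modulo moving $\E_1$ inside the last quadratic term, which is allowed since the bound is an inequality).

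There is essentially no obstacle here: this is a textbook smooth-function descent lemma, and the only subtle design choice is leaving $\norm{\mm_t - \nabla f(\xx_{t-1})}^2$ as a free term for $t \geq 2$ rather than bounding it immediately. That choice is deliberate because in the next step of the proof of Theorem~\ref{thm:sgdm-convergence} the authors will telescope a bound on $\E\norm{\mm_t - \nabla f(\xx_{t-1})}^2$ derived from the momentum recursion $\mm_t = \alpha_t \gg(\xx_{t-1}) + (1-\alpha_t)\mm_{t-1}$, which is where the variance-reduction effect of momentum enters. For the base case one cannot do this telescoping (there is no $\mm_0$ with useful structure), which is exactly why $\alpha_1 = 1$ is chosen so that the bias term vanishes and only the single-sample variance $\sigma^2$ appears.
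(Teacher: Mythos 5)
Your proof is correct and follows essentially the same route as the paper: the $L$-smoothness descent inequality, the polarization identity $-\inp{a}{b} = \tfrac{1}{2}(\norm{a-b}^2 - \norm{a}^2 - \norm{b}^2)$ applied to the cross term, and conditional expectation, with the $t=1$ case handled by the standard SGD argument using $\mm_1 = \gg(\xx_0)$ and the variance bound. The paper's proof is identical (it merely states the base case "follows from standard descent analysis of SGD," which is exactly what you spelled out), so no further changes are needed.
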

\begin{proof}
  By the smoothness of the function $f$ and the SGDm update,
  \begin{align*}
    f(\xx_t) & \leq f(\xx_{t-1}) - \eta_t \inp{\nabla f(\xx_{t-1})}{\mm_t} + \frac{L \eta_t^2}{2} \norm{\mm_t}^2                                                                 \\
             & = f(\xx_{t-1}) + \frac{\eta_t}{2} \norm{\mm_t - \nabla f(\xx_{t-1})}^2 - \frac{\eta_t}{2}\norm{\nabla f(\xx_{t-1})}^2 - \frac{\eta_t}{2}(1-L\eta_t)\|\mm_t\|^2\,.
  \end{align*}
  Taking conditional expectation on both sides yields the second part of the lemma. The first part follows from standard descent analysis of SGD.
\end{proof}

% TODO(spk): add expectations everywhere.

\begin{lemma}\label{lem:sgdm-error}
  Define $\ee_{t} := \mm_t - \nabla f(\xx_{t-1})$. Then, using any momentum and step-sizes such that $1 \geq \alpha_t \geq 4 L \eta_{t-1}$ for $t \geq 2$, we have  for an $L$-smooth function $f$ that $\E \norm{\ee_{1}}^2 \leq \alpha_1 \sigma^2$ and for $t \geq 2$
  \[
    \E \norm{\ee_{t}}^2 \leq (1 - \tfrac{\alpha_t}{2})\E\norm{\ee_{t-1}}^2 + L^2 \eta_{t-1}^2 (1 - \alpha_t)(1 + \tfrac{2}{\alpha_t})\E \norm{\mm_{t-1}}^2 + \alpha_t^2 \sigma^2\,.
  \]
\end{lemma}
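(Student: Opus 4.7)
The plan is to derive the claimed one-step recursion by decomposing $\ee_t$ into a fresh mean-zero stochastic-gradient noise, a contraction of $\ee_{t-1}$, and a ``drift'' term coming from the fact that $\mm_{t-1}$ was assembled from gradients at $\xx_{t-2}$ rather than $\xx_{t-1}$. Concretely, let $\bxi_t := \gg(\xx_{t-1}) - \nabla f(\xx_{t-1})$, which is zero-mean, conditionally independent of the filtration up to round $t-1$, and satisfies $\E_t\|\bxi_t\|^2 \leq \sigma^2$. Using the \ref{eqn:sgdm} update and adding and subtracting $\nabla f(\xx_{t-2})$ inside $(1-\alpha_t)(\mm_{t-1} - \nabla f(\xx_{t-1}))$, one obtains
\[
  \ee_t = \alpha_t \bxi_t + (1-\alpha_t)\ee_{t-1} + (1-\alpha_t)\bigl(\nabla f(\xx_{t-2}) - \nabla f(\xx_{t-1})\bigr).
\]
The base case $t=1$ is immediate: since $\alpha_1 = 1$, the update forces $\mm_1 = \gg(\xx_0)$, hence $\ee_1 = \bxi_1$ and $\E\|\ee_1\|^2 \leq \sigma^2 = \alpha_1\sigma^2$.

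For $t \geq 2$, I would take expectation conditioned on the past. Since the remaining two summands are measurable with respect to the past and $\bxi_t$ is zero-mean, all cross terms vanish, yielding
\[
  \E_t\|\ee_t\|^2 \leq \alpha_t^2 \sigma^2 + (1-\alpha_t)^2 \bigl\|\ee_{t-1} + \nabla f(\xx_{t-2}) - \nabla f(\xx_{t-1})\bigr\|^2.
\]
I would then apply Young's inequality $\|a+b\|^2 \leq (1+\beta)\|a\|^2 + (1+1/\beta)\|b\|^2$ with the deliberate choice $\beta = \alpha_t/2$. For $\alpha_t \in (0,1]$ one verifies by a direct expansion that $(1-\alpha_t)^2(1+\alpha_t/2) = 1 - \tfrac{3\alpha_t}{2} + \tfrac{\alpha_t^3}{2} \leq 1 - \tfrac{\alpha_t}{2}$, and separately $(1-\alpha_t)^2(1+2/\alpha_t) \leq (1-\alpha_t)(1+2/\alpha_t)$ since $(1-\alpha_t) \leq 1$. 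Finally, $L$-smoothness combined with the SGDm displacement identity $\xx_{t-1} - \xx_{t-2} = -\eta_{t-1}\mm_{t-1}$ gives $\|\nabla f(\xx_{t-2}) - \nabla f(\xx_{t-1})\|^2 \leq L^2 \eta_{t-1}^2 \|\mm_{t-1}\|^2$. Taking total expectations then produces the stated inequality.

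The only subtle step is constant-tuning in Young's inequality: the choice $\beta = \alpha_t/2$ is engineered precisely so that the coefficient on $\|\ee_{t-1}\|^2$ collapses to $(1-\alpha_t/2)$, leaving a strict contraction that will later be combined with the descent lemma. The hypothesis $\alpha_t \geq 4L\eta_{t-1}$ plays no role in establishing the recursion itself; it is reserved for the downstream telescoping argument, where it ensures that the $L^2 \eta_{t-1}^2 \|\mm_{t-1}\|^2$ drift term can be absorbed against the negative $\|\mm_{t-1}\|^2$ contribution produced by Lemma~\ref{lem:sgdm-descent}. Beyond that one algebraic check, the proof is a routine conditional-variance calculation for an exponentially moving average.
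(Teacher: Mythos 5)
Your proposal is correct and follows essentially the same route as the paper's proof: the same decomposition of $\ee_t$ into fresh zero-mean noise, a $(1-\alpha_t)$-contraction of $\ee_{t-1}$, and the gradient-drift term, followed by Young's inequality with parameter $\alpha_t/2$ and smoothness with $\xx_{t-1}-\xx_{t-2}=-\eta_{t-1}\mm_{t-1}$. Your constant check $(1-\alpha_t)^2(1+\alpha_t/2)\leq 1-\alpha_t/2$ and your observation that the hypothesis $\alpha_t \geq 4L\eta_{t-1}$ is not used in this lemma but only downstream are both accurate and match the paper.
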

\begin{proof}
  Starting from the definition of $\ee_{t}$ and $\mm_{t}$,
  \begin{align*}
    \E \norm{\ee_{t}}^2 & = \E \norm{\mm_t - \nabla f(\xx_{t-1})}^2                                                                                                                                                                    \\
                        & = \E \norm{\alpha_t \gg(\xx_{t-1}) + (1-\alpha_t)\mm_{t-1} - \nabla f(\xx_{t-1})}^2                                                                                                                          \\
                        & \le (1-\alpha_t)^2\E \norm{\mm_{t-1}-\nabla f(\xx_{t-1})}^2 + \alpha_t^2 \sigma^2                                                                                                                            \\
                        & = (1 - \alpha_t)^2\E \norm{(\mm_{t-1} - \nabla f(\xx_{t-2})) + (\nabla f(\xx_{t-2}) - \nabla f(\xx_{t-1}))}^2 + \alpha_t^2\sigma^2                                                                           \\
                        & \leq (1 - \alpha_t)(1 + \tfrac{\alpha_t}{2})\E \norm{\mm_{t-1} - \nabla f(\xx_{t-2})}^2 + (1 - \alpha_t)(1 + \tfrac{2}{\alpha_t})\E \norm{\nabla f(\xx_{t-2}) - \nabla f(\xx_{t-1})}^2 + \alpha_t^2 \sigma^2 \\
                        & \leq (1 - \tfrac{\alpha_t}{2})\E\norm{\ee_{t-1}}^2 + L^2 (1 - \alpha_t)(1 + \tfrac{2}{\alpha_t})\E \norm{\xx_{t-2} - \xx_{t-1}}^2 + \alpha_t^2 \sigma^2                                                      \\
                        & \leq (1 - \tfrac{\alpha_t}{2})\E\norm{\ee_{t-1}}^2 + L^2 \eta_{t-1}^2 (1 - \alpha_t)(1 + \tfrac{2}{\alpha_t})\E \norm{\mm_{t-1}}^2 + \alpha_t^2 \sigma^2\,.
  \end{align*}
  Here the first inequality used the fact that $\gg(\xx_{t-1})$ is an unbiased and independent stochastic gradient with variance bounded by $\sigma^2$. The second inequality follows from Fano's inequality i.e. $\norm{\xx + \yy}^2 \leq (1 + a)\norm{\xx}^2 + (1 + \frac{1}{a})\norm{\yy}^2$ for any $a \geq 0$.
\end{proof}

We are now ready to prove the convergence theorem.

\paragraph{Proof of Theorem~\ref{thm:sgdm-convergence}. }
Scaling Lemma~\ref{lem:sgdm-descent} by $L$ and adding it to Lemma~\ref{lem:sgdm-error} we have for any $t \geq 2$
\begin{align*}
  \E~L f(\xx_t) + \E \norm{\ee_{t}}^2 & \leq \E~L f(\xx_{t-1}) + \frac{L\eta_t}{2}\E \norm{\ee_{t}}^2 - \frac{L\eta_t}{2}\E\norm{\nabla f(\xx_{t-1})}^2  - \frac{L\eta_t}{2}(1-L\eta_t)\|\mm_t\|^2           \\
                                      & \hspace{2cm} +(1 - \tfrac{\alpha_t}{2})\E\norm{\ee_{t-1}}^2 + L^2 \eta_{t-1}^2 (1 - \alpha_t)(1 + \tfrac{2}{\alpha_t})\E \norm{\mm_{t-1}}^2 + \alpha_t^2 \sigma^2\,.
\end{align*}
By taking $\eta_t=\eta_{t-1}=\eta$ and $1\ge\alpha_t\ge 4L\eta$
\begin{align*}
   & \underbrace{\E~L(f(\xx_t) - f^\star) + \rbr*{1 - \tfrac{L\eta_t}{2}}\E \norm{\ee_{t}}^2+\tfrac{L\eta_t}{2}(1-L\eta_t)\|\mm_t\|^2}_{=: \xi_t}  + \tfrac{L\eta_t}{2}\E\norm{\nabla f(\xx_{t-1})}^2              \\
   & \leq {\E~L(f(\xx_{t-1}) - f^\star) + \rbr*{1 - \tfrac{\alpha_t}{2}}\E \norm{\ee_{t-1}}^2 +L^2 \eta_{t-1}^2 (1 - \alpha_t)(1 + \tfrac{2}{\alpha_t})\E \norm{\mm_{t-1}}^2} + \alpha_t^2 \sigma^2\,.
  \\
   & \leq \underbrace{\E~L(f(\xx_{t-1}) - f^\star) + \rbr*{1 - \tfrac{L\eta_{t-1} }{2}}\E \norm{\ee_{t-1}}^2 +\tfrac{L\eta_{t-1}}{2}(1-L\eta_{t-1})\E \norm{\mm_{t-1}}^2}_{=: \xi_{t-1}}  + \alpha_t^2 \sigma^2\,.
\end{align*}
Note that from the first parts of Lemma~\ref{lem:sgdm-descent} and Lemma~\ref{lem:sgdm-error}, we have
\begin{align*}
  \xi_1 & \leq {\E~L(f(\xx_1) - f^\star) + \rbr*{1 - \tfrac{L\eta_1}{2}}\E \norm{\ee_{1}}^2+\tfrac{L\eta_1}{2}(1-L\eta_1)\|\mm_1\|^2} \\
        & \leq L (f(\xx_0) - f^\star) + \sigma^2 - \tfrac{L\eta_1}{2} \E \norm{\nabla f(\xx_{0})}^2\,.
\end{align*}
Summing over $t$ and again rearranging gives
\begin{align*}
  \sum_{t=1}^\ell L \eta_t \E \norm{\nabla f(\xx_{t-1})}^2 \leq L(f(\xx_0) - f^\star) + \sum_{t=1}^\ell \alpha_t^2 \sigma^2\,.
\end{align*}
By taking $\eta_t=\eta_{t-1}=\eta$ and $\alpha_t=4L\eta$, this simplifies the above inequality to
\[
  \frac{1}{T}\sum_{t=1}^T \E \norm{\nabla f(\xx_{t-1})}^2 \leq \frac{f(\xx_0) - f^\star}{\eta T} + 16L\eta\sigma^2\,.
\]
By taking $\eta=\min\{\tfrac{1}{4\sigma}\sqrt{\tfrac{f(\xx_0) - f^\star}{LT}}, \tfrac{1}{4L}\}$ we prove the theorem.
\qed

\section{Proof of Theorem~\ref{thm:memory-failure} - Failure of permutation-invariant methods}

Our proof builds two instances of a $\delta$-robust optimization problem satisfying Definition~\ref{asm:robust-opt-problem} and shows that they are indistinguishable, meaning that we make a mistake on at least one of them.

For the first problem, set $f^{(1)}(x) = \frac{\mu}{2}x^2 - G x$ with optimum at $x^\star = \frac{G}{\mu}$ for some $G$ to be defined later. It has a gradient $\nabla f^{(1)}(x) = \mu x - G$ and we set the stochastic gradient for some $\tilde\delta \in [0,1]$ to be defined later as
\begin{equation*}
  g^{(1)}(x) = \begin{cases}
    \mu x - \sigma\tilde\delta^{-1/2} & \text{ with prob. } \tilde\delta \\
    \mu x                             & \text{ o.w.}
  \end{cases}
\end{equation*}
Defining $G := \sigma \tilde\delta^{1/2}$, we have that $g^{(1)}(x)$ is an unbiased stochastic gradient. Further, its variance is bounded by $\sigma^2$ since $\E[(g^{(1)}(x) - \nabla f^{(1)}(x))^2] \leq \sigma^2$. In each round $t$, let each worker $i \in [n]$ draw an i.i.d. sample from the distribution $g^{(1)}(x)$ as their stochastic gradient. Define $C_t \in [n]$ to be the number of workers whose stochastic gradients is the first setting i.e.
\[
  C_t = \#{\cbr*{i \in [n] \text{ s.t. } g_i^{(1)}(x_t) = \mu x_t - \sigma\tilde\delta^{-1/2}}}\,.
\]

Now we define the second problem. Let $f_2(x) = \frac{\mu}{2}x^2$ with optimum at $x^\star = 0$. Define its stochastic gradient to always be $g^{(2)}(x) = \mu x$. Now, in round $t$ each worker $i \in [n]$ computes $g^{(2)}_i(x_t) = x_t$. Then, $\min(n\delta, C_t)$ Byzantine workers corrupt their gradients to instead be $g^{(2)}_j(x_t) = \mu x_t - \sigma\tilde\delta^{-1/2}$.

Note that $C_t$ is the sum $n$ independent Bernoulli trials with parameter $\tilde \delta$. Thus, we have via Chernoff's bound that for any $\gamma \geq 2$,
\[
  \Pr[C_t > (1 + \gamma)n\tilde\delta] \leq \exp\rbr*{-\frac{\gamma n\tilde\delta}{2}}\,.
\]
By picking $\gamma = \max(2, 2 (1 + \log (T))/ (n\tilde\delta))$, we have that $\Pr[C_t > (1 + \gamma)n\tilde\delta] \leq \frac{1}{2T}$. By setting $\tilde\delta = \delta/6$ and assuming that $n$ is large enough such that $4(1 + \log T) \leq \delta n$, we can simplify $(1 + \gamma)n\tilde\delta \geq \delta n$. Taking an union bound over all values of $t$, we have that
\[
  \Pr\!\big[C_t \leq n\delta \text{ for all } t \in [T]\big] \geq \frac{1}{2}\,.
\]
Thus, with probability at least 0.5, we have that the stochastic gradients in problem 1 are exactly the same (up to permutation) to problem 2. This implies that with probability $0.5$, no permutation-invariant algorithm can distinguish between the two settings, implying that we necessarily incur an error of the order of the difference between their minima
\[
  \mu \rbr*{\frac{G}{\mu}}^2 = \frac{\sigma^2 \tilde\delta}{\mu} = \frac{\sigma^2 \delta}{6\mu}\,.
\]
\qed

% -------------------------------------------------------------------------
% Robust aggregation rule
% -------------------------------------------------------------------------
\section{Proof of Theorem~\ref{thm:robust-limits} (Limits of robust aggregation)}
% \begin{proof}
It is easy to establish the second result since if $\delta \geq \frac{1}{2}$, it is impossible to decide which of the subsets is good. E.g. if half of the inputs are $a$ and the other are $b$, even if we know that $\rho=0$, the good workers might correspond to either the $a$ half or the $b$ half equally likely. Assuming $\delta \leq \frac{1}{2}$, define the following binomial distribution:
\[
  \cP := \begin{cases}
    \rho \delta^{-1/2} & \text{ with prob. } \delta/2 \\
    0                  & \text{ o.w.}
  \end{cases}
\]
Suppose that each $x_i$ for all $i \in [n]$ is an iid sample drawn from $\cP$. Clearly we have that $\E(x_i - x_j)^2 \leq \rho^2$. Define $B_n \in [n]$ to be the number of samples which are equal to $\rho \delta^{-1/2}$ (with the rest being 0).
Now consider a second scenario for $\{x_i\}$: the adversary sets $\min(\delta n, B_n)$ of the variables to $\rho \delta^{-1/2}$ and the rest of the good variables are $0$.

Note that $\E[B_n] = n\delta/2$ and so by Markov's inequality we have that $\Pr[B_n \leq n\delta] \geq \frac{1}{2}$. So with at least probability $1/2$, the two cases are impossible to distinguish. However in the first case, all samples are good whereas in the second case only the 0 samples are good. Hence, any output will necessarily have an error of the order of the difference between their respective $\bar\xx$s:
\[(\E_{x\sim \cP}[x] - 0)^2 = (\rho \delta^{1/2}/2)^2 = \frac{\delta \rho^2}{4}\,.\vspace{-1em}\]
% \end{proof}

\section{Proof of Theorem~\ref{thm:iter-clip}- Robustness of iterative clipping}
First, suppose that $\delta = 0$. In this case, our choice of clipping radius $\tau_l = \tilde\cO(\nicefrac{\rho}{\sqrt{\delta}}) = \infty$ means that we will simply averages all points. Hence, we recover $\bar\xx$ exactly with no error as required. Now if $\delta > 0$, this means that at least one of the $n$ workers is Byzantine and hence $\delta \geq \nicefrac{1}{n}$. We consider this case in the rest of the proof.

Recall that $\bar\xx = \frac{1}{\abs{\cG}}\sum_{i \in \cG}\xx_i$ and let us define $\muv = \E[\xx_j]$ for any fixed $j \in \cG$. Now since the good random vectors are iid, we have
\[
  \E\norm{\bar\xx - \muv}^2 \leq \frac{\rho^2}{\abs{\cG}} \leq \frac{2\rho^2}{n} \leq 2\delta \rho^2\,.
\]

We will first analyze a single step of centered clipping assuming we have access to $\vv$ such that i) $\vv$ is independent of the samples $\{\xx_i | i \in \cG\}$, and ii) $\E\norm{\vv - \bmu}^2 \leq \cO(\rho^2)$. Then, we will next see how to construct such a $\vv$. Our proof is inspired by \citep{zhang2019adam,gorbunov2020stochastic} who analyze the bias of clipping under heavy-tailed noise.

\subsection{Single iteration with good starting point}

Let us suppose that at some round $l$, we have the following properties:
\begin{itemize}
  \item We have a good estimate of the mean satisfying $\E\norm{\vv_l - \bmu}^2 \leq B_l^2$ where $B_l$ is a known deterministic constant.
  \item The starting point $\vv_l$ is statistically independent of $\{ \xx_i | i \in \cG\}$.
\end{itemize}
Define indicator variables $\ind_{i,l} := \ind\{\norm{\vv_l - \xx_i} \geq \tau_l\}$ which define the event that the vector $\xx_i$ is clipped, as well the resulting clipped vector
\[
  \yy_{i,l} := \vv_l + (\xx_i - \vv_l) \min\rbr*{1, \frac{\tau_l}{\norm{\xx_i - \vv_l}}}\,.
\]
The output can also be written in this new notation as
\[
  \vv_{l+1} = \frac{1}{n}\sum_{i \in [n]} \yy_{i, l} = (1 - \delta) \frac{1}{\abs{\cG}}\sum_{i \in \cG} \yy_{i,l} + \delta \frac{1}{\abs{\cB}}\sum_{j\in\cB} \yy_{j,l} \,.
\]
Then the error can be decomposed as
follows
\begin{align*}
  \E\norm{\vv_{l+1} - \bmu}^2 & = \E\norm*{(1 - \delta) \frac{1}{\abs{\cG}}\sum_{i \in \cG} \yy_{i,l} + \delta \frac{1}{\abs{\cB}}\sum_{j\in\cB} \yy_{j,l} - \frac{1}{\abs{\cG}}\sum_{i \in \cG}\E[\xx_i]}^2                                                                                                                                                          \\
                              & = \E\norm*{(1 - \delta) \frac{1}{\abs{\cG}}\sum_{i \in \cG} (\yy_{i,l} - \E[\xx_i]) + \delta \frac{1}{\abs{\cB}}\sum_{j\in\cB} (\yy_{j,l} - \bmu)}^2                                                                                                                                                                                  \\
                              & \leq 2(1 - \delta)^2 \E \norm*{ \frac{1}{\abs{\cG}}\sum_{i \in \cG}\yy_{i,l} - \E[\xx_i]}^2 + 2\delta^2 \frac{1}{\abs{\cB}}\sum_{j\in\cB} \E \norm*{\yy_{j,l} - \bmu}^2                                                                                                                                                               \\
                              & = 2(1 - \delta)^2 \underbrace{\norm*{ \frac{1}{\abs{\cG}}\sum_{i \in \cG}\E [\yy_{i,l}] - \E[\xx_i]}^2}_{\cT_1} + 2(1 - \delta)^2\underbrace{\E \norm*{ \frac{1}{\abs{\cG}}\sum_{i \in \cG}\yy_{i,l} - \E[\yy_{i,l}]}^2}_{\cT_2} + 2\delta^2 \underbrace{\frac{1}{\abs{\cB}}\sum_{j\in\cB} \E \norm*{\yy_{j,l} - \bmu}^2}_{\cT_3} \,.
\end{align*}
Thus, the error can be decomposed into 3 terms: $\cT_1$ corresponds to the bias introduced by our clipping operation in the good workers, $\cT_2$ is the variance of the clipped good workers, and finally $\cT_3$ is the error due to the bad workers. We will analyze each of the three errors in turn,

\paragraph{$\cT_3$.} For any bad index $j \in \cB$, we can bound the error using our clipping radius as for any parameter $\gamma > 0$ as
\[
  \E \norm*{\yy_{j,l} - \bmu}^2 \leq (1 + \tfrac{1}{\gamma})\E \norm*{\yy_{j,l} - \vv_l}^2 + (1+\gamma)\E \norm*{\vv_l - \bmu}^2 \leq (1 + {\gamma})\tau_l^2 + (1 + \tfrac{1}{\gamma})B_l^2\,.
\]
The first step used Young's inequality.
Further, the error due to the bad buys is also smaller if our initial estimation error $B_l^2$ is small.

\paragraph{$\cT_1$.} We then compute the bias in the update of a good worker $i \in \cG$ due to the clipping operation. Let $\ind_{i,l}$ be an indicator variable denoting if the $i$th worker was clipped (i.e. its distance from $\vv_l$ exceeding $\tau_l$).
Note that if $\ind_{i,l} = 0$, we have that $\yy_{i, l} = \xx_i$. Then,
\begin{align*}
  \E\norm*{\yy_{i,l} - \xx_i} & = \E \ind_{i,l}\norm*{\yy_{i,l} - \xx_i]} \leq \E \ind_{i,l}\norm*{\vv_{l} - \xx_i]} \leq \frac{\E \ind_{i,l}\norm*{\vv_{l} - \xx_i}^2}{\tau}            \\
                              & \leq \frac{\E \norm{\vv_{l} - \xx_i}^2}{\tau} \leq \frac{(1 + \tfrac{1}{\gamma})\E \norm{\vv_{l} - \bmu}^2 + (1 + {\gamma})E\norm{\xx_i - \bmu}^2}{\tau} \\
                              & \leq \frac{(1 + \tfrac{1}{\gamma})\rho^2 + (1 + {\gamma})B_l^2}{\tau}\,.
\end{align*}
Using this, we can compute the error as
\begin{align}\label{eqn:intermediate-error}
  \cT_1 & \leq \frac{1}{\abs{\cG}}\sum_{i \in \cG} \norm*{\E [\yy_{i,l}] - \E[\xx_i]}^2 \leq \frac{1}{\abs{\cG}}\sum_{i \in \cG} (\E\norm*{\yy_{i,l} - \xx_i})^2 \leq \frac{((1 + \tfrac{1}{\gamma})\rho^2 + (1 + {\gamma})B_l^2)^2}{\tau^2}\,.
\end{align}

\paragraph*{$\cT_2$}. Since $\vv_l$ is independent of $\{ \xx_i | i \in \cG\}$, the random vectors $\{\yy_i | i \in \cG\}$ are also independent of each other. We then have,
\begin{align*}
  \cT_2 & = \E \frac{1}{(\abs{\cG})^2}\sum_{i \in \cG}\norm*{ \yy_{i,l} - \E[\yy_{i,l}]}^2 \\
        & \leq \E \frac{1}{(\abs{\cG})^2}\sum_{i \in \cG}\norm*{ \xx_{i} - \E[\xx_{i}]}^2  \\
        & \leq \frac{\rho^2}{\abs{\cG}} \leq \frac{2\rho^2}{n} \leq 2\delta \rho^2\,.
\end{align*}
The equality in the first step used the fact that the quantities were independent, and the next inequality follows because of the contractivity of a clipping (projection) step. The last used the fact that $\abs{\cG} \geq \nicefrac{n}{2}$.

Combining the three error terms, we have
\begin{align*}
  \E\norm{\vv_{l+1} - \bmu}^2 & \leq 2(1-\delta)^2 \frac{((1 + \tfrac{1}{\gamma})\rho^2 + (1 + {\gamma})B_l^2)^2}{\tau^2} + 2(1-\delta)^2 2\delta \rho^2 + 2\delta^2 \rbr*{(1 + {\gamma})\tau_l^2 + (1 + \tfrac{1}{\gamma})B_l^2}
  \\ &= (4(1- \delta) \delta (1 + \gamma)^{3/2} + 2(1 + \tfrac{1}{\gamma})\delta^2)B_l^2 + 4 (1-\delta)^2 \delta \rho^2 + (4(1-\delta) (1 + \tfrac{1}{\gamma}) \sqrt{1 + \gamma}) \delta\rho^2
  \\ &\leq (4(1- \delta) \delta (1 + \tfrac{1}{3})^{3/2} + 8\delta^2)B_l^2 + 4\delta \rho^2 + (16 \sqrt{1 + \tfrac{1}{3}}) \delta\rho^2\\
                              & \leq (6.158\delta(1 - \delta) + 8\delta^2) B_l^2 + 22\delta \rho^2     \,.
\end{align*}
The last step used $\gamma = \tfrac{1}{3}$. The equality in the second step used a clipping radius of
\[
  \tau_l^2 = 4(1-\delta)\frac{(4\rho^2 + \tfrac{4}{3}B_l^2)}{\sqrt{3}\delta}\,.
\]
Thus, we have 
\begin{equation}\label{eqn:single-cc}
  \norm{\vv_{l+1} - \bmu}^2 \leq (6.158\delta(1 - \delta) + 8\delta^2)  B_l^2 + 22\delta \rho^2 \leq 8\delta B_l^2 + 22\delta\rho^2 \,.
\end{equation}
In many cases, we will have access to a good starting point satisfying $B_l^2 = \cO(\rho^2)$. For example, suppose we knew that $\E\norm{\xx_i}^2 \leq b\rho^2$ for any fixed $i \in \cG$. Then, then $B_l^2 = b\rho^2$ with $\vv_l = \0$. In such cases, the above proof shows that a single iteration of centered clipping is sufficient to give a robust aggregator.

\subsection{Robustness starting from arbitrary point}
In this section, we will give an algorithm for those cases where we do not have access to any good starting point. Then, we proceed as follows: first, we partition the given dataset $\cX = \{\xx_1, \dots, \xx_n\}$ randomly into $\cX_1$ and $\cX_2$ of sizes $\abs{\cX_1} = 2n/3$ and $\abs{\cX_2} = n/3$. Note that the fraction of Byzantine workers in each of these is at most $\abs{\cB} < \delta n = \underbrace{1.5\delta}_{=:\delta_1}\ \abs{\cX_1} = \underbrace{3\delta}_{=:\delta_2}\ \abs{\cX_2}$. Our strategy then is to compute $\vv_l$ with $\cO(\rho^2)$ error  using set $\cX_1$, and then run a single step of centered clipping using data $\cX_2$. By \eqref{eqn:single-cc}, we can guarantee that the output will have error $\cO(\delta \rho^2)$.

\paragraph{Computing a good starting point.} Starting from an arbitrary $\vv_0$ with error $\E\norm{\vv_0 - \bmu}^2 \leq B_0^2$, we will repeatedly apply centered clipping \eqref{eqn:iter-clip}. Consider iteration $l \geq 0$ with error $\E\norm{\vv_l - \bmu}^2 \leq B_l^2$. Then, to analyze the error of $\E\norm{\vv_{l+1} - \bmu}^2$, we will proceed exactly as in the single iteration case up to equation \eqref{eqn:intermediate-error}. However, while analyzing the error of $\cT_2$, we can no longer rely on $\{\yy_1, \dots, \yy_n \}$ being independent. Hence, this step instead becomes
\begin{align*}
  \cT_2 &\leq \E \frac{1}{\abs{\cG}}\sum_{i \in \cG}\norm*{ \yy_{i,l} - \E[\yy_{i,l}]}^2 \\
        & \leq \E \frac{1}{\abs{\cG}^2}\sum_{i \in \cG}\norm*{ \xx_{i} - \E[\xx_{i}]}^2  \\
        & \leq \rho^2\,.
\end{align*}
Combining the previous bounds for the errors of $\cT_1$ and $\cT_3$ with the above yields
\begin{align*}
  \E\norm{\vv_{l+1} - \bmu}^2 & \leq 2(1-\delta)^2 \frac{((1 + \tfrac{1}{\gamma})\rho^2 + (1 + {\gamma})B_l^2)^2}{\tau^2} + 2(1-\delta)^2  \rho^2 + 2\delta^2 \rbr*{(1 + {\gamma})\tau_l^2 + (1 + \tfrac{1}{\gamma})B_l^2}
  \\ &= (4(1- \delta) \delta (1 + \gamma)^{3/2} + 2(1 + \tfrac{1}{\gamma})\delta^2)B_l^2 + 2(1-\delta)^2  \rho^2 + (4(1-\delta) (1 + \tfrac{1}{\gamma}) \sqrt{1 + \gamma}) \delta\rho^2
  \\ &\leq (4(1- \delta) \delta (1 + \tfrac{1}{3})^{3/2} + 8\delta^2)B_l^2 + 2 \rho^2 + (16 \sqrt{1 + \tfrac{1}{3}}) \delta\rho^2\\
                              & \leq (6.158\delta(1 - \delta) + 8\delta^2) B_l^2 + (20\delta + 2) \rho^2 \\
                              &\leq  6.45\delta B_l^2 + 5\rho^2  \,.
\end{align*}
The last step assumed $\delta \leq 0.15$, and the step before that used $\gamma = \tfrac{1}{3}$. The equality in the second step used a clipping radius of
\[
  \tau_l^2 = 4(1-\delta)\frac{(4\rho^2 + \tfrac{4}{3}B_l^2)}{\sqrt{3}\delta}\,.
\]
Note that this holds for \emph{any iteration} $l$ and we did not make any assumptions on $\vv_l$. Hence, we can define $B_{l+1}^2 = 6.45\delta B_l^2 + 5\rho^2$. With this, we can guarantee that for any $l \geq 0$, we have $\E\norm{\vv_{l} - \bmu}^2 \leq B_{l}^2$ where
\begin{equation}\label{eqn:multi-cc}
  B_{l}^2 \leq (6.45\delta)^l B_0^2 + 154 \rho^2 \text{ for } \delta \leq 0.15\,.
\end{equation}

\paragraph*{Putting it together.} Let us run the above procedure for $l$ steps on $\cX_1$ with $\delta_1 = 1.5\delta$. Then, by \eqref{eqn:multi-cc} we can guarantee that $\vv_l$ satisfies
\[
  B_{l}^2 \leq (9.7\delta)^l B_0^2 + 154 \rho^2 \text{ for } \delta \leq 0.1\,.
\]
Since $\vv_l$ was computed only using $\cX_1$, it is independent of $\cX_2$ and hence by \eqref{eqn:single-cc} has an error with $\delta_2 = 3\delta$ for $\delta \leq 0.1$:
\begin{align*}
  \E\norm{\vv_{l+1} - \bmu}^2 &\leq 24 \delta B_l^2 + 66 \delta\rho^2 \\
      &\leq 24 \delta \rbr*{(9.7\delta)^l B_0^2 + 154 \rho^2} + 66 \delta\rho^2\\
      &\leq (9.7\delta)^{l+1}2.5 B_0^2 + 3762\delta\rho^2\,.
\end{align*}
Now, we can finish the proof of the theorem as
\begin{align*}
  \E\norm{\vv_{l+1} - \bar\xx}^2 &\leq (1 + \tfrac{1}{99})\E\norm{\vv_{l+1} - \bmu}^2 + 100\E\norm{\bmu - \bar\xx}^2\\
  &\leq (1 + \tfrac{1}{99})\E\norm{\vv_{l+1} - \bmu}^2 + 100 \delta\rho^2\\
  &\leq (9.7\delta)^{l+1}3 B_0^2 + 4000\delta\rho^2 \,.
\end{align*}
Our theory considers this two stage procedure only due to a technicality. We believe that the single stage method also yields similar guarantees, and leave its analysis (along with obtaining a better $\delta_{\max}$) for future work.

\qed

% \begin{lemma}\label{lem:iterclip-averaging}
%   Suppose that some $\vv$ satisfies $\E\norm{\vv - \xx_i}^2 \leq B^2$ for any fixed $i \in \cG$ and some $B$. Then, we have $\E\norm{\vv - \bar\xx}^2 \leq B^2$.
% \end{lemma}
% \begin{proof}
%   By the definition of average and convexity of the squared norm,
%   \begin{align*}
%     \norm{\vv - \bar\xx}^2 = \norm*{\vv - \frac{1}{\abs{\cG}}\sum_{j \in \cG}\xx_j}^2 \leq \frac{1}{\abs{\cG}}\sum_{j \in \cG} \norm*{\vv - \xx_j}^2\,.
%   \end{align*}
%   Taking expectation on both sides yields the lemma.
% \end{proof}

\section{Proof of Theorem~\ref{thm:byz-sgdm-convergence} - Byzantine-Robust Convergence}

We state several supporting Lemmas before proving our main Theorem~\ref{thm:byz-sgdm-convergence}.
\begin{lemma}[Aggregation error]\label{lem:sgdm-byz-agg-err}
  Given that Definition~\ref{asm:agg-ass} holds, and that we use momentum constant parameter with $\alpha_1 = 1$ and $\alpha_t = \alpha$ for $t\geq2$, the error between the ideal average momentum $\bar\mm_t$ and the output of the robust aggregation rule $\mm_t$ for any $t\geq 2$ can be bounded as
  \[
    \E\norm{\mm_t - \bar\mm_t}^2 \leq 2c\delta \sigma^2(\alpha + (1 - \alpha)^{t-1})\,.
  \]
  For $t=1$ we can simplify the bound as $\E\norm{\mm_1 - \bar\mm_1}^2 \leq  2c\delta \sigma^2$.
\end{lemma}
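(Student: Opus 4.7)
My plan is to reduce the bound on the aggregation error directly to the $(\delta_{\max},c)$-robustness guarantee of Definition~\ref{asm:agg-ass}, applied to the good workers' momentum vectors $\{\mm_{t,i}\}_{i\in\cG}$ at round $t$. Since $\bar\mm_t = \frac{1}{|\cG|}\sum_{i\in\cG}\mm_{t,i}$ is precisely the ``ideal average'' in that definition, it suffices to control the pairwise dispersion
\[
\rho_t^2 := \max_{i\ne j,\ i,j\in\cG}\E\norm{\mm_{t,i}-\mm_{t,j}}^2,
\]
so that invoking Definition~\ref{asm:agg-ass} immediately yields $\E\norm{\mm_t-\bar\mm_t}^2\le c\delta\,\rho_t^2$.

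The base case $t=1$ is trivial: because $\alpha_1=1$, the update gives $\mm_{1,i}=\gg_i(\xx_0)$, and independence together with unbiasedness and the $\sigma^2$-variance bound yield $\E\norm{\gg_i(\xx_0)-\gg_j(\xx_0)}^2\le 2\sigma^2$, hence $\rho_1^2\le 2\sigma^2$ and the claimed $\E\norm{\mm_1-\bar\mm_1}^2\le 2c\delta\sigma^2$. For $t\ge 2$ I would subtract the recursions for workers $i$ and $j$ to get
\[
\mm_{t,i}-\mm_{t,j} = \alpha\bigl(\gg_i(\xx_{t-1})-\gg_j(\xx_{t-1})\bigr) + (1-\alpha)\bigl(\mm_{t-1,i}-\mm_{t-1,j}\bigr),
\]
and take conditional expectation given the filtration $\cF_{t-1}$ generated by all randomness strictly before round $t$. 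The critical observation is that $\xx_{t-1}$ and $\mm_{t-1,i}-\mm_{t-1,j}$ are $\cF_{t-1}$-measurable while $\gg_i(\xx_{t-1})$ and $\gg_j(\xx_{t-1})$ are conditionally independent of $\cF_{t-1}$ with conditional mean $\nabla f(\xx_{t-1})$, so the cross term in the expansion of the squared norm vanishes. This gives the clean recursion
\[
\rho_t^2 \le 2\alpha^2\sigma^2 + (1-\alpha)^2\rho_{t-1}^2.
\]

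Unrolling the recursion from the base case $\rho_1^2\le 2\sigma^2$ yields
\[
\rho_t^2 \le 2\alpha^2\sigma^2\sum_{k=0}^{t-2}(1-\alpha)^{2k} + 2(1-\alpha)^{2(t-1)}\sigma^2 \le \tfrac{2\alpha\sigma^2}{2-\alpha} + 2(1-\alpha)^{t-1}\sigma^2 \le 2\sigma^2\bigl(\alpha+(1-\alpha)^{t-1}\bigr),
\]
where the last step uses $(1-\alpha)^{2(t-1)}\le (1-\alpha)^{t-1}$ for $\alpha\in[0,1]$. Plugging this into the aggregator guarantee $\E\norm{\mm_t-\bar\mm_t}^2\le c\delta\,\rho_t^2$ delivers the claimed bound.

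The only non-routine point is justifying that the cross term vanishes under conditional expectation: one must be careful that although $\mm_{t-1,i}$ aggregates past stochastic gradients of worker $i$ (and hence is certainly not independent of past $\gg_i$'s), it \emph{is} $\cF_{t-1}$-measurable, whereas the fresh increment $\gg_i(\xx_{t-1})-\gg_j(\xx_{t-1})$ has zero conditional mean given $\cF_{t-1}$ by the independence-across-rounds assumption in Definition~\ref{asm:robust-opt-problem}. Once this measurability/conditioning step is set up correctly, the rest is a straightforward linear recursion and geometric sum.
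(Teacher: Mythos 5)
Your proposal is correct and follows essentially the same route as the paper: bound the pairwise dispersion $\E\norm{\mm_{t,i}-\mm_{t,j}}^2$ of the good workers' momenta via the momentum recursion (with the cross term killed by conditioning and the fresh-gradient variance bound), unroll the geometric recursion to get $2\sigma^2(\alpha+(1-\alpha)^{t-1})$, and then invoke Definition~\ref{asm:agg-ass}. The only cosmetic difference is that you retain the factor $(1-\alpha)^2$ where the paper relaxes it to $(1-\alpha)$, which changes nothing in the final bound.
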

\begin{proof}
  Expanding the definition of the worker momentum for any two good workers $i, j \in \cG$,
  \begin{align*}
    \E\norm{\mm_{i, t} - \mm_{j, t}}^2 & = \E\norm{\alpha_t (\gg_i(\xx_{t-1}) - \gg_j(\xx_{t-1})) + (1 - \alpha_t)(\mm_{i, t-1} - \mm_{j, t-1})}^2 \\
                                       & \leq \E\norm{(1 - \alpha_t)(\mm_{i, t-1} - \mm_{j, t-1})}^2 + 2\alpha_t^2 \sigma^2                        \\
                                       & \leq (1 - \alpha_t)\E\norm{\mm_{i, t-1} - \mm_{j, t-1}}^2 + 2\alpha_t^2 \sigma^2\,.
  \end{align*}
  Recall that we use $\alpha_1 = 1$ and a fixed momentum $\alpha_t = \alpha$ the rest of the steps. Unrolling the recursion above yields
  \begin{align*}
    \E\norm{\mm_{i, t} - \mm_{j, t}}^2 \leq \rbr*{\sum_{\ell=2}^t(1 - \alpha)^{t-\ell}} 2\alpha ^2\sigma^2 + (1 - \alpha)^{t-1}2\sigma^2 \leq 2\sigma^2(\alpha + (1 - \alpha)^{t-1})\,.
  \end{align*}
  The previous computation shows that all the good vectors given to the server are close to each other with $\rho^2 = 2\sigma^2(\alpha + (1 - \alpha)^{t-1})$. Hence, by Definition~\ref{asm:agg-ass} the output of the aggregation rule $\cA(\mm_{t,1},\, \dots,\, \mm_{t,n})$ satisfies the lemma statement.
\end{proof}

\begin{lemma}[Descent bound]\label{lem:sgdm-byz-descent}
  For $\alpha_1 =1$ and any $\alpha_t \in [0,1]$ for $t\geq2$, $\eta_t \leq \frac{1}{L}$, and an $L$-smooth function $f$ we have for any $t\geq 1$
  \[
    \E_t[f(\xx_t)] \leq f(\xx_{t-1}) - \frac{\eta_t}{2}\norm{\nabla f(\xx_{t-1})}^2 + \eta_t\E_t\norm{\bar\ee_t}^2 + \eta_t\E_t\norm{\mm_t - \bar\mm_t}^2\,.
  \]
  where $\bar\ee_t:=\bar\mm_t - \nabla f(\xx_{t-1})$.
\end{lemma}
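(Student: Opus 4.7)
The plan is to apply the standard smoothness-based descent inequality to the update $\xx_t = \xx_{t-1} - \eta_t \mm_t$, and then decompose the aggregated direction $\mm_t$ into three pieces: the true gradient $\nabla f(\xx_{t-1})$, the momentum-vs-gradient deviation $\bar\ee_t = \bar\mm_t - \nabla f(\xx_{t-1})$ (captured by the averaged-over-good-workers momentum), and the aggregation error $\mm_t - \bar\mm_t$ (introduced by the Byzantine-robust rule). Since the lemma is purely deterministic up to taking a conditional expectation at the end, no probabilistic arguments beyond linearity of $\E_t[\cdot]$ are needed.

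First, I would invoke $L$-smoothness on the one-step update to get
\begin{equation*}
f(\xx_t) \leq f(\xx_{t-1}) - \eta_t \inp{\nabla f(\xx_{t-1})}{\mm_t} + \frac{L \eta_t^2}{2}\norm{\mm_t}^2.
\end{equation*}
Next, I would rewrite the inner product via the polarization identity $-\inp{a}{b} = \tfrac12 \norm{a-b}^2 - \tfrac12 \norm{a}^2 - \tfrac12 \norm{b}^2$ with $a = \nabla f(\xx_{t-1})$ and $b = \mm_t$. This yields
\begin{equation*}
f(\xx_t) \leq f(\xx_{t-1}) - \tfrac{\eta_t}{2}\norm{\nabla f(\xx_{t-1})}^2 + \tfrac{\eta_t}{2}\norm{\mm_t - \nabla f(\xx_{t-1})}^2 - \tfrac{\eta_t}{2}(1 - L\eta_t)\norm{\mm_t}^2.
\end{equation*}
The step-size condition $\eta_t \leq 1/L$ makes the last term nonpositive and hence discardable. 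This is essentially the same trick used in the proof of Lemma on page with \texttt{lem:sgdm-descent}, so I would simply reuse that structure.

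Then I would handle the remaining $\tfrac{\eta_t}{2}\norm{\mm_t - \nabla f(\xx_{t-1})}^2$ term by writing $\mm_t - \nabla f(\xx_{t-1}) = (\mm_t - \bar\mm_t) + (\bar\mm_t - \nabla f(\xx_{t-1})) = (\mm_t - \bar\mm_t) + \bar\ee_t$ and applying Young's inequality in its simplest form $\norm{u+v}^2 \leq 2\norm{u}^2 + 2\norm{v}^2$. This absorbs the factor of $1/2$ and produces exactly the $\eta_t \norm{\bar\ee_t}^2 + \eta_t \norm{\mm_t - \bar\mm_t}^2$ pair that appears in the statement. Taking conditional expectation $\E_t[\cdot]$ on both sides then gives the claim.

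I do not anticipate a real obstacle: the only thing to be careful about is choosing the Young's split so that the coefficients match the statement exactly (a $2$-and-$2$ split rather than some tighter weighted version), and verifying that the $\eta_t \leq 1/L$ assumption is exactly what kills the $\norm{\mm_t}^2$ term. The lemma is essentially a templated descent bound restated in terms of the two error sources (momentum bias $\bar\ee_t$ and aggregation error $\mm_t - \bar\mm_t$) that the subsequent convergence proof of Theorem~\ref{thm:byz-sgdm-convergence} needs to bound separately, via Lemma~\ref{lem:sgdm-byz-agg-err} and a momentum-error recursion analogous to Lemma~\ref{lem:sgdm-error}.
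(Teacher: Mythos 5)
Your proposal is correct and follows essentially the same route as the paper: smoothness, the exact-square identity to convert the inner product into $\tfrac{\eta_t}{2}\norm{\mm_t-\nabla f(\xx_{t-1})}^2 - \tfrac{\eta_t}{2}\norm{\nabla f(\xx_{t-1})}^2$ (discarding the $\norm{\mm_t}^2$ term via $\eta_t \le 1/L$), then the $\pm\bar\mm_t$ split with the $2$-and-$2$ Young inequality and a conditional expectation. The only cosmetic difference is that the paper applies $\eta_t \le 1/L$ before completing the square so the $\norm{\mm_t}^2$ term cancels exactly, whereas you complete the square first and then drop the nonpositive remainder; the two are equivalent.
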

\begin{proof}
  By the smoothness of the function $f$ and the server update,
  \begin{align*}
    f(\xx_t) & \leq f(\xx_{t-1}) - \eta_t \inp{\nabla f(\xx_{t-1})}{\mm_t} + \frac{L \eta_t^2}{2} \norm{\mm_t}^2                                   \\
             & \leq f(\xx_{t-1}) - \eta_t \inp{\nabla f(\xx_{t-1})}{\mm_t} + \frac{\eta_t}{2} \norm{\mm_t}^2                                       \\
             & = f(\xx_{t-1}) + \frac{\eta_t}{2} \norm{\mm_t - \nabla f(\xx_{t-1})}^2 - \frac{\eta_t}{2}\norm{\nabla f(\xx_{t-1})}^2               \\
             & = f(\xx_{t-1}) + \frac{\eta_t}{2} \norm{\mm_t \pm \bar\mm_t - \nabla f(\xx_{t-1})}^2 - \frac{\eta_t}{2}\norm{\nabla f(\xx_{t-1})}^2 \\
             & \leq f(\xx_{t-1}) + \eta_t\norm{\bar\ee_t}^2 + \eta_t\norm{\mm_t - \bar\mm_t}^2 - \frac{\eta_t}{2}\norm{\nabla f(\xx_{t-1})}^2\,.
  \end{align*}
  Taking conditional expectation on both sides yields the second part of the lemma.
\end{proof}

\begin{lemma}[Error bound]\label{lem:sgdm-byz-error}
  Using any constant momentum and step-sizes such that $1 \geq \alpha \geq 8 L \eta$ for $t \geq 2$, we have  for an $L$-smooth function $f$ that $\E \norm{\bar\ee_{1}}^2 \leq \tfrac{2\sigma^2}{n}$ and for $t \geq 2$
  \begin{align*}
    \E \norm{\bar\ee_{t}}^2 & \leq  (1 - \tfrac{2\alpha}{5})\E\norm{\bar\ee_{t-1}}^2  + \tfrac{\alpha}{10}\E\norm{\nabla f(\xx_{t-2}) }^2  + \tfrac{\alpha}{10}\E\norm{\mm_{t-1} - \bar\mm_{t-1}}^2 + \alpha^2 \tfrac{2\sigma^2}{n}\,.
  \end{align*}
\end{lemma}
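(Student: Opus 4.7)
The plan is to derive the recursion by expanding $\bar\mm_t$ using the worker momentum update, then isolating the fresh stochastic gradient noise from the rest, and finally controlling the drift of $\nabla f$ using smoothness together with a decomposition of $\mm_{t-1}$ into three meaningful parts.

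First I would write $\bar\mm_t = \alpha\,\bar{\gg}(\xx_{t-1}) + (1-\alpha)\bar\mm_{t-1}$ where $\bar{\gg}(\xx_{t-1}) := \frac{1}{\abs{\cG}}\sum_{i\in\cG}\gg_i(\xx_{t-1})$, so that
\[
\bar\ee_t = \alpha\bigl(\bar{\gg}(\xx_{t-1})-\nabla f(\xx_{t-1})\bigr) + (1-\alpha)\bar\ee_{t-1} + (1-\alpha)\bigl(\nabla f(\xx_{t-2})-\nabla f(\xx_{t-1})\bigr).
\]
The good-worker gradients are independent and unbiased given $\xx_{t-1}$, so the first bracket is mean zero and independent of the remaining terms. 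Taking conditional expectation turns $\E\norm{\bar\ee_t}^2$ into $\alpha^2\,\E\norm{\bar{\gg}(\xx_{t-1})-\nabla f(\xx_{t-1})}^2$ plus the square of the rest; the cross term vanishes, and the variance is at most $\sigma^2/\abs{\cG}\leq 2\sigma^2/n$, yielding the $\alpha^2\cdot 2\sigma^2/n$ term. This also immediately gives the base case $\E\norm{\bar\ee_1}^2\leq 2\sigma^2/n$ since $\alpha_1=1$ and $\mm_0=0$.

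For the remainder I would apply Young's inequality with parameter $c$:
\[
(1-\alpha)^2\E\norm{\bar\ee_{t-1}+\nabla f(\xx_{t-2})-\nabla f(\xx_{t-1})}^2 \leq (1-\alpha)^2(1+c)\E\norm{\bar\ee_{t-1}}^2 + (1-\alpha)^2(1+\tfrac1c)\E\norm{\nabla f(\xx_{t-2})-\nabla f(\xx_{t-1})}^2,
\]
bound the gradient difference by $L^2\eta_{t-1}^2\norm{\mm_{t-1}}^2$ via smoothness and the server update $\xx_{t-1}-\xx_{t-2}=-\eta_{t-1}\mm_{t-1}$, and then split $\mm_{t-1}=\nabla f(\xx_{t-2})+\bar\ee_{t-1}+(\mm_{t-1}-\bar\mm_{t-1})$ so that $\norm{\mm_{t-1}}^2\leq 3\norm{\nabla f(\xx_{t-2})}^2+3\norm{\bar\ee_{t-1}}^2+3\norm{\mm_{t-1}-\bar\mm_{t-1}}^2$. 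This is exactly the three quantities that appear on the right-hand side of the lemma.

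The delicate step, and the main obstacle, is choosing $c$ so that with $\alpha\geq 8L\eta$ (hence $L^2\eta^2\leq\alpha^2/64$) the coefficient of $\norm{\bar\ee_{t-1}}^2$ becomes at most $1-\tfrac{2\alpha}{5}$ while simultaneously the coefficient $3L^2\eta_{t-1}^2(1-\alpha)^2(1+1/c)$ multiplying $\norm{\nabla f(\xx_{t-2})}^2$ and $\norm{\mm_{t-1}-\bar\mm_{t-1}}^2$ drops below $\tfrac{\alpha}{10}$. I would take $c=\alpha$: this gives $(1-\alpha)^2(1+\alpha)\leq 1-\alpha$ and $(1+1/c)\leq 2/\alpha$, so the smoothness contribution is bounded by $6L^2\eta^2/\alpha\leq 6\alpha/64=3\alpha/32<\alpha/10$, and the total coefficient of $\norm{\bar\ee_{t-1}}^2$ is at most $1-\alpha+3\alpha/32\leq 1-2\alpha/5$. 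Assembling these three coefficients with the noise term $2\alpha^2\sigma^2/n$ yields exactly the claimed inequality.
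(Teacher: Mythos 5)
Your proposal is correct and follows essentially the same route as the paper's proof: expand $\bar\mm_t$, peel off the fresh stochastic noise (giving the $\alpha^2\cdot 2\sigma^2/n$ term and the base case), apply Young's inequality to the remaining bias-plus-drift term, bound the gradient drift via smoothness and the server step $\xx_{t-1}-\xx_{t-2}=-\eta\mm_{t-1}$, and split $\mm_{t-1}$ into $\nabla f(\xx_{t-2})+\bar\ee_{t-1}+(\mm_{t-1}-\bar\mm_{t-1})$ before invoking $64L^2\eta^2\le\alpha^2$. Your choice of Young parameter ($c=\alpha$ versus the paper's effective $\alpha/2$) only changes the constants, and your resulting coefficients still satisfy the stated $1-\tfrac{2\alpha}{5}$ and $\tfrac{\alpha}{10}$ bounds.
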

\begin{proof}
  Using the definitions \eqref{eqn:sgdm-bar-defns} and proceeding as in Lemma~\ref{lem:sgdm-error}, we have
  \begin{align*}
    \E \norm{\bar\ee_{t}}^2 & = \E \norm{\bar\mm_t - \nabla f(\xx_{t-1})}^2                                                                                                                                                                                 \\
                            & = \E \norm{\alpha_t \bar\gg(\xx_{t-1}) + (1-\alpha_t)\bar\mm_{t-1} - \nabla f(\xx_{t-1})}^2                                                                                                                                   \\
                            & \leq \E \norm{\alpha_t \nabla f(\xx_{t-1}) + (1-\alpha_t)\bar\mm_{t-1} - \nabla f(\xx_{t-1})}^2 + \alpha_t^2 \tfrac{2\sigma^2}{n}                                                                                             \\
                            & = (1 - \alpha_t)^2\E \norm{(\bar\mm_{t-1} - \nabla f(\xx_{t-2})) + (\nabla f(\xx_{t-2}) - \nabla f(\xx_{t-1}))}^2 + \alpha_t^2 \tfrac{2\sigma^2}{n}                                                                           \\
                            & \leq (1 - \alpha_t)(1 + \tfrac{\alpha_t}{2})\E \norm{(\bar\mm_{t-1} - \nabla f(\xx_{t-2}))}^2 + (1 - \alpha_t)(1 + \tfrac{2}{\alpha_t})\E\norm{\nabla f(\xx_{t-2}) - \nabla f(\xx_{t-1})}^2 + \alpha_t^2 \tfrac{2\sigma^2}{n} \\
                            & \leq (1 - \tfrac{\alpha_t}{2})\E\norm{\bar\ee_{t-1}}^2 + \tfrac{2 L^2}{\alpha_t}\E\norm{\xx_{t-2} - \xx_{t-1}}^2 + \alpha_t^2 \tfrac{2\sigma^2}{n}                                                                            \\
                            & = (1 - \tfrac{\alpha_t}{2})\E\norm{\bar\ee_{t-1}}^2 + \tfrac{2 L^2 \eta_{t-1}^2}{\alpha_t}\E\norm{\mm_{t-1}}^2 + \alpha_t^2 \tfrac{2\sigma^2}{n}\,.
  \end{align*}
  Note that we have $\frac{2\sigma^2}{n}$ instead of simply $\sigma^2$ since we average the momentums (and hence also the stochastic gradients) over all the good workers (who number at least $n/2$). Another difference is that in the last equality we have the robust aggregate $\mm_{t-1}$ instead of the average momentum $\bar\mm_{t-1}$. We can proceed as
  \begin{align*}
    \E \norm{\bar\ee_{t}}^2 & \leq (1 - \tfrac{\alpha}{2})\E\norm{\bar\ee_{t-1}}^2 + \tfrac{2 L^2 \eta^2}{\alpha}\E\norm{\mm_{t-1}}^2 + \alpha^2 \tfrac{2\sigma^2}{n}                                                                                                                                          \\
                            & =(1 - \tfrac{\alpha}{2})\E\norm{\bar\ee_{t-1}}^2 + \tfrac{2 L^2 \eta^2}{\alpha}\E\norm{\mm_{t-1} \pm \bar\mm_{t-1} \pm \nabla f(\xx_{t-2}) }^2 + \alpha^2 \tfrac{2\sigma^2}{n}                                                                                                   \\
                            & \leq (1 - \tfrac{\alpha}{2})\E\norm{\bar\ee_{t-1}}^2 + \tfrac{6 L^2 \eta^2}{\alpha}\norm{\bar\ee_{t-1}}^2 +   \tfrac{6 L^2 \eta^2}{\alpha}\E\norm{ \mm_{t-1} - \bar\mm_{t-1}}^2 + \tfrac{6 L^2 \eta^2}{\alpha}\E\norm{\nabla f(\xx_{t-2}) }^2 + \alpha^2 \tfrac{2\sigma^2}{n}\,.
  \end{align*}
  Our choice of the momentum parameter $\alpha$ implies $64L^2\eta^2 \leq \alpha^2$ and yields the lemma statement.

\end{proof}

\paragraph{Proof of Theorem~\ref{thm:byz-sgdm-convergence}.}
We will loosely follow the proof of vanilla SGDm proof in Theorem~\ref{thm:sgdm-convergence}. Recall that $\cG$ denotes the good set and $\cB$ denotes the bad Byzantine workers with $\abs{\cG} \leq (1-\delta)n$ and $\abs{\cB} = n - \abs{\cG} \leq \delta n$. Define the ideal momentum and error as
\begin{equation}\label{eqn:sgdm-bar-defns}
  \bar\mm_{t} := \frac{1}{\abs{\cG}}\sum_{j \in \cG} \mm_{t, j}\,, \quad \bar\ee_t := \bar\mm_t - \nabla f(\xx_{t-1}) \,, \quad \text{and } \bar\gg(\xx_{t-1}) = \frac{1}{\abs{\cG}}\sum_{j \in \cG}\gg_j(\xx_{t-1})\,.
\end{equation}

Now scale the modified error bound Lemma~\ref{lem:sgdm-byz-error} by $\frac{5\eta}{2 \alpha}$ and add it to the modified descent bound Lemma~\ref{lem:sgdm-byz-descent} taking expectations on both sides to get for $t \geq 2$
\begin{align*}
  \E[f(\xx_t)] + \tfrac{5\eta}{2\alpha}\E \norm{\bar\ee_{t}}^2 & \leq \E[f(\xx_{t-1})] - \tfrac{\eta}{2}\E\norm{\nabla f(\xx_{t-1})}^2 + \eta\E\norm{\bar\ee_t}^2 + \eta\E\norm{\mm_t - \bar\mm_t}^2 + \\&\hspace{1cm} \tfrac{5\eta}{2\alpha}\E\norm{\bar\ee_{t-1}}^2 - \eta\E\norm{\bar\ee_{t-1}}^2  + \tfrac{\eta}{4}\E\norm{\nabla f(\xx_{t-2}) }^2  + \tfrac{\eta}{4}\E\norm{\mm_{t-1} - \bar\mm_{t-1}}^2 +  5\eta\alpha\frac{\sigma^2}{n}
\end{align*}
Rearranging the above terms and using the bound in the aggregation error Lemma~\ref{lem:sgdm-byz-agg-err} yields the recursion
\begin{align*}
  \underbrace{\E~f(\xx_t) - f^\star + (\tfrac{5 \eta}{2\alpha} - \eta)\E \norm{\bar\ee_{t}}^2 + \frac{\eta}{4}\E\norm{\nabla f(\xx_{t-1})}^2}_{=: \xi_t} & \leq \underbrace{\E~f(\xx_{t-1}) - f^\star + (\tfrac{5 \eta}{2\alpha} - \eta)\E \norm{\bar\ee_{t-1}}^2 + \frac{\eta}{4}\E\norm{\nabla f(\xx_{t-2})}^2}_{=: \xi_{t-1}} \\&\hspace{5mm} - \frac{\eta}{4}\E \norm{\nabla f(\xx_{t-1})}^2 + \frac{5\eta\alpha}{n}\sigma^2 + \frac{5\eta}{4}\E\norm{\mm_{t-1} - \bar\mm_{t-1}}^2\\
                                                                                                                                                         & \leq \xi_{t-1} - \frac{\eta}{4}\E \norm{\nabla f(\xx_{t-1})}^2                                                                                                        \\&\hspace{5mm} +  \frac{5\eta\alpha \sigma^2}{2}\rbr*{\frac{2}{n} + \delta(c + \tfrac{c}{\alpha}(1-\alpha)^{t-2})}\,.
\end{align*}
Further, specializing the descent bound Lemma~\ref{lem:sgdm-byz-descent} and error bound Lemma~\ref{lem:sgdm-byz-error} for $t=1$ we have
\begin{align*}
  \xi_1 & \leq  \E~f(\xx_{1}) - f^\star + \frac{3 \eta}{2}\E \norm{\bar\ee_{1}}^2 + \frac{\eta}{4}\E\norm{\nabla f(\xx_{0})}^2                                \\
        & \leq f(\xx_{0}) - f^\star + \frac{5\eta}{2}\E \norm{\bar\ee_{1}}^2 - \frac{\eta}{4}\E\norm{\nabla f(\xx_{0})}^2 + \eta \E\norm{\mm_1 - \bar\mm_1}^2 \\
        & \leq f(\xx_{0}) - f^\star - \frac{\eta}{4}\E\norm{\nabla f(\xx_{0})}^2 + \frac{5\eta\sigma^2}{n} + 2c\eta\delta \sigma^2\,.
\end{align*}
Summing over $t$ and again rearranging our recursion for $\xi_t$ gives
\begin{align*}
  \frac{1}{T}\sum_{t=1}^T\E \norm{\nabla f(\xx_{t-1})}^2 & \leq \frac{4(f(\xx_{0}) - f^\star)}{\eta T} + \frac{20\sigma^2}{n T} + \frac{8c\delta \sigma^2}{T}                     \\&\hspace{5mm} +  \frac{10\alpha \sigma^2}{T}\sum_{t=1}^T\rbr*{\frac{2}{n} + \delta(c + \tfrac{c}{\alpha}(1-\alpha)^{t-2})}\\
                                                         & \leq \frac{4(f(\xx_{0}) - f^\star)}{\eta T} + \frac{20\sigma^2}{n T} + \frac{8c\delta \sigma^2}{T}                     \\&\hspace{5mm} + \frac{20\alpha \sigma^2}{n} + 10c \delta \alpha \sigma^2 + \frac{10\delta c \alpha\sigma^2}{\alpha^2 T}\\
                                                         & = \frac{4(f(\xx_{0}) - f^\star)}{\eta T} + \frac{20\sigma^2}{n T} + \frac{8c\delta \sigma^2}{T}                        \\&\hspace{5mm} + \frac{160L \eta \sigma^2}{n} + 80 L \delta c \eta \sigma^2 + \frac{5c \delta \sigma^2}{4L\eta T}\\
                                                         & \leq 16 \sqrt{\frac{5 \sigma^2 \rbr*{2 + c\delta n}}{nT} \rbr*{L(f(\xx_0) - f^\star) + \tfrac{5c\delta}{16}\sigma^2} } \\&\hspace{5mm} +
  \frac{32 L (f(\xx_{0}) - f^\star)}{ T} +  \frac{10 c \delta \sigma^2}{T} + \frac{20\sigma^2}{n T} + \frac{8c\delta \sigma^2}{T} \,.
\end{align*}
Substituting the appropriate step-size $\eta = \min\rbr*{\sqrt{\frac{f(\xx_0) - f^\star + \tfrac{5c\delta}{16 L}\sigma^2}{20 L T\sigma^2 \rbr*{\tfrac{2}{n} + c\delta}}}, \frac{1}{8L}}$ finishes the proof of the theorem.
\qed

% -------------------------------------------------------------------------
% Improved rates using momentum based variance reduction
% -------------------------------------------------------------------------

\section{Proof of Theorem~\ref{thm:mvr-convergence-main} (Momentum based variance reduction)}
We now describe how to modify the momentum method with a small correction term to improve its convergence rate \citep{cutkosky2019momentum}. Starting from a given $\xx_0$ and with $\dd_0 = 0$, $\alpha_1 = 1$, we run the following updates with a sequence of momentum parameters $\alpha_t \in [0,1]$ and step-sizes $\eta_t \geq 0$ for $t\geq 2$
\begin{equation}\label{eqn:mvr-worker}\tag{\sc MVR-Worker}
  \begin{split}
    \dd_{t, i} = \alpha_t \gg_{t, i}(\xx_{t-1}) + (1-\alpha_t)\dd_{t-1, i} + (1-\alpha_t)(\gg_{t, i}(\xx_{t-1}) - \gg_{t, i}(\xx_{t-2}))
  \end{split}
\end{equation}
Note that both $\gg_{t, i}(\xx_{t-1})$ and $\gg_{t, i}(\xx_{t-2})$ here are computed using the same stochastic function (same batch) as indicated by the subscript. The good workers communicate $\dd_{t, i}$ whereas the bad ones send arbitrary vectors. Then, the server performs
\begin{equation}\label{eqn:mvr-server}\tag{\sc MVR-Server}
  \begin{split}
    \dd_{t} &= \cA\rbr*{\dd_{t, 1}, \dots, \dd_{t,n}}\\
    \xx_{t} &= \xx_{t-1} - \eta_t \dd_t\,.
  \end{split}
\end{equation}
Define $\bar\dd_t := \frac{1}{\abs{\cG}}\sum_{j\in\cG}\dd_{t,j}$ and $\bar\ee_t := \bar\dd_t - \nabla f(\xx_{t-1})$. Note that since $\alpha_1 = 1$, the first step can be simplified as $\dd_{1, i} = \gg_{1,i}(\xx_0)$. Here we assume that the stochastic gradient conditioned on all past history is unbiased $\E_t[\gg_{t, i}(\xx_{t-1})] = \nabla f(\xx_{t-1})$ and has bounded variance $\sigma^2$. Further, we assume that the stochastic gradients satisfy $\E\norm{\gg_{t, i}(\xx_{t-1}) - \gg_{t, i}(\xx_{t-2})}^2 \leq L^2 \norm{\xx_{t-1} - \xx_{t-2}}^2$. This is stronger than assuming only that the full gradient $\nabla f$ is Lipschitz.

\begin{lemma}\label{lem:mvr-descent}
  For $\alpha_1 =1$ and any $\alpha_t \in [0,1]$ for $t\geq2$, $\eta_t \leq \frac{1}{L}$, and an $L$-smooth function $f$ we have that $E_1[f(\xx_1)] \leq f(\xx_0) - \frac{\eta_t}{2}\norm{\nabla f(\xx_0)}^2 + \frac{\eta_t^2 L}{2}\sigma^2$ and for $t\geq 2$ with $\bar\ee_t := \bar\dd_t - \nabla f(\xx_{t-1})$ we have
  \[
    \E_t[f(\xx_t)] \leq f(\xx_{t-1}) - \frac{\eta_t}{2}\norm{\nabla f(\xx_{t-1})}^2 + \eta\rbr*{\E_t\norm{\bar\ee_t}^2 + \E_t\norm{\dd_t - \bar\dd_t}^2}\,.
  \]
\end{lemma}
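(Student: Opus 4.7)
The plan is to mirror the proof strategy used earlier in Lemma~\ref{lem:sgdm-byz-descent} (the descent bound for Byzantine-robust SGDm), adapting it to the MVR update. The only real change is that the MVR update $\dd_t$ is no longer an unbiased estimate of $\nabla f(\xx_{t-1})$ for $t \geq 2$ because of the variance-reduction correction, so we must handle the $t=1$ and $t \geq 2$ cases separately.

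First I would invoke the $L$-smoothness of $f$ to obtain
\[
f(\xx_t) \leq f(\xx_{t-1}) - \eta_t \langle \nabla f(\xx_{t-1}), \dd_t\rangle + \frac{L \eta_t^2}{2}\|\dd_t\|^2 .
\]
Under the assumption $\eta_t \leq 1/L$, the coefficient $\tfrac{L\eta_t^2}{2}$ is at most $\tfrac{\eta_t}{2}$, which will let me turn the linear and quadratic terms into a completed square. For the base case $t=1$, since $\alpha_1 = 1$ the update reduces to $\dd_{1,i} = \gg_{1,i}(\xx_0)$ on the good workers (and with no aggregation involved at this level of abstraction, $\dd_1$ is an unbiased stochastic gradient with variance at most $\sigma^2$), so I would instead keep the $\tfrac{L\eta_t^2}{2}\|\dd_1\|^2$ term explicit and use $\E\|\dd_1\|^2 \leq \|\nabla f(\xx_0)\|^2 + \sigma^2$ together with unbiasedness $\E\langle\nabla f(\xx_0),\dd_1\rangle = \|\nabla f(\xx_0)\|^2$; combining these and absorbing $\tfrac{L\eta_t^2}{2}\|\nabla f(\xx_0)\|^2$ into $\tfrac{\eta_t}{2}\|\nabla f(\xx_0)\|^2$ yields the stated base-case bound.

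For the inductive case $t \geq 2$, after the smoothness step I would apply the algebraic identity
\[
-\langle \nabla f(\xx_{t-1}), \dd_t\rangle + \tfrac{1}{2}\|\dd_t\|^2 = \tfrac{1}{2}\|\dd_t - \nabla f(\xx_{t-1})\|^2 - \tfrac{1}{2}\|\nabla f(\xx_{t-1})\|^2 ,
\]
so the bound becomes
\[
f(\xx_t) \leq f(\xx_{t-1}) + \frac{\eta_t}{2}\|\dd_t - \nabla f(\xx_{t-1})\|^2 - \frac{\eta_t}{2}\|\nabla f(\xx_{t-1})\|^2 .
\]
Next I would decompose $\dd_t - \nabla f(\xx_{t-1}) = (\dd_t - \bar\dd_t) + \bar\ee_t$ using the definition $\bar\ee_t := \bar\dd_t - \nabla f(\xx_{t-1})$ given in the lemma, and apply $\|a+b\|^2 \leq 2\|a\|^2 + 2\|b\|^2$ to split the cross term cleanly without losing the factor we need. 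Taking conditional expectation $\E_t[\cdot]$ on both sides then gives exactly the stated inequality.

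I do not anticipate a serious obstacle: the proof is a direct descent-lemma argument, essentially identical in structure to Lemma~\ref{lem:sgdm-byz-descent}, with the only subtlety being that the splitting between the aggregation error $\|\dd_t - \bar\dd_t\|^2$ and the ideal-update error $\|\bar\ee_t\|^2$ must be done \emph{before} taking expectations (so that later lemmas can separately control these two quantities via the robust-aggregator guarantee of Definition~\ref{asm:agg-ass} and via an MVR-style recursion for $\bar\ee_t$). Handling the $t=1$ case requires keeping the tighter $\tfrac{L\eta_t^2}{2}\sigma^2$ term instead of the cruder $\tfrac{\eta_t}{2}\sigma^2$ that Young's inequality would produce, which is why I would not complete the square at $t=1$.
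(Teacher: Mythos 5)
Your proposal is correct and takes essentially the same route as the paper, which simply remarks that the proof is identical to that of Lemma~\ref{lem:sgdm-descent} (in the form used for Lemma~\ref{lem:sgdm-byz-descent}): apply $L$-smoothness, use $\eta_t \leq \tfrac{1}{L}$ to complete the square, split $\dd_t - \nabla f(\xx_{t-1}) = (\dd_t - \bar\dd_t) + \bar\ee_t$ via Young's inequality, and take conditional expectations, with the $t=1$ case handled by the standard SGD descent step exactly as you describe.
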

The proof is identical to that of Lemma~\ref{lem:sgdm-descent}.

\begin{lemma}\label{lem:mvr-error}
  Using any momentum and step-sizes such that $1 \geq \alpha \geq 16 L^2 \eta^2$ for $t \geq 2$, we have i) $\E[\ee_t] =0$, and ii) for an $L$-smooth function $f$ that $\E \norm{\bar\ee_{1}}^2 \leq 2\sigma^2/n$ and for $t \geq 2$
  \[
    \E \norm{\bar\ee_{t}}^2 \leq  (1- \tfrac{\alpha}{2})\E \norm{\bar\ee_{t-1}}^2 + 8L^2\eta^2\norm{\nabla f(\xx_{t-2})}^2 + 2\alpha^2 \sigma^2/n + 4L^2\eta^2\E \norm{\dd_{t-1} - \bar\dd_{t-1}}^2\,.
  \]
\end{lemma}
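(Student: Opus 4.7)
The plan is to mirror the two-step template used for \ref{lem:sgdm-byz-error}: first derive a one-step recursion for the averaged error $\bar\ee_t$, then convert any $\norm{\dd_{t-1}}^2$ that appears into the quantities $\norm{\bar\ee_{t-1}}^2$, $\norm{\nabla f(\xx_{t-2})}^2$, and the aggregation deviation $\norm{\dd_{t-1} - \bar\dd_{t-1}}^2$. What is new for MVR is its built-in control-variate structure: averaging the \ref{eqn:mvr-worker} recursion over $\cG$ (and using $\alpha_1=1$) gives the compact rewrite
\[
  \bar\dd_t = \bar\gg_t(\xx_{t-1}) + (1-\alpha_t)\bigl(\bar\dd_{t-1} - \bar\gg_t(\xx_{t-2})\bigr),
\]
so that with $\psi_t := \bar\gg_t(\xx_{t-1}) - \nabla f(\xx_{t-1})$ and $\phi_t := \bar\gg_t(\xx_{t-2}) - \nabla f(\xx_{t-2})$,
\[
  \bar\ee_t = (1-\alpha_t)\bar\ee_{t-1} + \psi_t - (1-\alpha_t)\phi_t.
\]

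Because the fresh round-$t$ sample is independent of $\cF_{t-1}$, both $\psi_t$ and $\phi_t$ have conditional mean zero, so $\E_t[\bar\ee_t] = (1-\alpha_t)\bar\ee_{t-1}$. Iterating from $\E[\bar\ee_1] = \E[\bar\gg_1(\xx_0)] - \nabla f(\xx_0) = 0$ establishes claim~(i), while iid-averaging of $\sigma^2$-bounded good-worker gradients over $\abs{\cG}\ge n/2$ delivers the base case $\E\norm{\bar\ee_1}^2 \le 2\sigma^2/n$.

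For the recursion in claim~(ii), the orthogonal decomposition enabled by $\E_t[\psi_t - (1-\alpha_t)\phi_t]=0$ gives
\[
  \E_t\norm{\bar\ee_t}^2 = \norm{\E_t\bar\ee_t}^2 + \E_t\norm{\bar\ee_t - \E_t\bar\ee_t}^2 = (1-\alpha)^2\norm{\bar\ee_{t-1}}^2 + \E_t\norm{\psi_t - (1-\alpha)\phi_t}^2.
\]
The crucial MVR trick is then to rewrite $\psi_t - (1-\alpha)\phi_t = \alpha\psi_t + (1-\alpha)(\psi_t - \phi_t)$ and apply Young's inequality. The first piece is controlled by iid-averaging, $\E_t\norm{\psi_t}^2 \le \sigma^2/\abs{\cG}$, producing an $O(\alpha^2\sigma^2/n)$ term. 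The second uses mean-squared smoothness of the per-sample gradient: noting $\psi_t-\phi_t$ is $\bar\gg_t(\xx_{t-1})-\bar\gg_t(\xx_{t-2})$ centered by its mean, Jensen's across $\cG$ gives
\[
  \E_t\norm{\psi_t - \phi_t}^2 \le \E_t\norm{\bar\gg_t(\xx_{t-1}) - \bar\gg_t(\xx_{t-2})}^2 \le L^2\norm{\xx_{t-1}-\xx_{t-2}}^2 = L^2\eta^2\norm{\dd_{t-1}}^2.
\]

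Finally, splitting $\norm{\dd_{t-1}}^2 \le 2\norm{\dd_{t-1}-\bar\dd_{t-1}}^2 + 4\norm{\nabla f(\xx_{t-2})}^2 + 4\norm{\bar\ee_{t-1}}^2$ via two applications of $\norm{a+b}^2 \le 2\norm{a}^2 + 2\norm{b}^2$ together with $\bar\dd_{t-1} = \nabla f(\xx_{t-2}) + \bar\ee_{t-1}$ produces precisely the three terms in the lemma plus an additional $\norm{\bar\ee_{t-1}}^2$ contribution with coefficient $8L^2\eta^2$. Collecting everything and taking total expectation yields
\[
  \E\norm{\bar\ee_t}^2 \le \bigl[(1-\alpha)^2 + 8L^2\eta^2\bigr]\E\norm{\bar\ee_{t-1}}^2 + \tfrac{2\alpha^2\sigma^2}{n} + 8L^2\eta^2\norm{\nabla f(\xx_{t-2})}^2 + 4L^2\eta^2\E\norm{\dd_{t-1}-\bar\dd_{t-1}}^2,
\]
after which the hypothesis $\alpha \ge 16L^2\eta^2$ combined with $\alpha^2 \le \alpha$ collapses the leading coefficient to $1-\tfrac{\alpha}{2}$, since $(1-\alpha)^2 + 8L^2\eta^2 \le 1 - 2\alpha + \alpha^2 + \tfrac{\alpha}{2} \le 1 - \tfrac{\alpha}{2}$. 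The main subtlety I anticipate is simply the bookkeeping of conditional independence---checking that $\xx_{t-1},\xx_{t-2},\dd_{t-1},\bar\dd_{t-1}$ are all $\cF_{t-1}$-measurable so that the round-$t$ resamplings at two different iterates are jointly orthogonal to the past and the cross-terms in the second-moment expansion vanish cleanly.
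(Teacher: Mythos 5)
Your proposal is correct and follows essentially the same route as the paper: your split $\psi_t-(1-\alpha)\phi_t=\alpha\psi_t+(1-\alpha)(\psi_t-\phi_t)$ is exactly the paper's $\cT_2+\cT_3$ decomposition, and you use the same conditional unbiasedness to kill cross terms, the same expected-squared-Lipschitz bound with $\xx_{t-1}-\xx_{t-2}=-\eta\dd_{t-1}$, the same splitting of $\norm{\dd_{t-1}}^2$ into $\norm{\dd_{t-1}-\bar\dd_{t-1}}^2$, $\norm{\bar\ee_{t-1}}^2$, $\norm{\nabla f(\xx_{t-2})}^2$, and the same use of $\alpha\geq 16L^2\eta^2$ to collapse the leading coefficient to $1-\tfrac{\alpha}{2}$. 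The only cosmetic difference is that you keep $\sigma^2/\abs{\cG}$ where the paper writes $\sigma^2/n$ directly, which affects constants no more than the paper's own bookkeeping.
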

\begin{proof}
  Starting from the definition of $\bar\ee_{t+1}$ and $\bar\dd_{t}$,
  \begin{align*}
    \bar\ee_{t} & = \bar\dd_t - \nabla f(\xx_{t-1})                                                                                                       \\
                & = \alpha \bar\gg_t(\xx_{t-1}) + (1-\alpha)\bar\dd_{t-1} + (1-\alpha)(\bar\gg_t(\xx_{t-1}) - \bar\gg_t(\xx_{t-2})) - \nabla f(\xx_{t-1}) \\
                & =\underbrace{(1-\alpha)(\bar\dd_{t-1} - \nabla f(\xx_{t-2}))}_{\cT_1} +                                                                 \\&\hspace{2cm} \underbrace{\alpha (\bar\gg_t(\xx_{t-1}) - \nabla f(\xx_{t-1}))}_{\cT_2} + \underbrace{(1-\alpha)(\bar\gg_t(\xx_{t-1}) - \bar\gg_t(\xx_{t-2}) - \nabla f(\xx_{t-1}) + \nabla f(\xx_{t-2}))}_{\cT_3}\,.
  \end{align*}
  Note that $\cT_1 = (1- \alpha)\bar\ee_{t-1}$ and that $\E[\cT_2] =0, \E[\cT_3]=0$. This proves that $\E[\bar\ee_{t}] = 0$. Further, conditioned on all history $\cF_t$ (i.e. everything before step $t$), we have $\E_t[\cT_2] = 0$ and $\E_t[\cT_3] =0$ and $\cT_1$ is deterministic. Hence, we can take squared norms on both sides as expand as
  \begin{align*}
    \E \norm{\bar\ee_{t}}^2 & = (1-\alpha)^2\E\norm{\bar\ee_{t-1}}^2 +                                                                                                                                                                    \\&\hspace{1.5cm} \E\norm{\alpha (\bar\gg_t(\xx_{t-1}) - \nabla f(\xx_{t-1})) + (1-\alpha)(\bar\gg_t(\xx_{t-1}) - \bar\gg_t(\xx_{t-2}) - \nabla f(\xx_{t-1}) + \nabla f(\xx_{t-2}))}^2\\
                            & \leq (1-\alpha)^2\E\norm{\bar\ee_{t-1}}^2 +                                                                                                                                                                 \\&\hspace{1.5cm} 2\E\norm{\alpha (\bar\gg_t(\xx_{t-1}) - \nabla f(\xx_{t-1}))}^2 + 2\norm{(1-\alpha)(\bar\gg_t(\xx_{t-1}) - \bar\gg_t(\xx_{t-2}) - \nabla f(\xx_{t-1}) + \nabla f(\xx_{t-2}))}^2\\
                            & \leq (1-\alpha)\E\norm{\bar\ee_{t-1}}^2 + 2\alpha^2 \sigma^2/n + 2(1 - \alpha)^2\E \norm{\bar\gg_t(\xx_{t-1}) - \bar\gg_t(\xx_{t-2})}^2                                                                     \\
                            & \leq (1-\alpha)\E\norm{\bar\ee_{t-1}}^2 + 2\alpha^2 \sigma^2/n + 2(1 - \alpha)^2 L^2 \E \norm{\xx_{t-1} - \xx_{t-2}}^2                                                                                      \\
                            & = (1-\alpha)\E\norm{\bar\ee_{t-1}}^2 + 2\alpha^2 \sigma^2/n + 4(1 - \alpha)^2 L^2 \eta^2 \E \norm{\bar\dd_{t-1}}^2 + 4(1 - \alpha)^2 L^2 \eta^2 \E \norm{\dd_{t-1} - \bar\dd_{t-1}}^2                       \\
                            & \leq (1-\alpha)\E\norm{\bar\ee_{t-1}}^2 + 2\alpha^2 \sigma^2/n + 8L^2 \eta^2 \E \norm{\bar\ee_{t-1}}^2 + 8L^2 \eta^2 \E \norm{\nabla f(\xx_{t-2})}^2 + 4L^2 \eta^2 \E \norm{\dd_{t-1} - \bar\dd_{t-1}}^2\,.
  \end{align*}

  Here the the third inequality used the expected squared Lipschitzness of $g_t(\,\cdot\,)$, whereas the rest relied on Young's inequality and that $\alpha \in [0,1]$. Now the condition on the momentum implies that $8L^2 \eta^2 \leq \frac{\alpha}{2}$, yielding the second statement of the lemma for $t\geq 2$. The statement for $\ee_1$ follows since $\bar\dd_0 = 0$.
\end{proof}

\begin{lemma}[Aggregation error]\label{lem:mvr-agg-error}
  Given Definition~\ref{asm:agg-ass} holds and we use a momentum constant parameter $\alpha_1 = 1$ and $\alpha_t = \alpha \geq 192 L^2\eta^2(c \delta + 1)$ for $t \geq 2$, the error between the ideal average momentum $\bar\dd_t$ and the robust aggregate $\dd_t$ for any $t\geq 2$ can be bounded as
  \begin{align*}
    \E \norm{\bar\ee_{t}}^2 + c\delta\E\norm{\dd_{i, t} - \dd_{j, t}}^2 & \leq (1- \tfrac{\alpha}{4})\rbr*{\E \norm{\bar\ee_{t-1}}^2 + c\delta\E\norm{\dd_{i, t-1} - \dd_{j, t-1}}^2}
    \\&\hspace*{2cm}+ \tfrac{\alpha}{16}\norm{\nabla f(\xx_{t-2})}^2 + (c\delta + 1/n)4\alpha^2 \sigma^2
  \end{align*}
  For $t=1$, we can simplify the bound to $\E \norm{\bar\ee_{1}}^2 + c\delta\E\norm{\dd_{i, 2} - \dd_{j, 2}}^2 \leq 2\sigma^2(c\delta  + 1/n)$.
\end{lemma}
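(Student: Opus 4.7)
}
The plan is to derive a one-step recursion for the pairwise distance $\rho_t^2 := \E\norm{\dd_{i,t}-\dd_{j,t}}^2$ between good workers $i,j\in\cG$, then combine it with Lemma~\ref{lem:mvr-error} via the robust-aggregator inequality $\E\norm{\dd_t - \bar\dd_t}^2 \le c\delta\,\rho_t^2$ supplied by Definition~\ref{asm:agg-ass}, so that the combined Lyapunov function $V_t := \E\norm{\bar\ee_t}^2 + c\delta\,\rho_t^2$ contracts by a factor $(1-\alpha/4)$.

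First I would subtract the \ref{eqn:mvr-worker} updates of two good workers to get
\[
  \dd_{t,i}-\dd_{t,j} = (1-\alpha)(\dd_{t-1,i}-\dd_{t-1,j}) + (C_i - C_j),
\]
where $C_i := \gg_{t,i}(\xx_{t-1}) - (1-\alpha)\gg_{t,i}(\xx_{t-2})$. Conditional on the history $\cF_{t-1}$, the first term is deterministic while $C_i-C_j$ is zero-mean with $C_i,C_j$ independent, so a Pythagorean split gives $\rho_t^2 = (1-\alpha)^2\rho_{t-1}^2 + \E\norm{C_i-C_j}^2$. Writing $C_i - \E[C_i\mid\cF_{t-1}] = \alpha(\gg_{t,i}(\xx_{t-1})-\nabla f(\xx_{t-1})) + (1-\alpha)(\gg_{t,i}(\xx_{t-1}) - \gg_{t,i}(\xx_{t-2}) - \nabla f(\xx_{t-1}) + \nabla f(\xx_{t-2}))$ and applying Young's inequality together with the squared-Lipschitz assumption on $\gg_{t,i}$ yields $\E\norm{C_i-C_j}^2 \le 4\alpha^2\sigma^2 + 4L^2\eta^2\,\E\norm{\dd_{t-1}}^2$, so
\[
  \rho_t^2 \;\le\; (1-\alpha)\rho_{t-1}^2 + 4\alpha^2\sigma^2 + 4L^2\eta^2\,\E\norm{\dd_{t-1}}^2.
\]

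Next I would bound $\E\norm{\dd_{t-1}}^2 \le 3\E\norm{\bar\ee_{t-1}}^2 + 3\E\norm{\nabla f(\xx_{t-2})}^2 + 3\E\norm{\dd_{t-1}-\bar\dd_{t-1}}^2$ via Young, and replace the last summand using Definition~\ref{asm:agg-ass}: $\E\norm{\dd_{t-1}-\bar\dd_{t-1}}^2 \le c\delta\,\rho_{t-1}^2$. Multiplying the resulting recursion by $c\delta$ and adding it to Lemma~\ref{lem:mvr-error} (also with its $\E\norm{\dd_{t-1}-\bar\dd_{t-1}}^2$ term replaced by $c\delta\,\rho_{t-1}^2$) gives
\begin{align*}
  V_t \le \bigl(1-\tfrac{\alpha}{2} + 12 c\delta L^2\eta^2\bigr)\E\norm{\bar\ee_{t-1}}^2 & + c\delta\bigl(1-\alpha + (4 + 12 c\delta) L^2\eta^2\bigr)\rho_{t-1}^2 \\
  &+ (8 + 12 c\delta)L^2\eta^2\,\E\norm{\nabla f(\xx_{t-2})}^2 + 2\alpha^2\sigma^2/n + 4\alpha^2 c\delta\sigma^2.
\end{align*}
The momentum condition $\alpha \ge 192 L^2\eta^2(c\delta+1)$ forces each $L^2\eta^2$ cross-term to be bounded by $\alpha/16$ times the corresponding coefficient, so the $\E\norm{\bar\ee_{t-1}}^2$ and $\rho_{t-1}^2$ coefficients collapse to at most $(1-\alpha/4)$ and the $\E\norm{\nabla f(\xx_{t-2})}^2$ coefficient to at most $\alpha/16$. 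Combining the two noise terms as $4\alpha^2(c\delta + 1/n)\sigma^2$ delivers the claimed recursion.

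For $t=1$, $\alpha_1=1$ and $\dd_0=0$ give $\dd_{1,i}=\gg_{1,i}(\xx_0)$, so $\rho_1^2 \le 2\sigma^2$ and $\E\norm{\bar\ee_1}^2 \le \sigma^2/\abs{\cG} \le 2\sigma^2/n$, from which the base case $V_1 \le 2\sigma^2(c\delta + 1/n)$ is immediate (I would treat the stated ``$\dd_{i,2}-\dd_{j,2}$'' in the lemma as a typo for $\dd_{i,1}-\dd_{j,1}$). The main obstacle will be the bookkeeping: the pairwise recursion creates a fresh $\E\norm{\dd_{t-1}}^2$ term that pulls in both $\rho_{t-1}^2$ (via the robust aggregator) and $\E\norm{\bar\ee_{t-1}}^2$, so one has to simultaneously verify that the same momentum lower bound absorbs the cross-contamination into the $V_t$ contraction and also leaves a sufficiently small residual $\alpha/16$ in front of $\E\norm{\nabla f(\xx_{t-2})}^2$---this is where the explicit constant $192$ in the hypothesis comes from.
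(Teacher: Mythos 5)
Your proposal is correct and follows essentially the same route as the paper's proof: the same recursion for the pairwise distance $\E\norm{\dd_{i,t}-\dd_{j,t}}^2$, the same three-way Young split of $\E\norm{\dd_{t-1}}^2$ with Definition~\ref{asm:agg-ass} converting $\E\norm{\dd_{t-1}-\bar\dd_{t-1}}^2$ into $c\delta$ times the pairwise term, the same combination with Lemma~\ref{lem:mvr-error}, and the same absorption of the $L^2\eta^2$ cross-terms via $\alpha \geq 192L^2\eta^2(c\delta+1)$, with matching constants throughout. Your reading of the $t=1$ statement as a typo for $\dd_{i,1}-\dd_{j,1}$ is also the intended one.
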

\begin{proof}
  Expanding the definition of the worker momentum for any two good workers $i, j \in \cG$ for $t\geq 2$,
  \begin{align*}
    \E\norm{\dd_{i, t} - \dd_{j, t}}^2 & =
    \E\| \alpha (\gg_i(\xx_{t-1}) - \gg_j(\xx_{t-1})) +
    \\ &\hspace*{2cm} (1 - \alpha)(\dd_{i, t-1} - \dd_{j, t-1}) +
    \\&\hspace*{2cm} (1-\alpha)(\gg_{t, i}(\xx_{t-1}) - \gg_{t, j}(\xx_{t-1}) - \gg_{t, i}(\xx_{t-2}) + \gg_{t, j}(\xx_{t-2})) \|^2 \\
                                       & \leq \E\norm{(1 - \alpha)(\dd_{i, t-1} - \dd_{j, t-1})}^2 + 4\alpha^2 \sigma^2 + 4L^2(1-\alpha)^2 \E\norm{\xx_{t-1} - \xx_{t-2}}^2 \\
                                       & \leq (1 - \alpha)\E\norm{\dd_{i, t-1} - \dd_{j, t-1}}^2 + 4\alpha^2 \sigma^2 + 4L^2\eta^2\E\norm{\dd_{t-1}}^2                      \\
                                       & \leq (1 - \alpha)\E\norm{\dd_{i, t-1} - \dd_{j, t-1}}^2 + 4\alpha^2 \sigma^2 + 12L^2\eta^2\E\norm{\dd_{t-1} - \bar\dd_{t-1}}^2
    \\ &\hspace*{2cm}+ 12L^2\eta^2\E\norm{\bar\ee_{t-1}}^2 + 12L^2\eta^2\E\norm{\nabla f(\xx_{t-2})}^2\\
                                       & \leq (1 - \alpha + 12c \delta L^2\eta^2)\E\norm{\dd_{i, t-1} - \dd_{j, t-1}}^2 + 4\alpha^2 \sigma^2
    \\ &\hspace*{2cm}+ 12L^2\eta^2\E\norm{\bar\ee_{t-1}}^2 + 12L^2\eta^2\E\norm{\nabla f(\xx_{t-2})}^2
    \,.
  \end{align*}

  Scale this by $c\delta$ and then add the inequality from Lemma~\ref{lem:mvr-error} to get
  \begin{align*}
    \E \norm{\bar\ee_{t}}^2 + c\delta\E\norm{\dd_{i, t} - \dd_{j, t}}^2 & \leq
    (1- \tfrac{\alpha}{2})\E \norm{\bar\ee_{t-1}}^2 + 8L^2\eta^2\norm{\nabla f(\xx_{t-2})}^2 + 2\alpha^2 \sigma^2/n + 4L^2\eta^2\E \norm{\dd_{t-1} - \bar\dd_{t-1}}^2                                                                                                         \\
                                                                        & \hspace*{1cm}+ (1 - \alpha + 12c \delta L^2\eta^2)c\delta\E\norm{\dd_{i, t-1} - \dd_{j, t-1}}^2 + 4c\delta\alpha^2 \sigma^2
    \\ &\hspace*{2cm}+ 12c\delta L^2\eta^2\E\norm{\bar\ee_{t-1}}^2 + 12c\delta L^2\eta^2\E\norm{\nabla f(\xx_{t-2})}^2\\
                                                                        & \leq
    (1- \tfrac{\alpha}{2})\E \norm{\bar\ee_{t-1}}^2 + 8L^2\eta^2\norm{\nabla f(\xx_{t-2})}^2 + 2\alpha^2 \sigma^2/n                                                                                                                                                           \\
                                                                        & \hspace*{1cm}+ (1 - \alpha + 12c \delta L^2\eta^2 + 4L^2\eta^2)c\delta\E\norm{\dd_{i, t-1} - \dd_{j, t-1}}^2 + 4c\delta\alpha^2 \sigma^2
    \\ &\hspace*{2cm}+ 12c\delta L^2\eta^2\E\norm{\bar\ee_{t-1}}^2 + 12c\delta L^2\eta^2\E\norm{\nabla f(\xx_{t-2})}^2\\
                                                                        & = (1- \tfrac{\alpha}{2} + 12c \delta L^2\eta^2)\E \norm{\bar\ee_{t-1}}^2 + (8L^2\eta^2 + 12c \delta L^2\eta^2)\norm{\nabla f(\xx_{t-2})}^2 + (4c\delta + 2/n)\alpha^2 \sigma^2                      \\
                                                                        & \hspace*{1cm}+ (1 - \alpha + 12c \delta L^2\eta^2 + 4L^2\eta^2)c\delta\E\norm{\dd_{i, t-1} - \dd_{j, t-1}}^2                                                                                        \\
                                                                        & \leq (1- \tfrac{\alpha}{4})\rbr*{\E \norm{\bar\ee_{t-1}}^2 + c\delta\E\norm{\dd_{i, t-1} - \dd_{j, t-1}}^2} + \tfrac{\alpha}{16}\norm{\nabla f(\xx_{t-2})}^2 + (4c\delta + 2/n)\alpha^2 \sigma^2\,.
  \end{align*}
  Here we used $\alpha \geq 192 L^2\eta^2(c \delta + 1)$, and
  Definition~\ref{asm:agg-ass} that $\E\norm{\dd_{t-1} - \bar\dd_{t-1}}^2 \leq c\delta \E\norm{\dd_{i, t-1} - \dd_{j, t-1}}^2$.
  % \begin{align*}
  %   \E\norm{\dd_{i, t} - \dd_{j, t}}^2  & \leq (1 - \alpha + 12L^2c \delta \eta^2)\E\norm{\dd_{i, t-1} - \dd_{j, t-1}}^2 + 4\alpha^2 \sigma^2 s
  %             \\ &\hspace*{2cm}+ 12L^2\eta^2\E\norm{\bar\ee_{t-1}}^2 + 12L^2\eta^2\E\norm{\nabla f(\xx_{t-2})}^2
  % \end{align*}

  % Recall that we use $\alpha_1 = 1$ and a fixed momentum $\alpha_t = \alpha$ the rest of the steps. Unrolling the recursion above yields
  % \begin{align*}
  %   \E\norm{\dd_{i, t} - \dd_{j, t}}^2 \leq \rbr*{\sum_{\ell=2}^t(1 - \alpha)^{t-\ell}} 2\alpha ^2\sigma^2 + (1 - \alpha)^{t-1}2\sigma^2 \leq 2\sigma^2(\alpha + (1 - \alpha)^{t-1})\,.
  % \end{align*}
  % The previous computation shows that all the good vectors given to the server are close to each other with $\rho^2 = 2\sigma^2(\alpha + (1 - \alpha)^{t-1})$. Hence, by Definition~\ref{asm:agg-ass} the output of the aggregation rule $\cA(\dd_{t,1},\, \dots,\, \dd_{t,n})$ satisfies the lemma statement.
\end{proof}

We are now ready to prove the convergence theorem.
\begin{theorem}[Byzantine robust MVR]\label{thm:mvr-convergence}
  Let us run the MVR algorithm combined with a robust aggregation rule \cA with step-size $\eta = \min\rbr*{\sqrt[3]{\frac{f(\xx_0) - f^\star}{T(1536L^2 \sigma^2(c\delta +1)(c\delta + 1/n))}}, \frac{1}{4L}}$ and momentum parameter $\alpha = 192 L^2 \eta^2(1+c\delta)$. Then,
  \[
    \frac{1}{T}\sum_{t=1}^T \E \norm{\nabla f(\xx_{t-1})}^2 \leq  120 \rbr*{\frac{L \sigma\sqrt{(c\delta + 1/n)(c\delta+1)}(f(\xx_0) - f^\star)}{T}}^{\frac{2}{3}} + \frac{16L(f(\xx_0) - f^\star) + 32\sigma^2(c\delta + 1/n)}{T}\,.
  \]
\end{theorem}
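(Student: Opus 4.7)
The plan is to mirror the Lyapunov-function argument used for Theorem~\ref{thm:byz-sgdm-convergence}, but with the sharper per-step bounds provided by the MVR machinery (Lemmas~\ref{lem:mvr-descent}, \ref{lem:mvr-error}, \ref{lem:mvr-agg-error}). The descent bound gives us
\[
 \E[f(\xx_t)] \leq \E[f(\xx_{t-1})] - \tfrac{\eta}{2}\E\norm{\nabla f(\xx_{t-1})}^2 + \eta\E\norm{\bar\ee_t}^2 + \eta\E\norm{\dd_t - \bar\dd_t}^2,
\]
and Definition~\ref{asm:agg-ass} lets us further bound $\E\norm{\dd_t-\bar\dd_t}^2 \leq c\delta\E\norm{\dd_{i,t}-\dd_{j,t}}^2$ for any $i,j\in\cG$. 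The aggregation-error lemma~\ref{lem:mvr-agg-error} then controls $\E\norm{\bar\ee_t}^2 + c\delta\E\norm{\dd_{i,t}-\dd_{j,t}}^2$ as a contraction with rate $(1-\alpha/4)$, plus drift terms $\tfrac{\alpha}{16}\norm{\nabla f(\xx_{t-2})}^2$ and $(4c\delta + 2/n)\alpha^2\sigma^2$.

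The key idea is to define a potential
\[
 \xi_t := \E[f(\xx_t) - f^\star] + \tfrac{K\eta}{\alpha}\rbr*{\E\norm{\bar\ee_t}^2 + c\delta\E\norm{\dd_{i,t}-\dd_{j,t}}^2} + \tfrac{\eta}{16}\E\norm{\nabla f(\xx_{t-1})}^2
\]
for a universal constant $K$ (e.g.\ $K=8$), so that scaling Lemma~\ref{lem:mvr-agg-error} by $K\eta/\alpha$ and adding it to the descent bound produces a telescoping inequality of the form
\[
 \xi_t \leq \xi_{t-1} - \tfrac{\eta}{4}\E\norm{\nabla f(\xx_{t-1})}^2 + \tfrac{K\eta\alpha}{1}(4c\delta + 2/n)\sigma^2.
\]
The choice $\alpha = 192 L^2\eta^2(1+c\delta)$ from the hypothesis is exactly what Lemma~\ref{lem:mvr-agg-error} requires, and it ensures the descent's $\eta\cdot c\delta\E\norm{\dd_{i,t}-\dd_{j,t}}^2$ piece is absorbed into the $(1-\alpha/4)$ contraction with room to spare. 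The drift term $\tfrac{\alpha}{16}\norm{\nabla f(\xx_{t-2})}^2$ in the aggregation recursion is exactly balanced against the shifted $\tfrac{\eta}{16}\E\norm{\nabla f(\xx_{t-1})}^2$ piece inside $\xi_t$ (so the gradient term also telescopes cleanly).

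Summing the recursion from $t=1$ to $T$ (with $t=1$ handled separately using $\dd_0=0$ and $\alpha_1=1$, giving the initial $\xi_0 \leq (f(\xx_0)-f^\star) + \cO(\eta\sigma^2(c\delta+1/n)/\alpha)$) and rearranging yields
\[
 \tfrac{1}{T}\sum_{t=1}^T \E\norm{\nabla f(\xx_{t-1})}^2 \lesssim \tfrac{f(\xx_0)-f^\star}{\eta T} + L^2\eta^2\sigma^2(c\delta+1/n)(c\delta+1) + \tfrac{\text{lower-order}}{T}.
\]
Optimizing $\eta \asymp \bigl((f(\xx_0)-f^\star)/(T L^2 \sigma^2(c\delta+1/n)(c\delta+1))\bigr)^{1/3}$, capped at $1/(4L)$, delivers the advertised $\bigl(L\sigma\sqrt{c\delta+1/n}/T\bigr)^{2/3}$ rate.

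The main obstacle is bookkeeping: the coefficient $K\eta/\alpha$ on the error terms must be large enough that the descent contribution $\eta(\E\norm{\bar\ee_t}^2 + c\delta \E\norm{\dd_{i,t}-\dd_{j,t}}^2)$ is dominated by the $\alpha/4$ contraction in the next-step's error piece, while simultaneously the drift $\tfrac{\alpha}{16}\norm{\nabla f(\xx_{t-2})}^2$ from Lemma~\ref{lem:mvr-agg-error} (after scaling by $K\eta/\alpha$ it becomes $\tfrac{K\eta}{16}\norm{\nabla f(\xx_{t-2})}^2$) does not exceed the $\tfrac{\eta}{2}\norm{\nabla f}^2$ progress we extract from descent. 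The condition $\alpha \geq 192 L^2\eta^2(1+c\delta)$ is what makes this juggling act feasible; everything else is essentially the same telescoping-and-tuning strategy as in the proof of Theorem~\ref{thm:byz-sgdm-convergence}, with $\alpha \asymp L^2\eta^2$ in place of $\alpha\asymp L\eta$ so that the optimal $\eta$ scales like $T^{-1/3}$ rather than $T^{-1/2}$.
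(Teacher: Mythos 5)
Your proposal follows essentially the same route as the paper: scale the aggregation-error recursion (Lemma~\ref{lem:mvr-agg-error}) by $\cO(\eta/\alpha)$, add it to the descent bound (Lemma~\ref{lem:mvr-descent}), telescope the resulting potential, and tune $\eta\asymp T^{-1/3}$ with $\alpha\asymp L^2\eta^2(1+c\delta)$. The only discrepancy is bookkeeping: with your choices $K=8$ and gradient weight $\eta/16$ the drift $\tfrac{K\eta}{16}\norm{\nabla f(\xx_{t-2})}^2=\tfrac{\eta}{2}\norm{\nabla f(\xx_{t-2})}^2$ swallows all of the descent progress, whereas the paper's choice (scaling by $\tfrac{4\eta}{\alpha}$, carrying $\tfrac{\eta}{4}\E\norm{\nabla f(\xx_{t-1})}^2$ in the potential) makes the gradient terms telescope while leaving $\tfrac{\eta}{4}$ of net progress per step.
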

\begin{proof}
  Scaling Lemma~\ref{lem:mvr-agg-error} by $\frac{4\eta}{\alpha}$ and adding it to Lemma~\ref{lem:mvr-descent} we have for any $t \geq 2$
  \begin{align*}
    (\E~f(\xx_t) - f^\star) + \frac{4\eta}{\alpha} \rbr*{\E \norm{\bar\ee_{t}}^2 + c\delta\E\norm{\dd_{i, t} - \dd_{j, t}}^2} & \leq (\E~f(\xx_{t-1}) - f^\star) + \frac{4\eta}{\alpha} \rbr*{\E \norm{\bar\ee_{t-1}}^2 + c\delta\E\norm{\dd_{i, t-1} - \dd_{j, t-1}}^2}     \\
                                                                                                                              & \hspace{1cm} -\eta\rbr*{\E \norm{\bar\ee_{t-1}}^2 + c\delta\E\norm{\dd_{i, t-1} - \dd_{j, t-1}}^2}                                           \\
                                                                                                                              & \hspace*{1cm} -\frac{\eta}{2}\E\norm{\nabla f(\xx_{t-1})}^2 + \eta\rbr*{\E \norm{\bar\ee_{t}}^2 + c\delta\E\norm{\dd_{t, i} - \dd_{t, j}}^2} \\
                                                                                                                              & \hspace*{1cm} + \tfrac{\eta}{4}\E\norm{\nabla f(\xx_{t-2})}^2 + (c\delta + 1/n)16\eta\alpha \sigma^2\,.
  \end{align*}
  Define the constant
  \[
    \xi_t := (\E~f(\xx_t) - f^\star) + \rbr*{\frac{4\eta}{\alpha} -\eta} \rbr*{\E \norm{\bar\ee_{t}}^2 + c\delta\E\norm{\dd_{i, t} - \dd_{j, t}}^2} + \frac{\eta}{4}\E\norm{\nabla f(\xx_{t-1})}^2\,.
  \]
  Then the previously stated inequality can be rearranged as
  \[
    \frac{\eta}{4}\E\norm{\nabla f(\xx_{t-1})}^2  \leq  \xi_{t-1} - \xi_t + (c\delta + 1/n)16\eta\alpha \sigma^2\,.
  \]
  Also note that $\xi_t \geq 0$ for any $t$ and also for $t=1$,
  \begin{align*}
    \xi_1 & =  \E~f(\xx_{1}) - f^\star + \rbr*{\frac{4\eta}{\alpha} -\eta} \rbr*{\E \norm{\bar\ee_{t}}^2 + c\delta\E\norm{\dd_{i, t} - \dd_{j, t}}^2} + \frac{\eta}{4}\E\norm{\nabla f(\xx_{0})}^2 \\
          & \leq f(\xx_0) - f^\star - \frac{\eta}{4} \E \norm{\nabla f(\xx_{0})}^2 + 8\eta\sigma^2(c\delta  + 1/n)\,.
  \end{align*}
  Note that here we assumed a batch size of $T$ in the first step to simplify computations. This does not change the asymptotic rate (multiplies it by 2), similar to \citep{tran2020hybrid}. This is easy to work around by using changing step-sizes/momentum values as shown by \citep{cutkosky2019momentum}. Now summing over $t$ and again rearranging gives
  \begin{align*}
    \frac{1}{\sum_{t=1}^\ell \eta}\sum_{t=1}^\ell \eta \E \norm{\nabla f(\xx_{t-1})}^2 \leq \frac{4(f(\xx_0) - f^\star)}{\sum_{t=1}^\ell \eta} + \frac{1}{\sum_{t=1}^\ell \eta} \sum_{t=1}^\ell 32(c\delta  + 1/n) \eta \alpha \sigma^2\,.
  \end{align*}
  For simplicity, let us use a constant $\eta = \min\rbr*{\sqrt[3]{\frac{f(\xx_0) - f^\star}{T(1536L^2 \sigma^2(c\delta +1)^2)}}, \frac{1}{4L}}$ for $t \geq 1$ and momentum parameter $\alpha_1 = 1$ and $\alpha = 192L^2 \eta^2(c\delta +1)$ for $t\geq 2$. This simplifies the above inequality to
  \[
    \frac{1}{T}\sum_{t=1}^T \E \norm{\nabla f(\xx_{t-1})}^2 \leq \frac{4(f(\xx_0) - f^\star)}{\eta T} + 6144 L^2 \eta^2(c\delta +1)(c\delta +1/n) \sigma^2 + \frac{32\sigma^2(c\delta + 1/n)}{T}\,.
  \]
  Substituting the appropriate $\eta$ yields the desired rate.
\end{proof}

\section{Additional Experiments} \label{sec:additional_exps}
\subsection{Experiment setups} \label{ssec:setups}
\subsubsection{General setup}
The default experiment setup is listed in \Cref{tab:setup:default}. The default hyperparameters of the aggregators are summarized as follows

  {
    \hfill
    \begin{tabular}{ll}
      \toprule
      % Parameter & Default value \\
      % \midrule
      Aggregators & Hyperparameters \\
      \midrule
      Krum        & N/A             \\
      CM          & N/A             \\
      RFA         & $T=3$           \\
      TM          & $b=\delta$      \\
      CC          & $\tau=100$      \\
      \bottomrule
    \end{tabular}
    % \end{table}
    \hfill
  }

\paragraph{About \Cref{fig:exp3}.} We have the following setup
\begin{itemize}[nosep]
  \item For all aggregators except mean, there are $n=25$ workers and $n\delta=11$ of them are Byzantine.
  \item For aggregator mean, there are $n=14$ workers and 0 Byzantine workers.
  \item The IPM attack has strength of $\epsilon=0.1$.
  \item The ALIE Attack has a hyperparameter $z$ which is computed according to  \cite{baruch2019little}
        $$z=\max_{z} \left(\phi(z) < \frac{n-n\delta-s}{n-n\delta} \right)$$
        where $s=\lfloor \frac{n}{2}+1\rfloor-n\delta$ and $\phi$ is the cumulative standard normal function. In our setup, the $z\approx 1.06$.

\end{itemize}

\begin{table}[h]
  \caption{Default experimental settings for CIFAR-10 and MNIST.}
  \centering
  \small
  \label{tab:setup:default}%
  \begin{tabular}{lll}
    \toprule
    Dataset                   & CIFAR-10                     & MNIST                           \\
    Architecture              & ResNet-20  \cite{he2016deep} & CONV-CONV-DROPOUT-FC-DROPOUT-FC \\
    Training objective        & Cross entropy loss           & Negative log likelihood loss    \\
    Evaluation objective      & Top-1 accuracy               & Top-1 accuracy                  \\
    \midrule
    Batch size per worker     & $32$                         & 1                               \\
    Momentum       $\beta$    & 0 or 0.9 or 0.99             & 0                               \\
    Learning rate             & 0.1                          & $\frac{0.1}{256}$               \\
    LR decay                  & 0.1 at epoch 75              & No                              \\
    LR warmup                 & No                           & No                              \\
    \# Epochs / \# Iterations & 100 Epochs                   & 800 Iterations                  \\
    Weight decay              & No                           & No                              \\
    \midrule
    Repetitions               & 2, with varying seeds        & 2, with varying seeds           \\
    % Reported metric      & Mean test accuracy over the last 150 iterations \\
    \bottomrule
  \end{tabular}
\end{table}

\subsubsection{Constructing datasets}

\paragraph{Long-tailness.} The MNIST dataset has 10 classes each with similar amount of samples. The long-tailness  is achieved by sampling class with exponentially decreasing portions $\gamma\in(0, 1]$. That is, for class $i=1,\ldots,10$, we only randomly sample $\gamma^i$ portion of all samples in class $i$. Note that the same procedure has to be applied to the test dataset.

\paragraph{About dataset on Byzantine workers.} The training set is divided by the number of good workers. So the good workers has to full information of training dataset. The Byzantine worker has access to the whole training dataset.

% \subsubsection{Setups for each experiment}
% In \Cref{tab:setups:cifar:all}, we list the hyperparameters for the experiments.
% \begin{table}[!h]
%   \caption{Setups for each experiment.}
%   \centering
%   \label{tab:setups:cifar:all}%
%   \begin{tabular}{llll}
%     \toprule
%                     & n  & $n\delta$ & momentum   \\\midrule
%     \Cref{fig:exp3} & 25 & 11        & 0/0.9/0.99 \\
%     \bottomrule
%   \end{tabular}
%   \hfill
% \end{table}
\subsubsection{Running environment}
We summarize the running environment of this paper as in \Cref{tab:runtime}.
\begin{table}[!h]
  \caption{Runtime hardwares and softwares.}
  % \scriptsize%
  \centering
  \label{tab:runtime}%
  \begin{tabular}{p{0.22\linewidth} p{0.7\linewidth}}
    \toprule
    % Parameter & Default value \\
    % \midrule
    CPU                       &                                             \\
    \hspace{1em} Model name   & Intel (R) Xeon (R) Gold 6132 CPU @ 2.60 GHz \\
    \hspace{1em} \# CPU(s)    & 56                                          \\
    \hspace{1em} NUMA node(s) & 2                                           \\ \midrule
    GPU                       &                                             \\
    \hspace{1em} Product Name & Tesla V100-SXM2-32GB                        \\
    \hspace{1em} CUDA Version & 11.0                                        \\ \midrule
    PyTorch                   &                                             \\
    \hspace{1em} Version      & 1.7.1                                       \\
    \bottomrule
  \end{tabular}
\end{table}

\subsection{Exploring local steps between aggregations}
\begin{figure}[H]
  \vspace{-3mm}
  \centering
  \includegraphics[
    width=0.5\linewidth
  ]{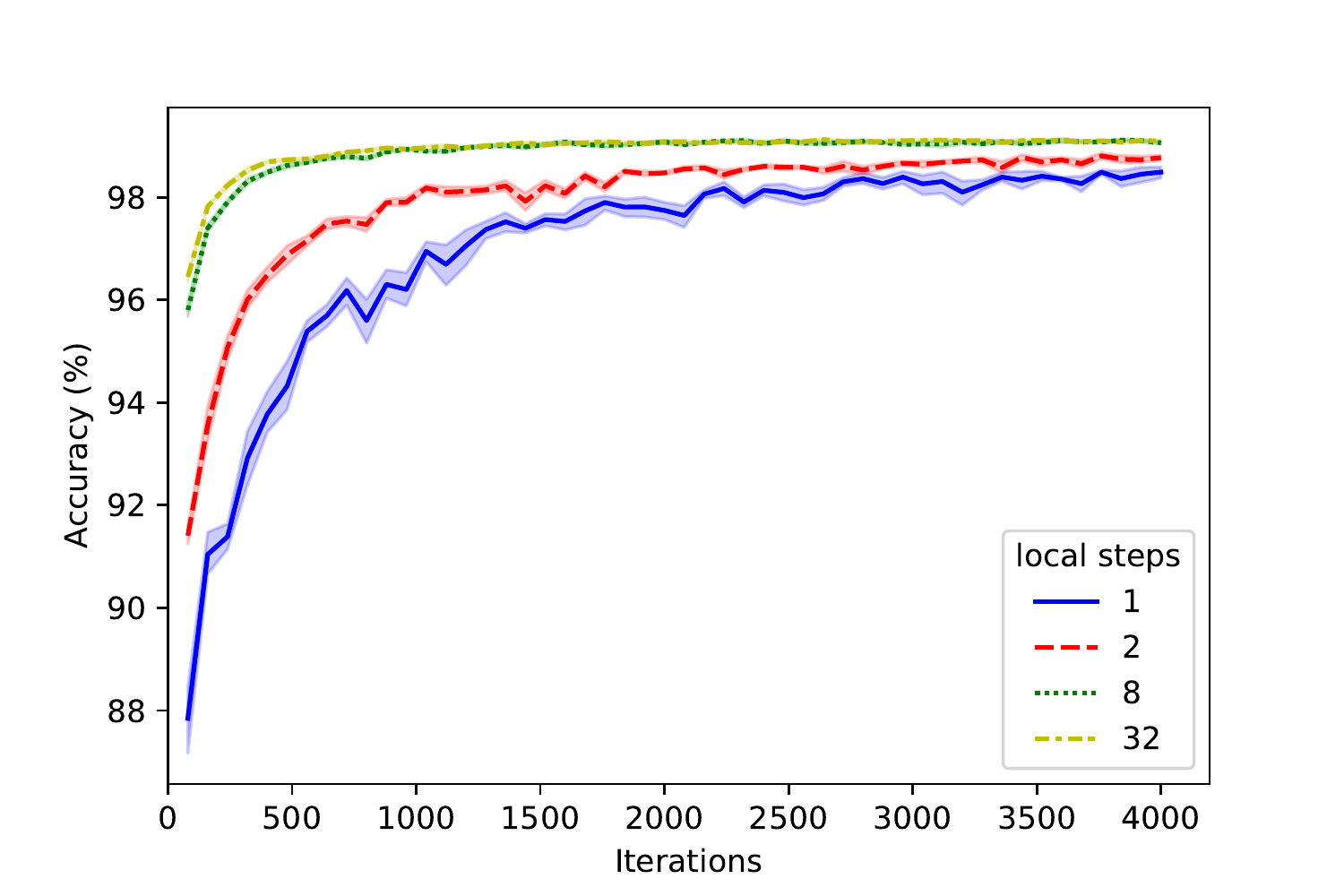}
  \caption{\ref{eqn:iter-clip} with 1, 2, 8, 32, local steps for MNIST dataset.}
  \label{fig:exp4}
\end{figure}

In this experiment, we combine \ref{eqn:iter-clip} with local SGD and bench marked on MNIST without attacker. The results in Fig.~\ref{fig:exp4} shows that using higher local steps improves the accuracy and convergence rate. It supports that \ref{eqn:iter-clip} is compatible with localSGD.

% \subsection{Training curves}
% \begin{figure}[H]
%   \vspace{-3mm}
%   \centering
%   \includegraphics[
%     width=\linewidth
%     % ]{figures/s1w12f2.pdf}
%   ]{}
%   \caption{Accuracies with respect to varying $\tau$ and the number of clipping iterations.
%   }
%   \label{fig:exp2_all}
% \end{figure}
% \clearpage
\subsection{Comparison with \citep{allenzhu2021byzantineresilient}
} \label{subsec:app-allenzhu-comp}
\begin{figure}[h]
  \vspace{-3mm}
  \centering
  \includegraphics[
    width=0.5\linewidth
  ]{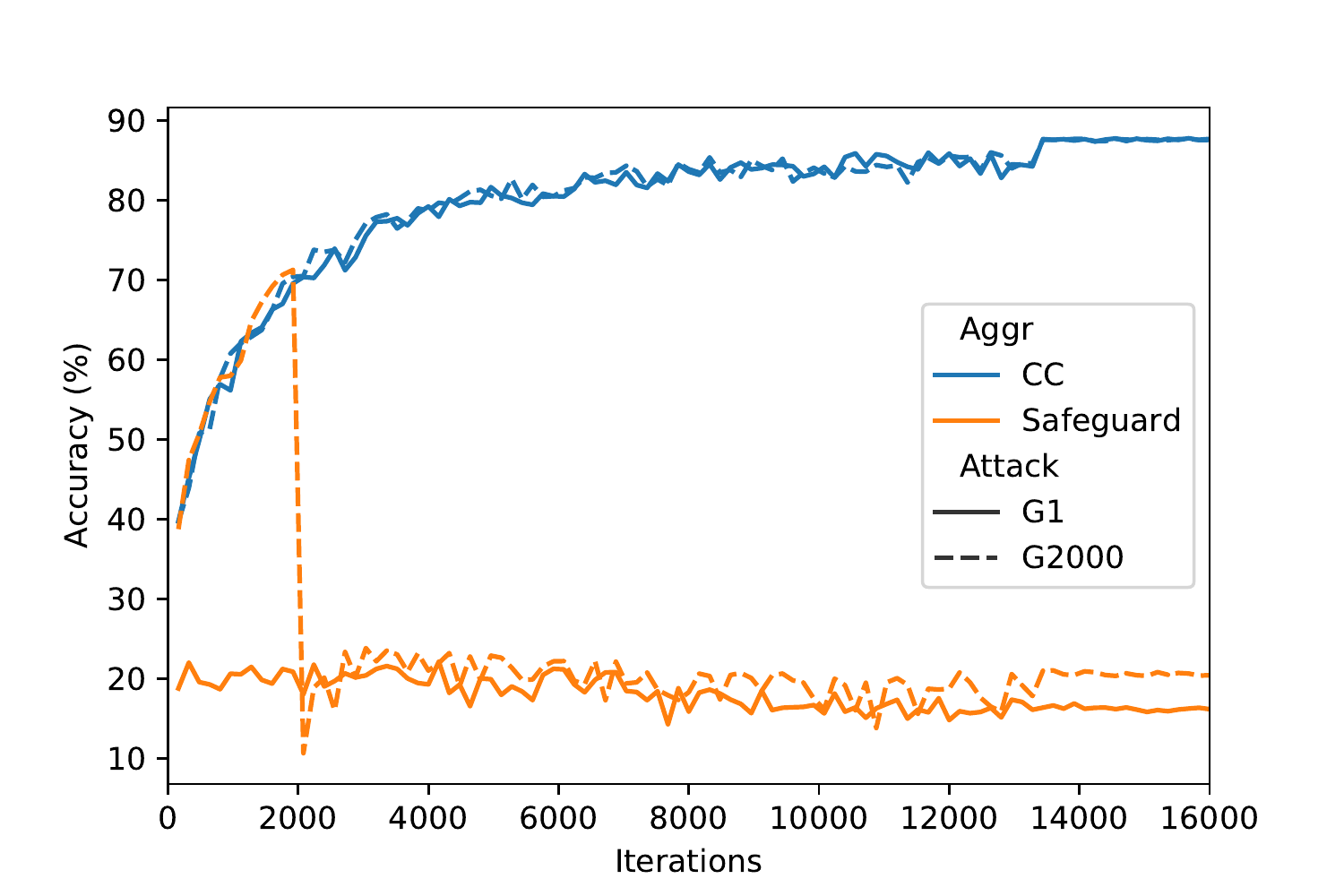}
  \caption{Comparing \ref{eqn:iter-clip} ($\tau=100$) with Safeguard \citep{allenzhu2021byzantineresilient} ($T_0=1$,$T_1=6$, $\mathfrak{T}_0=20$, $\mathfrak{T}_1=50$). The Byzantine workers send to the server vectors from a Gaussian distribution with standard deviation of $10^8$. The ``$G1$'' attack inject attack at the 1st iteration while the ``G2000'' attack inject attack at the 2000th iteration. There are 10 nodes in total and 4 of them are Byzantine. The underlying dataset is Cifar10. We use batch size 32 and learning rate 0.1.}
  \label{fig:gaussian}
\end{figure}

The recent independent work \textsc{Safeguard} \citep{allenzhu2021byzantineresilient} also uses historical information to detect Byzantine workers. However, as we discussed earlier, they assume that the noise in stochastic gradients is bounded almost surely instead of the more standard assumption that only the variance is bounded. Theoretically, such strong assumptions are unlikely to hold~\citep{zhang2019adam} and even Gaussian noise is excluded. Further, the lower-bounds of \citep{arjevani2019lower} no longer apply, and thus their algorithm may be sub-optimal. Practically, their algorithm removes suspected workers either permanently (a decision of high risk), or resets the list of suspects at each window boundary (which is sensitive to the choice of hyperparameters). Having said that, \cite{allenzhu2021byzantineresilient} prove convergence to a local minimum instead of to a saddle point as we do here. In this secion, we conduct further empirical comparison of Safeguard and Centered clip \ref{eqn:iter-clip}.

First, note that Algorithm 1 in \citep{allenzhu2021byzantineresilient} is vulnerable to simple attacks, e.g. sending an arbitrary vector of very large magnitude, while \ref{eqn:iter-clip} is not. This is because Safeguard uses information from the previous step to filter in the current step. This is necessary in order to make the algorithm amenable to analysis. This means that even if a Byzantine worker sends a very large bad update, the algorithm will apply it once and filter out the worker only from the next round onward. Thus, all Byzantine workers can ensure that their update is incorporated at least once. While theoretically this might not be problematic since the influence of a single update is limited, in practice this means that the Byzantine workers can push the training process to encounter \texttt{NaN}s, ensuring no chance of recovery.

% For example, the attacker behaves normally in the first $t-1$ iterations so that they are still in $\textbf{good}_t$ at the beginning of iteration $t$, then they send large vectors to the server and get excluded from $\textbf{good}_{t+1}$. However, \textsc{Safeguard} still uses gradients from $\textbf{good}_t$ for aggregation \citep[Line 12 of Algorithm 1]{allenzhu2021byzantineresilient} which incorporates very large Byzantine vectors and thus drastically lower the accuracy of the model.
To demonstrate the effect, we apply the Gaussian attack to \textsc{Safeguard} and \ref{eqn:iter-clip} at $t=1$ (G1) and $t=2000$ (G2000). The Gaussian attacker sends to the server vectors of Gaussian distribution of standard deviation $10^8$. Since the workers behave correctly until $t-1$, they all belong to $\textbf{good}_t$ and their updates are incorporated. While the Byzantine worker is removed from $\textbf{good}_{t+1}$, the attack already succeeded and there is no chance of recovery.
We show the experimental results in Figure~\ref{fig:gaussian}. In contrast, \ref{eqn:iter-clip} (even without momentum) easily defends against such attacks.

% As we have expected, the \textsc{Safeguard} fails as soon as the attack is launched while \ref{eqn:iter-clip} successfully defend such attack.

Secondly, \textsc{Safeguard} requires tuning additional parameters (e.g. $\mathfrak{T}_0$, $\mathfrak{T}_1$) for each kind of attack while \ref{eqn:iter-clip} does not. For example, \textsc{Safeguard} uses $\mathfrak{T}_0=1$, $\mathfrak{T}_1=2$ for Bit-Flipping attack \citep[Appendix C.2.1]{allenzhu2021byzantineresilient} and $\mathfrak{T}_0=2$, $\mathfrak{T}_1=7$ for Label-Flipping attack \citep[Appendix C.2.3]{allenzhu2021byzantineresilient}.
However, by the definition of Byzantine attack, the attacker is allowed to adaptively change attacks \emph{after} tuning. This makes it crucial to ensure that any Byzantine robust algorithm works without additional tuning. In contrast, \ref{eqn:iter-clip} uses $\tau=100$ and $l=1$ for all experiments in the paper unless otherwise clarified.
%  Thus if the attacker uses Label-Flipping attack during parameter tuning and Bit-Flipping in the training, then it is very likely that \textsc{Safeguard} cannot detect the Byzantine workers.

\clearpage
\end{document}